\documentclass[journal,10pt,twocolumn]{IEEEtran}
\usepackage{cite}
\usepackage{amsmath, amssymb, amsfonts, amsthm}
\usepackage{algorithm}
\usepackage[noend]{algorithmic}
\usepackage{graphicx}
\usepackage{textcomp}
\usepackage{xcolor}
\usepackage{color}
\usepackage{bm}
\usepackage{booktabs}
\usepackage{autobreak}
\usepackage{multirow}
\usepackage{booktabs}
\usepackage{authblk}
\usepackage{mathrsfs}
\usepackage{array}
\usepackage{makecell} 
\usepackage{float}
\usepackage{soul}
\usepackage{stmaryrd}
\usepackage{dsfont}
\usepackage{bbm}
\usepackage[hidelinks]{hyperref}
\usepackage{cleveref}
\usepackage{pifont}
\usepackage{enumitem}

\usepackage{placeins}

\usepackage{atbegshi}

\newif\ifshowappendix
\showappendixtrue

\newtheorem{theorem}{Theorem}
\newtheorem{proposition}{Proposition}

\newtheorem{corollary}{Corollary}

\definecolor{blue}{rgb}{0,0,0}
 
\begin{document}

\bstctlcite{BSTcontrol}

\title{Beyond Fixed False Discovery Rates: Post-Hoc Conformal Selection with E-Variables}

\author{Meiyi Zhu and Osvaldo Simeone \IEEEmembership{Fellow, IEEE}

\vspace{-0.5cm}

\thanks{The work of M. Zhu and O. Simeone was supported by an Open Fellowship of the EPSRC (EP/W024101/1). The work of O. Simeone was also supported by EPSRC (EP/X011852/1) and ERC (No. 101198347)

Meiyi Zhu is with the Department of Engineering, King's College London, WC2R 2LS, London, U.K. (e-mail: meiyi.1.zhu@kcl.ac.uk).

Osvaldo Simeone is with the Institute for Intelligent Networked Systems, Northeastern University London, E1 8PH London, U.K. (e-mail: o.simeone@northeastern.edu).}
}

\maketitle

\begin{abstract}
Conformal selection (CS) uses calibration data to identify test inputs whose unobserved outcomes are likely to satisfy a pre-specified minimal quality requirement, while controlling the false discovery rate (FDR). Existing methods fix the target FDR level before observing data, which prevents the user from adapting the balance between number of selected test inputs and FDR to downstream needs and constraints based on the available data. For example, in genomics or neuroimaging, researchers often inspect the distribution of test statistics, and decide how aggressively to pursue candidates based on observed evidence strength and available follow-up resources. To address this limitation, we introduce {post-hoc CS} (PH-CS), which generates a path of candidate selection sets, each paired with a data-driven false discovery proportion (FDP) estimate. PH-CS lets the user select any operating point on this path by maximizing a user-specified utility, arbitrarily balancing selection size and FDR. Building on conformal e-variables and the e-Benjamini-Hochberg (e-BH) procedure, PH-CS is proved to provide a finite-sample post-hoc reliability guarantee whereby the ratio between estimated FDP level and true FDP is, on average, upper bounded by $1$, so that the average estimated FDP is, to first order, a valid upper bound on the true FDR. PH-CS is extended to control quality defined in terms of a general risk. Experiments on synthetic and real-world datasets demonstrate that, unlike CS, PH-CS can consistently satisfy user-imposed utility constraints while producing reliable FDP estimates and maintaining competitive FDR control.
\end{abstract}

% \vspace{-4mm}
\begin{IEEEkeywords}
    Conformal selection, false discovery rate, post-hoc, e-variables, multiple hypothesis testing
\end{IEEEkeywords}

\begin{figure}[t]
    \centering
    {\includegraphics[width = 0.48\textwidth]{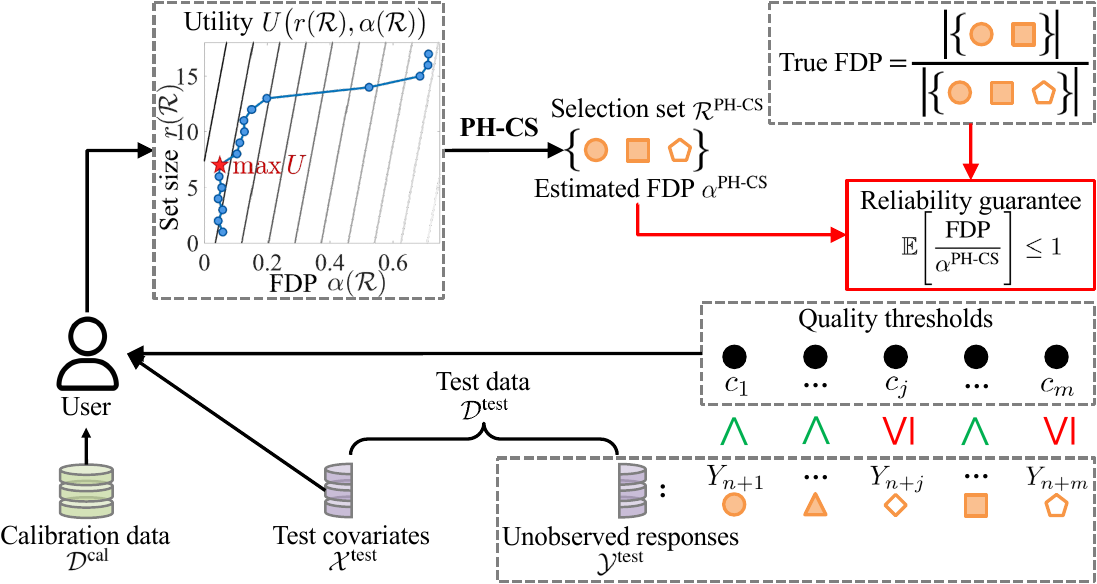}}
    \vspace{-1.2mm} % 0.68
    \caption{{\color{blue}Illustration of the PH-CS problem and framework.} A user has access to labeled calibration data $\mathcal{D}^{\textrm{cal}}$ and unlabeled test covariates $\mathcal{X}^{\textrm{test}}$, while the test responses $\mathcal{Y}^{\textrm{test}}$ remain unobserved. Each test input $X_{n+j}$ is associated with a threshold $c_j$ representing the minimum quality requirement \eqref{eq_perform_req}. Based on calibration and test data, PH-CS produces a selection set $\mathcal{R}^{\textrm{PH-CS}}$, together with an estimated FDP $\alpha^{\textrm{PH-CS}}$, by maximizing an arbitrary utility \eqref{eq_PHCS_choice} that balances selection set size and reliability (i.e., FDP). Contour lines of the utility function $U({\color{blue}r(\mathcal{R})}, \alpha(\mathcal{R}))$ are shown  with darker lines indicating higher utility. PH-CS optimizes the utility over a candidate path, shown via blue dots, which is determined post-hoc based on calibration and test data. The red star marks the selected operating point. The reliability guarantee \eqref{eq_intro_relia} ensures that the estimate $\alpha^{\textrm{PH-CS}}$ is, on average, an approximate upper bound on the true FDR (see \eqref{eq_intro_relia} and \eqref{eq_heur_FDR}).}
    \vspace{-5mm}
    \label{fig_sys_mod}
\end{figure}

% \vspace{-0.1cm}
\section{Introduction}\label{sec_intro}

\subsection{Motivation}\label{sec_intro_motiv}
\emph{Conformal selection} (CS) \cite{jin2023selection} provides a principled, distribution-free mechanism for choosing a subset of test inputs whose unobserved outcomes are likely to meet a given requirement. Examples of applications include early-stage drug discovery \cite{boldini2024machine, bai2025drug}, candidate gene selection in genomics \cite{korthauer2019practical}, and feature selection in machine learning \cite{dai2023false, stoica2022false}. CS is able to control the \emph{false discovery rate} (FDR), i.e., the fraction of incorrectly selected candidates, at a pre-specified level $\alpha_{\textrm{max}}\in[0,1]$. A key limitation of this framework is that the target FDR level $\alpha_{\textrm{max}}$ must be fixed \emph{before} observing the test and calibration data. In practice, however, the ``right'' operating point is often dictated by the quality of the candidates and by downstream constraints such as budget caps or resource availability, all of which only become clear once the calibration and test data have been collected. Being forced to commit to a single FDR level a priori can lead to selection sets that are either too conservative, missing promising candidates that could have been useful for downstream applications, or too liberal, exceeding acceptable constraints for downstream users, with no mechanism for readjustment.

This tension between a priori guarantees and post-hoc flexibility arises across a wide range of applied settings. For instance, in early-stage drug discovery, after observing enrichment patterns, it may be desirable to tighten or relax the selection threshold so as to fill a fixed number of slots for downstream validation, rather than being locked to a nominal FDR level chosen before the screen \cite{boldini2024machine, bai2025drug}. As another example, in genomics or neuroimaging, researchers may inspect the distribution of test statistics and decide how aggressively to pursue candidates based on observed evidence strength and available follow-up resources \cite{korthauer2019practical}. Finally, in feature selection for high-dimensional machine learning, users may wish to expand or contract the feature set after examining prediction accuracy or computational cost on a held-out sample \cite{dai2023false, stoica2022false}.

In each of these scenarios, as illustrated in Fig. \ref{fig_sys_mod}, the user faces a \emph{utility-driven} trade-off between the size of the selected set and the proportion of erroneous selections. Conventional CS cannot resolve this trade-off in a data-adaptive way, given that its FDR guarantee is tied to a single, pre-specified level.

To address this issue, this paper introduces \emph{post-hoc CS} (PH-CS). As sketched in Fig. \ref{fig_sys_mod}, PH-CS produces a path of candidate selection sets together with data-driven \emph{false discovery proportion} (FDP) estimates. The FDP is the fraction of incorrectly selected candidates, and its average is the FDR. Based on the candidate selection sets, PH-CS allows the user to select the operating point that maximizes an arbitrary non-negative utility of set size and estimated FDP. Importantly, the estimated FDP results in reliability guarantee that holds \emph{uniformly} over the path, so the choice of operating point can be made \emph{after} the data have been observed without violating statistical validity. In particular, the estimated FDP is, on average, an approximately valid upper bound on the FDR (see Sec. \ref{sec_intro_contrib} for a precise statement).

\subsection{Related Work} \label{sec_intro_related}

\subsubsection{Conformal Prediction and Conformal Selection}
Conformal prediction constructs distribution-free prediction sets with finite-sample coverage guarantees under exchangeability assumptions \cite{vovk2005algorithmic, angelopoulos2023conformal, xu2023conformal, park2023few, jensen2024ensemble}. Conformal prediction builds on \emph{conformal p-variables}, statistics that target the null hypothesis of exchangeability \cite{barber2025unifying}. Reference \cite{jin2023selection} recast selection of test data units as the multiple testing of random null hypotheses, leveraging conformal p-variables and the Benjamini-Hochberg (BH) procedure \cite{benjamini1995controlling, benjamini2001control} to control the FDR at a fixed level. Other applications of conformal p-variables include anomaly detection \cite{bates2023testing}.

Since the introduction of CS \cite{jin2023selection}, numerous extensions have been proposed. In terms of the selection criterion, the work \cite{jin2023sensitivity} extended CS to sensitivity analysis of individual treatment effects under covariate shift, reference \cite{bai2025multivariate} generalized CS to multivariate responses via a regional monotonicity condition, and the paper \cite{hao2025multicondition} handled conjunctive and disjunctive selection conditions. Targeting the statistical power, reference \cite{bai2024optimized} improved power through data-reuse strategies for score optimization, and the work \cite{qin2025revamping} achieved asymptotically optimal power via a likelihood-ratio-based rule inspired by the Neyman-Pearson paradigm. Addressing the decision protocol, CS has been adapted to online settings with irrevocable decisions \cite{liu2026online} or with real-time feedback \cite{lu2025feedback}, and to interactive settings that support adaptive model updates during screening \cite{gui2025acs}. Other extensions include promoting diversity in the selected set \cite{nair2025diversifying} and applications to compound screening in drug discovery \cite{bai2025drug}. In all these studies, however, the FDR level must be fixed a priori.

\subsubsection{E-variables and E-BH}
E-variables \cite{vovk2021evalues, grunwald2024safe, koning2023post} offer an alternative to p-variables for quantifying evidence against a null hypothesis, with large values indicating stronger evidence. Reference \cite{wang2022false} developed the e-Benjamini-Hochberg (e-BH) procedure, which controls the FDR using e-variables under arbitrary dependence among the test statistics, a property that the classical BH procedure does not enjoy in general. In the conformal setting, the work \cite{balinsky2024enhancing} introduced conformal e-variables based on a score-ratio statistic that replaces the rank-based conformal p-values introduced in \cite{vovk2005algorithmic}, and the paper \cite{bashari2023derandomized} combined conformal e-values with e-BH to achieve derandomized novelty detection with provable FDR control. Further developments have focused on improving the power of e-BH, including conditional calibration strategies \cite{lee2024boosting}, compound and weighted e-values that concentrate the testing budget on promising hypotheses \cite{ignatiadis2024asymptotic}, and randomization-based enhancements \cite{xu2023more}. Our work also adopts conformal e-variables and e-BH, but targets post-hoc selection rather than prediction or novelty detection.

\subsubsection{Post-hoc and Post-selection Inference}
Post-hoc inference in multiple testing aims to provide valid error guarantees for rejected sets chosen after observing the data. In the p-value framework, reference \cite{goeman2011multiple} established a foundational approach by bounding the number of false discoveries uniformly over all possible rejected sets. This framework was further developed by \cite{blanchard2020post} and \cite{hemerik2019permutation}, who provided tighter post-hoc bounds on the FDP via reference families and permutation-based methods, respectively. In the e-value framework, reference \cite{xu2024post} demonstrated the flexibility of e-values for post-selection inference through e-value-based confidence intervals. References \cite{gauthier2025backward, gauthier2025values} introduced backward conformal prediction, which fixes the prediction set size and adapts the coverage level accordingly, yielding a reliability condition that directly inspires the post-hoc guarantee pursued in this work. {\color{blue}A related direction bounds the FDP with high probability, simultaneously over all rejection thresholds \cite{gazin2024transductive, song2026everywhere}, which we adopt as one of our baselines.}

\subsubsection{Conformal Selection with E-variables}
Recent work has combined conformal e-variables with e-BH for selection and related tasks. Reference \cite{lee2025selection} applied this combination to selection from hierarchical data, constructing count-based e-variables from threshold exceedances after a data-dependent cutoff. More recently, the work \cite{bai2026conformal} extended CS to settings where the quality of each selection is measured by a continuous loss rather than a binary indicator, using risk-adjusted e-variables to control a generalized notion of FDR.
% In both works, however, the e-variable construction depends on the nominal level $\alpha_{\textrm{max}}$, so that changing the target $\alpha_{\textrm{max}}$ requires recomputing the e-variables. This prevents direct application to post-hoc selection, where the operating point is chosen after observing the data.
{\color{blue}In both works, however, the e-variable construction depends on a thresholding level that must be fixed before the data are seen, so that changing it requires recomputing the e-variables.} This prevents direct application to post-hoc selection, where the operating point is chosen after observing the data.

\subsection{Main Contributions} \label{sec_intro_contrib}

The main contributions of this paper are summarized as follows.
\begin{itemize}
    \item \textbf{Post-hoc conformal selection:} We propose PH-CS, a novel CS framework that generates a path of candidate selection sets with associated FDP estimates (see Fig. \ref{fig_sys_mod}). Among the candidate {\color{blue}selection} sets, PH-CS selects the operating point that maximizes a user-specified utility function balancing set size and reliability (Algorithm \ref{algo_PHCS}). Like \cite{lee2025selection} and \cite{bai2026conformal}, PH-CS replaces conformal p-variables and the BH procedure with conformal e-variables \cite{balinsky2024enhancing} and the e-BH procedure \cite{wang2022false}. However, unlike these prior works, whose e-variable constructions are tied to a specific nominal level, PH-CS exploits the level-uniform property of e-BH to ensure simultaneous validity across all nominal levels, enabling post-hoc selection of the operating point.

    \item \textbf{Finite-sample post-hoc reliability guarantee:} We prove that the FDP estimate $\alpha^{\textrm{PH-CS}}$ produced by PH-CS for any test batch satisfies, on average, the reliability condition
    \begin{align}\label{eq_intro_relia}
        \mathbb{E}\bigg[\frac{\textrm{FDP}}{\alpha^{\textrm{PH-CS}}}\bigg]\leq 1,
    \end{align}
    implying that the average of the FDP estimate $\alpha^{\textrm{PH-CS}}$, i.e., $\mathbb{E}[\alpha^{\textrm{PH-CS}}]$, is an approximate upper bound on the true FDR. In fact, this condition coincides, up to second-order terms, with the inequality $\mathbb{E}[\alpha^{\textrm{PH-CS}}]\geq \textrm{FDR}$ \cite{koning2023post, gauthier2025backward}. This guarantee holds for \emph{any} data-dependent choice of operating point along the e-BH solution path, under the only assumptions of exchangeability and
    score monotonicity.

    \item \textbf{Flexible utility designs:} PH-CS accommodates post-hoc selection based on an arbitrary utility function obtained from FDP and set size. This formulation supports, e.g., selection sets that control the size of the selected set, as well as additive trade-off designs, enabling the user to tailor the size-reliability balance to the application at hand.

    \item \textbf{Extensions to risk control and weighted selection:} We extend PH-CS to settings where the quality of each selection is measured by a continuous loss rather than a binary indicator \cite{bai2026conformal}, yielding \textit{post-hoc risk-controlled selection} (PH-RCS) with the same reliability guarantee. We further incorporate priority weights that allow the user to assign higher importance to more promising test inputs while preserving post-hoc validity.

    \item \textbf{Empirical validation:} Through experiments on three synthetic settings and three real-world datasets, we demonstrate that PH-CS consistently satisfies user-imposed size and utility constraints that conventional CS cannot enforce, while providing reliable FDP estimates and maintaining competitive FDR performance.
\end{itemize}

The remainder of the paper is organized as follows. Sec. \ref{sec_problem} formulates the post-hoc selection problem and the utility-based objective, and Sec. \ref{sec_conv_cs} reviews conventional CS. The proposed PH-CS framework and its theoretical guarantee are developed in Sec. \ref{sec_PHCS}, followed by extensions to risk control and weighted selection in Sec. \ref{sec_ph_rcs}. Experimental results are presented in Sec. \ref{sec_exp}, and Sec. \ref{sec_conclusion} concludes this paper. Proofs and additional experiments are deferred to the appendices.

\section{Problem Formulation}\label{sec_problem}
As illustrated in Fig. \ref{fig_sys_mod}, we study a selection problem in which a user leverages a labeled calibration sample to select a subset of unlabeled inputs with desirable properties from a batch of candidates \cite{jin2023selection, xu2024post, lee2025selection}. The goal of this selection is to balance selection size and reliability: selecting more inputs is generally preferable, but it can increase the fraction of selected inputs that fail to meet the pre-specified requirements. The rest of this section formalizes setting and requirements.

\vspace{-1mm}
\subsection{Calibration and Test Data}
The system has access to a labeled calibration sample $\mathcal{D}^{\textrm{cal}} = \{(X_i, Y_i)\}_{i = 1}^n$, collected from past operations and used to calibrate selection decisions. At a given epoch, the system observes a batch of $m$ unlabeled test covariates $\mathcal{X}^{\textrm{test}} = \{X_{n+j}\}_{j = 1}^m$, generated together with unobserved responses $\mathcal{Y}^{\textrm{test}} = \{Y_{n+j}\}_{j=1}^m$. We assume that the combined collection of calibration and test pairs $\mathcal{D}^{\textrm{cal}} \cup \mathcal{D}^{\textrm{test}}$, with $\mathcal{D}^{\textrm{test}} = \{(X_{n+j}, Y_{n+j})\}_{j = 1}^m$, is exchangeable. This is the case when all samples $(X,Y) \in \mathcal{D}^{\textrm{cal}} \cup \mathcal{D}^{\textrm{test}}$ are independent.

Larger values of the response variable correspond to more desirable outcomes. Accordingly, for each test input $j=1,\ldots,m$, the user specifies a threshold $c_j$ that represents a minimum target performance level. The goal is to identify test inputs $X_{n+j}$ whose (unobserved) outcome $Y_{n+j}$ strictly exceeds the minimum value $c_j$, i.e.,
\begin{align}\label{eq_perform_req}
    Y_{n+j} > c_j.
\end{align}
A generalization of this requirement that allows for the control of a general loss $L(X_{n+j}, Y_{n+j})$ is presented in Sec. \ref{sec_ph_rcs} following \cite{bai2026conformal}.

\vspace{-1mm}
\subsection{Selection Rule and False Discovery Rate}
A selection rule $R$ maps the observed data $(\mathcal{D}^{\textrm{cal}},\mathcal{X}^{\textrm{test}})$ to a data-dependent selection subset
\begin{align}\label{eq_sel_set}
    \mathcal{R} = R(\mathcal{D}^{\textrm{cal}}, \mathcal{X}^{\textrm{test}})\subseteq\{1, \ldots, m\},
\end{align}
whose indices are interpreted as the test inputs in set $\mathcal{X}^{\textrm{test}}$ the user deems likely to satisfy the requirement in \eqref{eq_perform_req}.

A \textit{false discovery} occurs when a test point is selected even though it fails to satisfy the requirement in \eqref{eq_perform_req}, i.e., when one has the inclusion $j\in\mathcal{R}$ and also the inequality $Y_{n+j}\leq c_j$. We quantify the realized error using the FDP, i.e.,
\begin{align}\label{eq_FDP_def}
    \textrm{FDP}(\mathcal{R}, \mathcal{Y}^{\textrm{test}}) = \frac{\sum_{j = 1}^m \mathds{1}\{j \in \mathcal{R}, Y_{n+j} \leq c_j\}}{\max\{1, |\mathcal{R}|\}},
\end{align}
and define the FDR as its expectation, i.e.,
\begin{align}\label{eq_FDR_def}
    \textrm{FDR}(R) = \mathbb{E}\big[\textrm{FDP}(\mathcal{R}, \mathcal{Y}^{\textrm{test}})\big],
\end{align}
where the expectation is taken over the joint distribution of the selected set $\mathcal{R}$ and of the true test labels $\mathcal{Y}^{\textrm{test}}$.

\vspace{-1mm}
\subsection{Utility-Based Objective} \label{sec_utility_obj}
The design of the selection rule $R$ entails a trade-off between set size and reliability. On the one hand, selecting a larger set $\mathcal{R}$ is generally preferable, as it may produce a larger yield in a production system, give more options to downstream applications \cite{jin2023sensitivity, de2024towards}, or require fewer queries to further evaluate unselected units using high-cost validation systems \cite{richens2020improving, carracedo2021review}. On the other hand, a larger set $\mathcal{R}$ may also increase the fraction of selected inputs that fail to meet requirement \eqref{eq_perform_req}. Conversely, a more conservative rule that produces a smaller set $\mathcal{R}$ may reduce the number of erroneous selections, but it may also exclude desirable inputs satisfying the condition \eqref{eq_perform_req}.

CS does not provide any control for the set size $|\mathcal{R}|$, targeting only a maximum FDR value $\alpha_{\textrm{max}}$ via the inequality $\textrm{FDR}(\mathcal{R})\leq \alpha_{\textrm{max}}$ (see Sec. \ref{sec_conv_cs}). In contrast, in this work we assume that, given the dataset $\mathcal{D}^{\textrm{cal}}$ and the unlabeled test data $\mathcal{X}^{\textrm{test}}$, the user wishes to balance size and reliability in the selection of set $\mathcal{R}$.

{\color{blue}To elaborate, we write as
\begin{align}\label{eq_w_size}
    r = r(\mathcal{R}) = \sum_{j\in\mathcal{R}}\pi_j
\end{align}
the weighted size of set $\mathcal{R}$, where $0\leq \pi_j \leq 1$ are arbitrary weights. With $\pi_j=1$, we obtain the set cardinality $r(\mathcal{R})=|\mathcal{R}|$. Alternatively, if the weight $\pi_j$ is an estimate of the probability of the requirement \eqref{eq_perform_req}, the size \eqref{eq_w_size} becomes an estimate of the average number of correct selections that satisfy condition \eqref{eq_perform_req}.}

Let $U: {\color{blue}[0,m]} \times (0,1) \mapsto [0, +\infty)$ be a user-specified non-negative utility $U(r,\alpha)$ that is non-decreasing in the set size $r$ {\color{blue}in \eqref{eq_w_size}} and non-increasing in the FDP $\alpha$. The specific ways in which the utility $U(r, \alpha)$ increases with respect to set size $r$ and decreases with the FDP $\alpha$ characterize the user-defined trade-off between these two criteria. For example, the utility function $U(r,\alpha)$ may be chosen in one of the following ways:
\begin{itemize}
    \item \emph{Constrained-size design:} Given a minimum selection size $r_{\textrm{min}}\in\{0,1,\ldots,m\}$, the user can prioritize smaller FDP levels among rules with size $r \geq r_{\textrm{min}}$ by adopting the utility
    \begin{align}\label{eq_utility_size_first}
        U(r, \alpha) = (1-\alpha) \cdot \mathds{1}\big\{r \geq r_{\textrm{min}}\big\}.
    \end{align}
    {\color{blue}Such a lower bound arises when a downstream stage must process a guaranteed number of candidates, as in filling a screening budget in drug discovery \cite{boldini2024machine, bai2025drug} or shortlisting genes for follow-up \cite{korthauer2019practical}.}

    \item \emph{Additive trade-off:} A general trade-off between size $r$ and FDP values can be expressed via the additive utility
    \begin{align}\label{eq_u_add}
        U(r, \alpha) = u(r) - \lambda v(\alpha) + C,
    \end{align}
    where $u: {\color{blue}[0, m]} \to [0, +\infty)$ is a non-decreasing function; $v: (0,1) \to [0, +\infty)$ is a non-decreasing function; hyperparameter $\lambda>0$ controls the relative importance of set size $r$ and FDP $\alpha$; and the constraint $C\geq 0$ is chosen to ensure the non-negativity of utility \eqref{eq_u_add}. Varying the hyperparameter $\lambda$ yields a continuum of possible operating points between size-oriented and reliability-oriented behavior.

    \item {\color{blue}\emph{Multiplicative trade-off:} Alternatively, the size $r$ and reliability $1-\alpha$ can be combined via the multiplicative utility
    \begin{align}\label{eq_u_mult}
        U(r, \alpha) = u(r) \cdot v(1-\alpha),
    \end{align}
    where function $u(\cdot)$ and $v(\cdot)$ are non-decreasing as in \eqref{eq_u_add}. With the choices $u(r) = r$ and $v(x) = x$, together with weights $\pi_j=1$ in \eqref{eq_w_size}, the utility \eqref{eq_u_mult} captures the expected number of correct selections among the $r$ selected inputs, i.e., $U(r,\alpha)=r(1-\alpha)$.}
\end{itemize}

We are ideally interested in supporting any selection rule \eqref{eq_sel_set} that addresses the problem
\begin{align}\label{eq_utility_obj}
    \max_{\mathcal{R} \subseteq \{1, \ldots, m\}} U\big({\color{blue}r(\mathcal{R})}, \alpha (\mathcal{R})\big),
\end{align}
where {\color{blue} $r(\mathcal{R})$ is the weighted set size \eqref{eq_w_size} and} $\alpha(\mathcal{R}) = \textrm{FDP}(\mathcal{R}, \mathcal{Y}^{\textrm{test}})$ is the FDP in \eqref{eq_FDP_def} for set $\mathcal{R}$. However, the FDP $\alpha(\mathcal{R})$ in \eqref{eq_FDP_def} depends on the true responses $\mathcal{Y}^{\textrm{test}}$, which are unknown. Therefore, we propose to study selection rules that use only data $\mathcal{D}^{\textrm{cal}}$ and $\mathcal{X}^{\textrm{test}}$, taking the form
\begin{align}\label{eq_set_rule}
    \mathcal{R} = R(\mathcal{D}^{\textrm{cal}}, \mathcal{X}^{\textrm{test}}) = \arg\max_{\mathcal{R} \subseteq \{1, \ldots, m\}} U\big({\color{blue}r(\mathcal{R})}, \hat{\alpha}(\mathcal{R})\big),
\end{align}
where $\hat{\alpha}(\mathcal{R})$ is an estimate of the quantity $\textrm{FDP}(\mathcal{R}, \mathcal{Y}^{\textrm{test}})$ based on the available data. Note that, throughout this paper, the notation $\arg\max$ is used to denote any of the maximizers of the given function.

In order for problem \eqref{eq_set_rule} to be a useful approximation of the original optimization \eqref{eq_utility_obj}, we impose that the estimate $\hat{\alpha}(\mathcal{R})$ used in problem \eqref{eq_set_rule} satisfy the inequality
\begin{align}\label{eq_post_FDP_first}
    \mathbb{E}\bigg[\frac{\textrm{FDP}(\mathcal{R}, \mathcal{Y}^{\textrm{test}})}{\hat{\alpha}(\mathcal{R})}\bigg] \leq 1,
\end{align}
where the expectation is taken over the joint distribution of the selected set $\mathcal{R}$ and of the true test labels $\mathcal{Y}^{\textrm{test}}$. This inequality, inspired by \cite{gauthier2025backward}, stipulates that, on average, the ratio between the true FDP and the estimate $\hat{\alpha}(\mathcal{R})$ does not exceed $1$. {\color{blue}The ratio \eqref{eq_post_FDP_first} compares the true FDP with its estimate within each run, and it reduces to the standard requirement $\mathbb{E}[\textrm{FDP}]\leq\alpha$ when $\hat{\alpha}(\mathcal{R})$ is a fixed level $\alpha$.

For a general data-dependent $\hat{\alpha}(\mathcal{R})$,} a first-order Taylor expansion of $1 / \hat{\alpha}(\mathcal{R})$ around the mean $\mathbb{E}[\hat{\alpha}(\mathcal{R})]$, as shown in Appendix \ref{apdx_Taylor}, implies that condition \eqref{eq_post_FDP_first} is approximately, up to a quadratic error of order $O\big((\hat{\alpha}(\mathcal{R}) - \mathbb{E}[\hat{\alpha}(\mathcal{R})])^2\big)$, equivalent to the inequality \cite{koning2023post, gauthier2025backward}
\begin{align}\label{eq_heur_FDR}
    \textrm{FDR}(R) \leq \mathbb{E}[\hat{\alpha}(\mathcal{R})].
\end{align}
Accordingly, the estimate $\hat{\alpha}(\mathcal{R})$ provides, on average, an approximately valid upper bound on the true FDR \eqref{eq_FDR_def}. {\color{blue}An alternative criterion based on a probably approximately correct (PAC) requirement \cite{gazin2024transductive, song2026everywhere} will be explored in Sec. \ref{sec_exp_baselines}.}

Overall, as illustrated in Fig. \ref{fig_sys_mod}, our goal is to design a selection rule that addresses problem \eqref{eq_set_rule}, while leveraging an FDR estimate $\hat{\alpha}(\mathcal{R})$ satisfying the inequality \eqref{eq_post_FDP_first}. The resulting class of methods can adjust set size {\color{blue}$r(\mathcal{R})$} and FDP $\alpha(\mathcal{R})$, in a post-hoc manner, as a function of the observed data $\mathcal{D}^{\textrm{cal}}$ and $\mathcal{X}^{\textrm{test}}$, while providing a reliable estimate of the FDR.

\section{Preliminaries: Conventional Conformal Selection} \label{sec_conv_cs}
To provide the necessary background, in this section we briefly review the conventional CS framework based on conformal p-variables and the BH procedure introduced in \cite{jin2023selection}. CS is a selection rule of the form \eqref{eq_sel_set} that imposes a constraint on the FDR \eqref{eq_FDR_def}, i.e.,
\begin{align}\label{eq_FDR_gua}
    \textrm{FDR}(R) \leq \alpha_{\textrm{max}}
\end{align}
for a pre-specified target maximum level $\alpha_{\textrm{max}} \in (0,1)$. Unlike the general utility-based rule \eqref{eq_set_rule} studied in this work, CS cannot adjust set size $|\mathcal{R}|$ and FDP level $\alpha$ as a function of the observed data $\mathcal{D}^{\textrm{cal}}$ and $\mathcal{X}^{\textrm{test}}$. Rather, it can only ensure the average guarantee \eqref{eq_FDR_gua}. CS serves as a baseline, highlighting the limitations in the state of the art that motivate the proposed class of selection methods.

\subsection{Selection as Multiple Hypothesis Testing} \label{sec_conv_prelim}
For each test input $X_{n+j} \in \mathcal{X}^{\textrm{test}}$, CS considers the \emph{random} null hypothesis
\begin{align}\label{eq_hypo}
    H_j: Y_{n+j}\leq c_j,
\end{align}
so that rejecting hypothesis $H_j$ corresponds to selecting input $X_{n+j}$. With this formulation, false discoveries coincide with selected indices $j\in\mathcal{R}$ for which the null hypothesis $H_j$ is true. Unless stated otherwise, we assume that the target levels are fixed constraints, not dependent on data, although our results can be generalized (see Sec. \ref{sec_ph_rcs}).

To test the random nulls \eqref{eq_hypo}, conventional CS uses a predictive model $\mu: \mathcal{X} \rightarrow \mathbb{R}$, {\color{blue}whose output $\mu(X)$ estimates the response for input $X$,} to define a non-negative \textit{conformity score} $S: \mathcal{X} \times \mathcal{Y} \rightarrow \mathbb{R}_+$. The purpose of the score $S(X,Y)$ is to quantify how likely it is for the true response associated with input $X$ under the data distribution to exceed, i.e., to be better than, level $Y$. For instance, one may use the scores $S(X,Y) = \max\{\mu(X) - Y, 0\}$ or $S(X,Y) = \exp(\mu(X)-Y)$, which are non-negative and non-increasing in $Y$ \cite{jin2023selection}.

The score $S(X,Y)$ is assumed to be monotone non-increasing in the response $Y$, i.e., for every $X\in\mathcal{X}$, we have the inequality 
\begin{align}\label{eq_sco_mon}
    S(X,Y)\geq S(X,Y')
\end{align}
for $Y \leq Y'$. This condition captures the desired property that the likelihood of exceeding a higher (better) value $Y' \geq Y$ cannot be larger than the corresponding likelihood for level $Y$.

On the calibration samples, CS computes the scores $S_i = S(X_i,Y_i)$ for all data points $i = 1, \ldots, n$. Moreover, for each test input $X_{n+j}$, CS evaluates the score at the required output threshold $c_j$ as $\hat{S}_{n+j} = S(X_{n+j},c_j)$ for $j = 1, \ldots, m$. Note that the score $\hat{S}_{n+j}$ is computable, since it depends only on the test input $X_{n+j}$. Furthermore, given the assumed monotonicity of the score function, it provides the highest score for responses compatible with the condition \eqref{eq_perform_req}.

CS converts the calibration scores $\{S_i\}_{i=1}^n$ and the test scores $\hat{S}_{n+j}$ into a \emph{conformal p-variable} $P_j$ for each random null hypothesis $H_j$ in \eqref{eq_hypo}, with $j=1,\ldots,m$. This is given by
\begin{align}\label{eq_conf_pval}
    P_j = \frac{1 + \sum_{i=1}^n \mathds{1}\{S_i > \hat{S}_{n+j}\}}{n+1},
    \quad j=1,\ldots,m.
\end{align}
Note that if the test score $\hat{S}_{n+j}$ coincides with some calibration scores $S_i$, ties are resolved by randomized tie-breaking: among the equal-score calibration points, a uniformly random subset is counted as exceeding $\hat{S}_{n+j}$ \cite{jin2023selection}. The quantity \eqref{eq_conf_pval} effectively counts the fraction of calibration points with higher scores $S_i$ than the score $\hat{S}_{n+j}$ for the test input at the target response $c_j$.

A key subtlety in CS is that the null hypothesis $H_j$ is random because it involves the unobserved test outcome $Y_{n+j}$. Consequently, the quantity $P_j$ is not a classical p-variable for the null hypothesis $H_j$. Instead, the quantity $P_j$ is a conformal p-variable for the related null hypothesis that the multiset $\{S_i\}_{i=1}^n \cup \{\hat{S}_{n+j}\}$ is exchangeable \cite{jin2023selection}. Intuitively, if this exchangeability hypothesis does not hold, the threshold $c_j$ is incompatible with test input $X_{n+j}$, and one should reject the null hypothesis $H_j$. Accordingly, a small value of the quantity $P_j$ provides evidence against exchangeability and, in turn, against the random null hypothesis $H_j$ \cite{jin2023selection}.

\subsection{BH Selection} \label{sec_conv_bh}
Given the conformal p-variables $\{P_j\}_{j=1}^m$ and a pre-specified nominal level $\alpha_{\textrm{max}} \in (0, 1)$, CS evaluates the rejection set $\mathcal{R}$ via the BH procedure \cite{jin2023selection, benjamini1995controlling, benjamini2001control}. Specifically, sorting the p-variables $P_{(1)} \leq \cdots \leq P_{(m)}$, BH computes the maximum index $k$ for which the p-variable $P_{(k)}$ does not exceed the corrected threshold $\alpha_{\textrm{max}} k/m$, i.e.,
\begin{align}\label{eq_BH_k}
    k(\alpha_{\textrm{max}}) = \max\Big\{k\in\{1, \ldots, m\}: P_{(k)} \leq \frac{\alpha_{\textrm{max}} k}{m}\Big\},
\end{align}
with the convention $\max(\varnothing) = 0$. Then it outputs the selected set
\begin{align}\label{eq_BH_set}
    R^{\textrm{CS}}_{\alpha_{\textrm{max}}}(\mathcal{D}^{\textrm{cal}}, \mathcal{X}^{\textrm{test}}) = \Big\{j\in\{1, \ldots, m\}: P_j \leq \frac{\alpha_{\textrm{max}} k(\alpha_{\textrm{max}})}{m}\Big\}
\end{align}
if $k(\alpha_{\textrm{max}})>0$ and $R^{\textrm{CS}}_{\alpha_{\textrm{max}}}(\mathcal{D}^{\textrm{cal}}, \mathcal{X}^{\textrm{test}}) = \varnothing$ if $k(\alpha_{\textrm{max}}) = 0$.

\subsection{Theoretical Guarantees}
CS satisfies the following guarantee on the FDR.
\begin{theorem}[\textbf{\!\!\!\cite[Theorem 3]{jin2023selection}: Fixed-level FDR control}]\label{theo_conv_fdr}
    Assume that (i) the combined calibration and test pairs $\mathcal{D}^{\text{\rm{cal}}} \cup \mathcal{D}^{\text{\rm{test}}}$ are i.i.d; and (ii) the score function $S(X,Y)$ is monotone non-increasing in $Y$ as in \eqref{eq_sco_mon}. Let the levels $\{c_j\}_{j=1}^m$ and the target FDR $\alpha_{\text{\rm{max}}}\in(0,1)$ be fixed in advance, independently of the observed data. Then, the CS selection rule $R^{\text{\rm{CS}}}_{\alpha_{\text{\rm{max}}}}$ defined in \eqref{eq_BH_set} satisfies the inequality
    \begin{align}\label{eq_gua_CS}
        \text{\rm{FDR}} \big(R^{\text{\rm{CS}}}_{\alpha_{\text{\rm{max}}}}\big) \leq \alpha_{\text{\rm{max}}}.
    \end{align}
\end{theorem}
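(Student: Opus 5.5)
The plan follows the argument of \cite{jin2023selection}: compare each computable p-variable $P_j$ with an oracle p-variable built from the true test score, and then run a leave-one-out BH analysis on the oracle quantities. For each $j$, define the oracle score $V_{n+j}=S(X_{n+j},Y_{n+j})$ and
\begin{align*}
    P_j^{*}=\frac{1+\sum_{i=1}^n \mathds{1}\{S_i>V_{n+j}\}}{n+1},
\end{align*}
with the same randomized tie-breaking as in \eqref{eq_conf_pval}. Assumption (i) makes $\{S_1,\ldots,S_n,V_{n+j}\}$ exchangeable, so $P_j^*$ is super-uniform: $\mathbb{P}(P_j^*\le t)\le t$, with equality at $t\in\{1/(n+1),\ldots,1\}$ thanks to the tie-breaking. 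Monotonicity \eqref{eq_sco_mon} gives the key fact $\hat S_{n+j}=S(X_{n+j},c_j)\ge V_{n+j}$ whenever $H_j$ is true, i.e., $Y_{n+j}\le c_j$. Hence on the null event $P_j\le P_j^*$.

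Write $\mathcal{R}$ for the BH set \eqref{eq_BH_set}. The FDR then decomposes as
\begin{align*}
    \mathrm{FDR}=\sum_{j=1}^m\mathbb{E}\Big[\frac{\mathds{1}\{j\in\mathcal{R},\,Y_{n+j}\le c_j\}}{\max\{1,|\mathcal{R}|\}}\Big]
    \le \sum_{j=1}^m\mathbb{E}\Big[\frac{\mathds{1}\{P_j^*\le \alpha_{\max}|\mathcal{R}|/m\}}{\max\{1,|\mathcal{R}|\}}\Big],
\end{align*}
since $j\in\mathcal{R}$ together with the null event gives $P_j^*\ge P_j$ and $P_j\le\alpha_{\max}|\mathcal{R}|/m$.

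Next, decouple $|\mathcal{R}|$ from $P_j^*$. Let $\mathcal{R}_{j}$ be the BH set obtained after replacing $P_j$ by $0$. By the standard BH property, $j\in\mathcal{R}$ implies $\mathcal{R}=\mathcal{R}_j$. Each summand is therefore bounded by
\begin{align*}
    \sum_{k}\frac{1}{k}\,\mathbb{P}\big(P_j^*\le \alpha_{\max}k/m,\ |\mathcal{R}_j|=k\big).
\end{align*}
If $P_j^*$ were independent of $\mathcal{R}_j$, super-uniformity would give at most $(\alpha_{\max}/m)\sum_k\mathbb{P}(|\mathcal{R}_j|=k)\le\alpha_{\max}/m$, and summing over $j$ yields \eqref{eq_gua_CS}.

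\textbf{Main obstacle.} $P_j^*$ and $\mathcal{R}_j$ are not independent: both depend on the shared calibration scores, so the other $P_\ell$ are correlated with $P_j^*$. I would follow \cite{jin2023selection} and condition on the unordered multiset $\mathcal{Z}_j=[Z_1,\ldots,Z_n,Z_{n+j}]$ together with the other test points $\{Z_{n+\ell}\}_{\ell\ne j}$. By exchangeability, the position of $V_{n+j}$ within this multiset is uniform. One then shows that, given this conditioning, $\mathcal{R}_j$ can be rewritten as a function that depends on which element is the test point only through a monotone mechanism. Concretely, replace $P_\ell$ for $\ell\ne j$ by the versions computed with the calibration set augmented by $Z_{n+j}$. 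These versions are symmetric in $\mathcal{Z}_j$, and the replacement changes each $P_\ell$ in a direction that only enlarges $\mathcal{R}_j$ exactly when $P_j^*$ is small.

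This monotone coupling, or equivalently the PRDS-type property of conformal p-values, gives $\mathbb{E}[\mathds{1}\{P_j^*\le\alpha_{\max}k/m\}/k]$-type bounds via the usual BH PRDS lemma. I expect verifying this conditional monotonicity, including the ties handled by randomization, to be the delicate step; the rest is bookkeeping.
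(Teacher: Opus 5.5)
The paper does not prove Theorem~\ref{theo_conv_fdr} itself; it imports it from \cite{jin2023selection}. Your skeleton is the cited argument: an oracle p-variable from the true label, dominance under the null, a leave-one-out BH bound, and decoupling through exchangeability of the calibration-plus-test-$j$ multiset. It is also the template the paper adapts to e-variables in Appendix~\ref{apx_prof_posthoc}. Two steps, however, need repair.

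\textbf{First, your null comparison is reversed.} Under \eqref{eq_sco_mon} the score is non-increasing in $Y$. So on $\{Y_{n+j}\le c_j\}$ you get $V_{n+j}=S(X_{n+j},Y_{n+j})\ge S(X_{n+j},c_j)=\hat S_{n+j}$, not the reverse. Since \eqref{eq_conf_pval} counts calibration scores strictly above the test score, this gives $P_j\ge P_j^*$, i.e., the computable p-variable is conservative. With your direction $P_j\le P_j^*$ it would be anti-conservative. Your justification of the FDR display (``$P_j^*\ge P_j$ and $P_j\le\alpha_{\max}|\mathcal{R}|/m$'') does not yield $P_j^*\le\alpha_{\max}|\mathcal{R}|/m$; with the correct direction it does.

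Also, pass to $\mathcal{R}_j$ from the original summand, which still contains the event $\{j\in\mathcal{R}\}$. Your relaxed display has dropped that event, and without it $|\mathcal{R}|$ and $|\mathcal{R}_j|$ are unrelated.

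\textbf{Second, the decoupling is the real content of the proof, and your sketch is not right as stated.} This is the step the e-variable proof avoids via $\mathds{1}\{E\ge t\}\le E/t$. The conformal p-variable recomputed on the augmented calibration set is $(1+\#\{z\in\mathcal{Z}_j:S(z)>\hat S_{n+\ell}\})/(n+2)$. It is $\ge P_\ell$ when $V_{n+j}>\hat S_{n+\ell}$ and $<P_\ell$ otherwise, so it does not ``only enlarge'' $\mathcal{R}_j$. Moreover, its upward moves hit exactly the $P_\ell\ge P_j^*$, which may lie inside $\mathcal{R}_j$.

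The fact you need is the identity $\#\{i\le n:S_i>\hat S_{n+\ell}\}=\#\{z\in\mathcal{Z}_j:S(z)>\hat S_{n+\ell}\}-\mathds{1}\{V_{n+j}>\hat S_{n+\ell}\}$. Condition on $\mathcal{Z}_j$ and $\{Z_{n+\ell}\}_{\ell\ne j}$, ignoring ties, which only add the bookkeeping you flagged. Then:
\begin{itemize}
\item the slot of the test point in $\mathcal{Z}_j$ is uniform, so $P_j^*$ is super-uniform;
\item by the identity, each $P_\ell$ with $\ell\ne j$ is a non-decreasing function of $P_j^*$;
\item hence $|\mathcal{R}_j|=g(P_j^*)$ with $g$ non-increasing, and $\{p:p\le\alpha_{\max}g(p)/m\}$ is a down-set $\{p\le p_0\}$.
\end{itemize}
The conditional expectation of $\mathds{1}\{P_j^*\le\alpha_{\max}g(P_j^*)/m\}/g(P_j^*)$ is therefore at most $\mathbb{P}(P_j^*\le p_0\mid\cdot)/g(p_0)\le p_0/g(p_0)\le\alpha_{\max}/m$.

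A correct form of your coupling also exists. The quantity $\check P_\ell=\#\{z\in\mathcal{Z}_j:S(z)>\hat S_{n+\ell}\}/(n+1)$ is symmetric in $\mathcal{Z}_j$ and never exceeds $P_\ell$. It differs from $P_\ell$ only when $P_\ell\le P_j^*$, so it leaves $\mathcal{R}_j$ unchanged on $\{P_j^*\le\alpha_{\max}|\mathcal{R}_j|/m\}$, while being fixed given the conditioning.

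Either route is the PRDS property you allude to, made explicit. Until one of them is written out, the proposal is an outline of the cited proof rather than a proof.
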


\section{Post-Hoc Conformal Selection} \label{sec_PHCS}
We now develop PH-CS, a data-driven selection method capable of addressing the post-hoc utility optimization in \eqref{eq_utility_obj} based on an FDR estimate $\hat{\alpha}(\mathcal{R})$ satisfying the requirement \eqref{eq_post_FDP_first}. As in the conventional CS reviewed in Sec. \ref{sec_conv_cs}, PH-CS views selection as multiple hypothesis testing of the random nulls $H_j: Y_{n+j}\leq c_j$ in \eqref{eq_hypo}. Unlike CS, however, PH-CS can address the post-hoc optimization \eqref{eq_utility_obj} rather than being constrained to the FDR requirement \eqref{eq_FDR_gua}. Technically, this is done by replacing conformal p-variables \eqref{eq_conf_pval} and the BH procedure adopted by CS with conformal e-variables and e-BH \cite{wang2022false}.

\subsection{Scores and Conformal E-variables}\label{sec_sco_eval}
Like CS, in order to construct candidate selection sets, PH-CS relies on a predictive model {\color{blue}$\mu(\cdot)$} to obtain conformity scores $S(X,Y)$, where larger values indicate stronger evidence that the outcome exceeds level $Y$ for input $X$. As discussed in Sec. \ref{sec_conv_prelim}, the score $S(X,Y)$ satisfies the monotonicity condition \eqref{eq_sco_mon}. Furthermore, as in CS, PH-CS computes the scores $S_i=S(X_i,Y_i)$ for the calibration data $i=1,\ldots,n$, and the scores $\hat{S}_{n+j} = S(X_{n+j},c_j)$ for the test data $j = 1, \ldots, m$. We recall that the score $\hat{S}_{n+j}$ can be interpreted as quantifying the evidence, for test input $X_{n+j}$, in favor of meeting the requirement $Y_{n+j}>c_j$.

While CS adopts the p-variables \eqref{eq_conf_pval}, PH-CS adopts e-variables targeting the random null hypothesis \eqref{eq_hypo}. An e-variable is a random variable whose average under the null does not exceed $1$. E-variables provide a more robust way to measure evidence against a null hypothesis as compared to a p-variable \cite{xu2024post}. In particular, as shown in \cite{koning2023post}, e-variables support the post-hoc selection of significance levels in hypothesis testing, while p-variables require significance levels to be specified in advance.

{\color{blue}While different choices are possible \cite{bashari2023derandomized, lee2025selection, bai2026conformal}, we focus here on the \textit{conformal e-variable} introduced in \cite{balinsky2024enhancing}. Unlike the e-variables presented in \cite{bashari2023derandomized, lee2025selection, bai2026conformal}, the adopted e-variable has the key advantage that it does not depend on a target FDR level, supporting the desired utility optimization step. For each test point $X_{n+j}$ with $j = 1, \ldots, m$, the {conformal e-variable} is defined as \cite{balinsky2024enhancing}}
\begin{align}\label{eq_e_vari}
    E_j = \frac{\hat{S}_{n+j}}{\frac{1}{n+1}\Big(\sum_{i=1}^n S_i + \hat{S}_{n+j}\Big)}.
\end{align}
This statistic is an e-variable for the null hypothesis that the multiset $\{S_i\}_{i=1}^n\cup\{\hat{S}_{n+j}\}$ is exchangeable \cite{balinsky2024enhancing}. In fact, under this assumption, the average of the random variable $E_j$ equals $1$ (see Appendix \ref{apx_prof_posthoc}). Accordingly, a large value of the statistic \eqref{eq_e_vari} provides evidence that exchangeability does not hold, which, in turn, supports rejecting the random null hypothesis $H_j:Y_{n+j}\leq c_j$ in \eqref{eq_hypo} by following the same arguments given in Sec. \ref{sec_conv_prelim}.

\subsection{Utility-Driven Selection}
PH-CS aims to output a data-dependent set $\mathcal{R}\subseteq\{1,\ldots,m\}$ that maximizes the utility $U\big({\color{blue}r(\mathcal{R})}, \hat{\alpha}(\mathcal{R})\big)$ as per problem \eqref{eq_set_rule}, where the FDP estimate $\hat{\alpha}(\mathcal{R})$, computed from data $(\mathcal{D}^{\textrm{cal}}, \mathcal{X}^{\textrm{test}})$, must serve, on average, as a reliable upper bound on the FDR as formalized by \eqref{eq_post_FDP_first}. To address problem \eqref{eq_set_rule}, PH-CS starts by restricting the optimization to a discrete set of possible subsets $\mathcal{R}_k\subseteq\{1,\ldots,m\}$ with $k\in\mathcal{K}$, where $\mathcal{K}$ is a discrete set of integers. To motivate our choice of candidate subsets $\{\mathcal{R}_k\}_{k\in\mathcal{K}}$, we review next the operation of e-BH \cite{wang2022false}, an FDR-controlling testing procedure based on e-variables.

For a given nominal FDR level $\alpha \in (0,1)$, e-BH selects indices $j\in\{1,\ldots,m\}$ whose e-variable $E_j$ exceeds a threshold $t(\alpha)$, i.e.,
\begin{align}\label{eq_set_alpha}
    \mathcal{R}(\alpha) = \{j\in\{1,\ldots,m\}: E_j\geq t(\alpha)\}.
\end{align}
In a manner similar to BH (see \eqref{eq_BH_set}), the threshold $t(\alpha)$ is obtained by imposing the self-consistency condition that, if $k$ points are selected, i.e., if $|\mathcal{R}(\alpha)| = k$, the cutoff must be given by \cite{wang2022false}
\begin{align}\label{eq_consist_thre}
    t(\alpha) = \frac{m}{\alpha k}.
\end{align}

Let $E_{(1)}\ge \cdots \ge E_{(m)}$ be the ordered e-variables. The e-BH set \eqref{eq_set_alpha} can change only when the cutoff $t(\alpha)$ crosses one of the order statistics $\{E_{(k)}\}_{k=1}^m$. Therefore, when adopting e-BH with possibly varying FDR levels $\alpha$, one can focus without loss of generality on the discrete family of candidate subsets
\begin{align}\label{eq_Rk_def}
    \mathcal{R}_k = \{j\in\{1,\ldots,m\}\hspace{-0.5mm}:\hspace{-0.5mm} E_j \geq E_{(k)}\},~ k=0,1,\ldots,m.
\end{align}
Note that the size of the set \eqref{eq_Rk_def} increases in the index $k=0,1,\ldots,m$, i.e., we have the inclusion $\mathcal{R}_0 \subseteq \mathcal{R}_1 \subseteq \cdots \subseteq \mathcal{R}_m$.

PH-CS optimizes problem \eqref{eq_set_rule} over the sets $\mathcal{R}_k$ in \eqref{eq_Rk_def}. For each candidate $\mathcal{R}_k$, the FDP estimate $\hat{\alpha}(\mathcal{R}_k)$ is obtained by inverting the equality \eqref{eq_consist_thre} as
\begin{align}\label{eq_alpha_k_def}
    \hat{\alpha}(\mathcal{R}_k)=\min\Big\{1,\frac{m}{kE_{(k)}}\Big\},
\end{align}
where the first term in the minimum ensures that the estimate $\hat{\alpha}(\mathcal{R}_k)$ does not exceed $1$. The resulting post-hoc optimization problem solved by PH-CS is then
\begin{align}\label{eq_PHCS_choice}
    \mathcal{R}^\textrm{PH-CS} &= R^{\textrm{PH-CS}}(\mathcal{D}^{\textrm{cal}}, \mathcal{X}^{\textrm{test}}) \nonumber\\
    &= \arg\max_{\mathcal{R}\in\{\mathcal{R}_0, \ldots, \mathcal{R}_m\}} U\big({\color{blue}r(\mathcal{R})}, \hat{\alpha}(\mathcal{R})\big),
\end{align}
and reports the associated FDP level as \begin{align}\label{eq_PHCS_alpha}
    \alpha^{\textrm{PH-CS}} = \hat{\alpha}(\mathcal{R}^{\textrm{PH-CS}}).
\end{align}

\subsection{Summary of PH-CS}
\begin{algorithm}[t]
  \caption{PH-CS} \label{algo_PHCS}
  \hspace*{\algorithmicindent}\parbox[t]{\dimexpr\linewidth-\algorithmicindent}{\textbf{Input:} Calibration data $\mathcal{D}^{\textrm{cal}} = \{(X_i,Y_i)\}_{i=1}^n$, unlabeled test covariates $\mathcal{X}^{\textrm{test}} = \{X_{n+j}\}_{j=1}^m$, thresholds $\{c_j\}_{j=1}^m$, and utility $U(\cdot,\cdot)$}
  \begin{algorithmic}[1]
    \STATE{Compute calibration scores $\{S_i\}_{i=1}^n$, and test scores $\{\hat{S}_{n+j}\}_{j=1}^m$}
    \STATE{Construct conformal e-variables $\{E_j\}_{j=1}^m$ via \eqref{eq_e_vari}}
    \STATE{Sort $\{E_j\}_{j=1}^m$ in non-increasing order as $E_{(1)} \geq E_{(2)}\geq \dots \geq E_{(m)}$}
    \STATE{For each $k=1,\ldots, m$, define set $\mathcal{R}_{k} = \{j: E_j\geq E_{(k)}\}$ and FDP estimate $\hat{\alpha}(\mathcal{R}_k) = \min\{1, m / (k E_{(k)})\}$ with the convention $(\alpha_{0}, \mathcal{R}_{0})=(0,\varnothing)$}
    \STATE{Among all candidate sets $\{\mathcal{R}_k\}_{k=0}^m$, choose $\mathcal{R}^{\textrm{PH-CS}} = \arg\max_{\mathcal{R} \in \{\mathcal{R}_0,\ldots, \mathcal{R}_m\}} U({\color{blue}r(\mathcal{R})}, \hat{\alpha}(\mathcal{R}))$.}
    \STATE{Set $\alpha^{\textrm{PH-CS}} = \hat{\alpha}(\mathcal{R}^{\textrm{PH-CS}})$}
  \end{algorithmic}
  \hspace*{\algorithmicindent}\parbox[t]{\dimexpr\linewidth-\algorithmicindent}{\textbf{Output:} Selected set $\mathcal{R}^{\textrm{PH-CS}}$ and error level $\alpha^{\textrm{PH-CS}}$}
\end{algorithm}

Algorithm \ref{algo_PHCS} summarizes the proposed PH-CS procedure. Starting from the calibration sample and an unlabeled test batch, PH-CS computes conformity scores $S(X_i, Y_i)$ on the calibration pairs, and evaluates the scores $S(X_{n+j}, c_j)$ at the requirement thresholds $c_j$ on the test covariates. These quantities are then converted into conformal e-variables $\{E_j\}_{j=1}^m$ using \eqref{eq_e_vari}. Next, the e-variables are sorted in non-increasing order to generate the finite collection of candidate sets $\mathcal{R}_k$ in \eqref{eq_Rk_def}. Finally, PH-CS selects the set $\mathcal{R}_k$ by maximizing the utility \eqref{eq_PHCS_choice}, producing the selected set $\mathcal{R}^{\textrm{PH-CS}}$ in \eqref{eq_PHCS_choice} together with the chosen nominal level $\alpha^{\textrm{PH-CS}}$ in \eqref{eq_PHCS_alpha}. In the next subsection, we will show that this construction yields a valid post-hoc reliability estimate in the sense of inequality \eqref{eq_post_FDP_first}.

\subsection{Reliability Guarantee} \label{sec_theo_guar}
The following theorem shows that PH-CS, which is defined in Algorithm \ref{algo_PHCS}, returns an FDP estimate $\alpha^{\textrm{PH-CS}}$ for the selected set $\mathcal{R}^{\textrm{PH-CS}}$ that satisfies the desired reliability property \eqref{eq_post_FDP_first}.

\begin{theorem}[\textbf{Post-hoc reliability guarantee}]\label{theo_posthoc}
    Assume that (i) the combined calibration and test pairs $\mathcal{D}^{\text{\rm{cal}}} \cup \mathcal{D}^{\text{\rm{test}}}$ are exchangeable, and (ii) the score $S(X,Y)$ is monotone non-increasing in $Y$ as in \eqref{eq_sco_mon}. Then, the PH-CS output $(\mathcal{R}^{\text{\rm{PH-CS}}}, \alpha^{\text{\rm{PH-CS}}})$ defined in \eqref{eq_PHCS_choice} satisfies the average requirement
    \begin{align}\label{eq_post_FDP}
        \mathbb{E}\Bigg[\frac{\text{\rm{FDP}} \big(\mathcal{R}^{\text{\rm{PH-CS}}}, \mathcal{Y}^{\text{\rm{test}}}\big)}{\alpha^{\text{\rm{PH-CS}}}}\Bigg] \leq 1,
    \end{align}
    where the average is evaluated with respect to the joint distribution of the selected set $\mathcal{R}^{\text{\rm{PH-CS}}}$ and of the true test labels $\mathcal{Y}^{\text{\rm{test}}}$.
\end{theorem}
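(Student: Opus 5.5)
Our plan is to reduce the post-hoc guarantee to a deterministic inequality that holds simultaneously over the whole path $\{\mathcal{R}_k\}_{k=0}^m$, in the spirit of the level-uniform property of e-BH. We then replace the unobservable null e-variables by oracle e-variables whose expectation is at most $1$.

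\emph{Step 1: a pathwise bound.} Fix any $k\ge 1$ with $\mathcal{R}_k\neq\varnothing$, and write $\mathcal{H}_0=\{j: Y_{n+j}\le c_j\}$ for the set of true nulls. For every $j\in\mathcal{R}_k$ we have $E_j\ge E_{(k)}$ and $|\mathcal{R}_k|\ge k$. If $\hat{\alpha}(\mathcal{R}_k)=m/(kE_{(k)})$, this gives $\mathds{1}\{j\in\mathcal{R}_k\}\le E_j\,\hat{\alpha}(\mathcal{R}_k)\,k/m$. Dividing by $\max\{1,|\mathcal{R}_k|\}\ge k$ then yields
\begin{align*}
\frac{\textrm{FDP}(\mathcal{R}_k,\mathcal{Y}^{\textrm{test}})}{\hat{\alpha}(\mathcal{R}_k)}\le \frac{1}{m}\sum_{j\in\mathcal{H}_0\cap\mathcal{R}_k} E_j\le \frac{1}{m}\sum_{j\in\mathcal{H}_0} E_j .
\end{align*}
If instead $\hat{\alpha}(\mathcal{R}_k)=1$, the ratio equals the FDP, and $\textrm{FDP}\le \frac{1}{|\mathcal{R}_k|}\sum_{j\in\mathcal{H}_0\cap\mathcal{R}_k}\mathds{1}\le\frac{1}{m}\sum_{j\in \mathcal{H}_0\cap\mathcal{R}_k}E_j$, using $E_j\ge m/k$ on $\mathcal{R}_k$ since $m/(kE_{(k)})\le 1$. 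For $k=0$ the FDP is $0$, and we interpret $0/0$ as $0$, as the convention $\alpha_0=0$ requires. Because the right-hand side does not depend on $k$, the bound holds for the data-dependent index selected in \eqref{eq_PHCS_choice}, whatever the utility $U$ is.

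\emph{Step 2: oracle e-variables via monotonicity.} For $j\in\mathcal{H}_0$ we have $Y_{n+j}\le c_j$. Assumption (ii) then gives $S_{n+j}:=S(X_{n+j},Y_{n+j})\ge \hat{S}_{n+j}$. The map $s\mapsto s/(A+s)$, with $A=\sum_{i=1}^n S_i\ge 0$, is non-decreasing. Hence $E_j\le E_j^\ast$, where
\begin{align*}
E_j^\ast=\frac{(n+1)S_{n+j}}{\sum_{i=1}^n S_i+S_{n+j}}.
\end{align*}
We use the convention $0/0=0$ if all scores vanish, which keeps $E^\ast_j\le$ the usual value.

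\emph{Step 3: expectation bound.} By assumption (i), the scores $(S_1,\ldots,S_n,S_{n+j})$ are exchangeable. Their symmetric sum $T$ is therefore invariant, so $\mathbb{E}[S_{n+j}/T]=\frac{1}{n+1}\mathbb{E}[\sum_\ell S_\ell/T]\le\frac{1}{n+1}$, and hence $\mathbb{E}[E_j^\ast]\le 1$. Taking expectations in Step 1 and combining with Step 2, we get
\begin{align*}
\mathbb{E}\bigg[\frac{\textrm{FDP}}{\alpha^{\textrm{PH-CS}}}\bigg]\le\frac1m\,\mathbb{E}\Big[\sum_{j=1}^m \mathds{1}\{j\in\mathcal{H}_0\}E_j^\ast\Big]\le\frac1m\sum_{j=1}^m\mathbb{E}[E_j^\ast]\le 1 .
\end{align*}
Here we used $E_j^\ast\ge0$ to drop the random indicator $\mathds{1}\{j\in\mathcal{H}_0\}$, which avoids any conditioning on the random null.

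The main obstacle is Step 1. It requires care with ties in $E_{(k)}$, where $|\mathcal{R}_k|>k$, and with the truncation at $1$. The key observation is that only $|\mathcal{R}_k|\ge k$ and $E_j\ge E_{(k)}$ on $\mathcal{R}_k$ are used, so ties can only help. Exchangeability of the test points with each other is not needed; only pairwise exchangeability with the calibration set is.
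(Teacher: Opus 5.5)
Your Steps 2 and 3 reproduce the paper's Steps 1 and 2: the oracle e-variable, monotonicity of $s\mapsto (n+1)s/(A+s)$, and exchangeability. The first case of your Step 1 is the paper's level-uniform bound, and you argue it more carefully than the paper does. You need only $E_j\ge E_{(k)}$ on $\mathcal{R}_k$ and $|\mathcal{R}_k|\ge k$. The paper instead identifies $\mathcal{R}^{\textrm{PH-CS}}$ with the e-BH output at level $\alpha^{\textrm{PH-CS}}$, which need not hold, since e-BH returns the largest self-consistent set.

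\textbf{The gap is your truncated case, and it cannot be repaired.} The truncation is active, $\hat{\alpha}(\mathcal{R}_k)=1<m/(kE_{(k)})$, exactly when $E_{(k)}<m/k$. So your step ``$E_j\ge m/k$ on $\mathcal{R}_k$ since $m/(kE_{(k)})\le 1$'' has the inequality backwards. In this case the pathwise bound $\textrm{FDP}(\mathcal{R}_k,\mathcal{Y}^{\textrm{test}})\le \frac{1}{m}\sum_{j\in\mathcal{H}_0\cap\mathcal{R}_k}E_j$ is false: if all nulls are true and all $E_j$ are near $0$, the left side is $1$ and the right side is nearly $0$.

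No other argument can close this case, because the statement itself fails there. Take the following instance:
\begin{itemize}
\item $n=m=1$;
\item score $S(X,Y)=g(X)$, which is constant and hence monotone in $Y$, with $g(X)\in\{1,2\}$ equiprobable;
\item $c_1$ above the support of $Y$, so $H_1$ always holds;
\item utility $U(r,\alpha)=r$, so $\mathcal{R}^{\textrm{PH-CS}}=\{1\}$ always.
\end{itemize}
The ratio is then $1/\min\{1,1/E_1\}=\max\{1,E_1\}$ with $E_1=2g(X_2)/(g(X_1)+g(X_2))$. Its expectation is $13/12>1$, even though $\mathbb{E}[E_1]=1$.

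The paper's proof has the same hole. It is hidden in the claim that $\mathcal{R}^{\textrm{PH-CS}}$ is self-consistent at a level $\alpha^{\textrm{PH-CS}}\in(0,1)$; this fails precisely when the $\min\{1,\cdot\}$ produces $\alpha^{\textrm{PH-CS}}=1$.

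Your argument becomes a complete proof under either of two changes:
\begin{itemize}
\item report the untruncated estimate $m/(kE_{(k)})$, which may exceed $1$; or
\item restrict the path to $\mathcal{R}_0$ together with those $k$ satisfying $kE_{(k)}\ge m$.
\end{itemize}
In both cases only the first case of your Step 1 is needed.
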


\textit{Proof:} See Appendix \ref{apx_prof_posthoc}.

As explained in Sec. \ref{sec_problem}, the inequality \eqref{eq_post_FDP} implies, up to a first-order approximation, the condition  \eqref{eq_heur_FDR}, showing that the estimate $\alpha^{\textrm{PH-CS}}$ is, on average, a valid upper bound on the FDR. Following \cite{gauthier2025backward}, this property can also be leveraged to estimate the FDR from the calibration sample via batch resampling.

As a final remark, Theorem \ref{theo_posthoc} can be shown to hold even if the target levels $\{c_j\}_{j=1}^m$ depend on the available data $(\mathcal{D}^{\textrm{cal}}, \mathcal{X}^{\textrm{test}})$ (see Appendix \ref{apx_prof_posthoc}). In contrast, extending Theorem \ref{theo_conv_fdr} to data-dependent thresholds requires the additional {positive regression dependence on a subset} condition \cite{jin2023selection}.

{\color{blue}
Theorem~\ref{theo_posthoc} guarantees the reliability of the estimate $\alpha^{\textrm{PH-CS}}$ in terms of the average ratio in \eqref{eq_post_FDP}. As explained in Sec. \ref{sec_problem}, the condition \eqref{eq_post_FDP} is related to \eqref{eq_heur_FDR}, i.e., to the inequality $\mathbb{E}[\textrm{FDP}(\mathcal{R}^{\textrm{PH-CS}},\mathcal{Y}^{\textrm{test}})] \leq \mathbb{E}[\alpha^{\textrm{PH-CS}}]$. A natural question is whether one can establish the desired inequality $\textrm{FDP}(\mathcal{R}^{\textrm{PH-CS}}, \mathcal{Y}^{\textrm{test}}) \leq \alpha^{\textrm{PH-CS}}$ pointwise with high probability, rather than on average. The following result shows that, as the calibration and test sets grow, this is indeed the case.

\begin{proposition}[\textbf{Asymptotic reliability}]\label{prop_asymp}
    Assume that (i) the calibration and test pairs $\mathcal{D}^{\text{\rm{cal}}}\cup\mathcal{D}^{\text{\rm{test}}}$ are i.i.d.; (ii) the score $S(X,Y)$ is monotone non-increasing in $Y$ as per inequality \eqref{eq_sco_mon}; and (iii) the score has a finite second moment $\mathbb{E}[S(X,Y)^2]<\infty$ and a positive mean $\mathbb{E}[S(X,Y)]>0$. Then, for every $\varepsilon>0$, the PH-CS output $(\mathcal{R}^{\text{\rm{PH-CS}}}, \alpha^{\text{\rm{PH-CS}}})$ in \eqref{eq_PHCS_choice}--\eqref{eq_PHCS_alpha} satisfies the limit
    \begin{align}\label{eq_prop_asymp}
        \lim_{n,m\to\infty}\mathbb{P}\Big(\text{\rm{FDP}}\big(\mathcal{R}^{\text{\rm{PH-CS}}}, \mathcal{Y}^{\text{\rm{test}}}\big) \leq \alpha^{\text{\rm{PH-CS}}} + \varepsilon\Big) = 1.
    \end{align}
\end{proposition}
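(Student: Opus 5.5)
The plan is to prove a \emph{deterministic} inequality that holds simultaneously for every candidate set $\mathcal{R}_k$ on the path \eqref{eq_Rk_def}. It will have the form $\textrm{FDP}(\mathcal{R}_k,\mathcal{Y}^{\textrm{test}})\leq \rho_{n,m}\,\hat{\alpha}(\mathcal{R}_k)$, where $\rho_{n,m}$ is a random factor that does not depend on $k$ and satisfies $\rho_{n,m}\to 1$ in probability. Because the bound is uniform in $k$, it applies automatically to the data-dependent choice \eqref{eq_PHCS_choice}. We then get $\textrm{FDP}(\mathcal{R}^{\textrm{PH-CS}},\mathcal{Y}^{\textrm{test}})\leq \alpha^{\textrm{PH-CS}} + (\rho_{n,m}-1)_+$, using $\hat\alpha\le 1$. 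The limit \eqref{eq_prop_asymp} follows since $\mathbb{P}((\rho_{n,m}-1)_+>\varepsilon)\to 0$.

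\textbf{Step 1 (monotonicity links nulls to true scores).} For a null index $j$, i.e., $Y_{n+j}\le c_j$, the condition \eqref{eq_sco_mon} gives $S_{n+j}:=S(X_{n+j},Y_{n+j})\geq S(X_{n+j},c_j)=\hat S_{n+j}$. Let $D=\frac{1}{n+1}\sum_{i=1}^n S_i$. Since $\hat S_{n+j}\ge 0$, the denominator of \eqref{eq_e_vari} is at least $D$. Hence $j\in\mathcal{R}_k$, i.e., $E_j\ge E_{(k)}$, implies
\[
\hat S_{n+j}\ge E_{(k)}\Big(\sum_{i=1}^n S_i+\hat S_{n+j}\Big)/(n+1)\ge E_{(k)}D.
\]
So every false discovery $j\in\mathcal{R}_k$ satisfies $S_{n+j}\ge E_{(k)}D$.

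\textbf{Step 2 (empirical Markov bound).} Let $V_k$ be the number of false discoveries in $\mathcal{R}_k$, and write $\bar S^{\textrm{test}}=\frac1m\sum_{j=1}^m S_{n+j}$. Step 1 gives
\[
V_k\le \#\{j:S_{n+j}\ge E_{(k)}D\}\le \frac{\sum_{j} S_{n+j}}{E_{(k)}D}.
\]
Since $|\mathcal{R}_k|\ge k$ (ties only enlarge the set), we obtain
\[
\textrm{FDP}(\mathcal{R}_k,\mathcal{Y}^{\textrm{test}})\le \frac{V_k}{k}\le \frac{\bar S^{\textrm{test}}}{D}\cdot\frac{m}{kE_{(k)}}.
\]
Combining with the trivial bound $\textrm{FDP}\le 1$, and noting that $\rho\ge1$ or $\rho<1$ both give $\min\{1,\rho x\}\le \rho\min\{1,x\}+(\rho-1)_+$, we get the deterministic bound
\[
\textrm{FDP}(\mathcal{R}_k,\mathcal{Y}^{\textrm{test}})\le \hat\alpha(\mathcal{R}_k)+(\rho_{n,m}-1)_+, \qquad \rho_{n,m}=\bar S^{\textrm{test}}/D,
\]
uniformly over $k=1,\ldots,m$. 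The case $k=0$ is trivial, since $\mathcal{R}_0=\varnothing$.

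\textbf{Step 3 (consistency of $\rho_{n,m}$).} Under assumption (i), the weak law of large numbers applies separately to the calibration and test scores. Together with (iii), it gives $\bar S^{\textrm{test}}\to\mathbb{E}[S]$ and $D=\frac{n}{n+1}\cdot\frac1n\sum_i S_i\to\mathbb{E}[S]>0$ in probability as $n,m\to\infty$. By the continuous mapping theorem, $\rho_{n,m}\to1$ in probability. The finite second moment allows Chebyshev's inequality to be used for explicit rates, though only the first moment is essential. The positive mean ensures that $D$ is bounded away from zero with high probability.

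I expect the main obstacle to be the data-dependent selection of $k$, together with the $j$-dependence of the denominator in \eqref{eq_e_vari}. Both are neutralized by the construction above. Lower-bounding the denominator by the $j$-free quantity $D$ removes the $j$-dependence. The Markov-type counting bound yields a factor $\rho_{n,m}$ that is common to all $k$, so no union bound over the path is needed.
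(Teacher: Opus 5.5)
Your proof is correct. It shares the paper's skeleton: a deterministic bound $\textrm{FDP}(\mathcal{R}^{\textrm{PH-CS}},\mathcal{Y}^{\textrm{test}})\le\alpha^{\textrm{PH-CS}}+\Delta_{n,m}$, with $\Delta_{n,m}$ not depending on the chosen operating point, followed by a law of large numbers. The difference is how you obtain the factor that controls $\Delta_{n,m}$.

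The paper reuses the level-uniform bound \eqref{eq_level_uniform} from the proof of Theorem~\ref{theo_posthoc}, so its factor is the average oracle e-variable $\frac{1}{m}\sum_j E_j^*$. It then proves the two-sided limit $\frac{1}{m}\sum_j E_j^*\to 1$ by sandwiching this average against $\bar S^{\textrm{test}}/D$ and controlling the gap $R_{n,m}$ in \eqref{eq_R_bound}. That gap is the only place where the finite second moment is used.

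You instead drop $\hat S_{n+j}$ from the denominator before thresholding and compare null test scores directly with $E_{(k)}D$. Your factor $\rho_{n,m}=\bar S^{\textrm{test}}/D$ is exactly the upper end of the paper's sandwich \eqref{eq_E_star_sandwich}. Since only an upper bound on the factor matters, $R_{n,m}$ never arises, and your argument shows that assumption (iii) can be weakened to a finite positive mean. It also avoids the oracle e-variables and the e-BH self-consistency altogether.

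You also handle the truncation in \eqref{eq_alpha_k_def} explicitly. When $\alpha^{\textrm{PH-CS}}=1$ because $m/(kE_{(k)})>1$, the implication $j\in\mathcal{R}_k\Rightarrow E_j\ge m/(\alpha^{\textrm{PH-CS}}|\mathcal{R}_k|)$ can fail. So \eqref{eq_fdp_le_avg} is not justified in that case, although \eqref{eq_fdp_to_avg} still holds trivially since $\textrm{FDP}\le 1$. In return, the paper's route is more economical, since it recycles the machinery of Theorem~\ref{theo_posthoc}, and it yields the sharper two-sided statement about the oracle e-variables.

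One slip to fix. The inequality $\min\{1,\rho x\}\le\rho\min\{1,x\}+(\rho-1)_+$ is false when $\rho<1$, $x>1$ and $\rho x\ge 1$: the left side is $1$ and the right side is $\rho$. What you need, and what gives your stated bound, is $\min\{1,\rho x\}\le\min\{1,x\}+(\rho-1)_+$. This holds for $\rho\le 1$ by monotonicity. For $\rho>1$ it holds because $\rho x=x+(\rho-1)x\le x+(\rho-1)$ when $x<1$, and both sides of the minimum equal $1$ when $x\ge 1$.
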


\textit{Proof:} See Appendix~\ref{apx_prop_asymp}.

 Sec. \ref{sec_exp_baselines} will discuss an alternative baseline scheme that guarantees a finite-sample counterpart of condition \eqref{eq_prop_asymp} formulated as a PAC requirement \cite{gazin2024transductive, song2026everywhere}.}

% \section{Extensions}\label{sec_extension}

\section{Post-Hoc Risk-Controlled Selection}\label{sec_ph_rcs}
The PH-CS framework developed in Sec. \ref{sec_PHCS} measures the quality of each selection through the binary cost $\mathds{1}\{Y_{n+j}\leq c_j\}$ appearing in the numerator of the FDP \eqref{eq_FDP_def}: a selected input either meets the requirement \eqref{eq_perform_req}, incurring a zero cost, or not, incurring a cost equal to $1$. In many applications, however, the consequence of selecting an input is more naturally quantified by a \emph{continuous} loss $\mathcal{L}(X,Y)\in[0,1]$, such as a squared prediction error or a semantic distance to a reference output \cite{bai2026conformal}. We show below that the post-hoc mechanism in Algorithm \ref{algo_PHCS} extends directly to this setting, yielding a generalized strategy that we refer to as PH-RCS.

\subsubsection{Generalized Risk and Error Metric}
For each test input $X_{n+j}$, the cost of selection is captured by the loss
\begin{equation}\label{eq_gen_risk}
    L_j = \mathcal{L}(X_{n+j}, Y_{n+j}) \in [0,1].
\end{equation}
Replacing the binary indicator in \eqref{eq_FDP_def} with the continuous risk $L_j$ in \eqref{eq_gen_risk}, the \emph{generalized} FDP, defined as
\begin{equation}\label{eq_gfdp}
    \textrm{FDP}^{\textrm{g}}(\mathcal{R}, \mathcal{Y}^{\textrm{test}}) = \frac{\sum_{j=1}^{m} L_j\mathds{1}\{j\in\mathcal{R}\}}{\max\{1, |\mathcal{R}|\}},
\end{equation}
measures the per-selected unit risk, and its average yields the \emph{generalized} FDR
\begin{equation}
    \textrm{FDR}^{\textrm{g}}(R) = \mathbb{E} \big[\textrm{FDP}^{\textrm{g}}(\mathcal{R}, \mathcal{Y}^{\textrm{test}})\big].
\end{equation}
When we set $L_j = \mathds{1}\{Y_{n+j} \leq c_j\}$, the quantities $\textrm{FDP}^{\textrm{g}}$ and $\textrm{FDR}^{\textrm{g}}$ reduce to the FDP in \eqref{eq_FDP_def} and the FDR in \eqref{eq_FDR_def}, respectively.

\subsubsection{Risk-Adjusted E-Variables}
In Sec. \ref{sec_PHCS}, the conformal e-variable $E_j$ in \eqref{eq_e_vari} is defined by the inequality $\mathbb{E}$$[\mathds{1}\{H_j\} \cdot E_j] \leq 1$, where $H_j$ is the binary null event defined in \eqref{eq_hypo}. For a more general risk function, reference \cite{bai2026conformal} introduces \emph{risk-adjusted e-variables} as random variables satisfying the inequality
\begin{equation}\label{eq_risk_adj_eval}
    \mathbb{E}\big[L_j E_j^{\textrm{g}}\big] \leq 1.
\end{equation}
Intuitively, a large value of the generalized e-variable $E_j^{\textrm{g}}$ provides evidence that the risk $L_j$ is small, since a simultaneously large risk would violate the mean constraint \eqref{eq_risk_adj_eval}. Concrete constructions of risk-adjusted e-variables are provided in \cite{bai2026conformal}.

\subsubsection{Algorithm and Guarantee}
Given risk-adjusted e-variables $\{E_j^{\textrm{g}}\}_{j=1}^m$ satisfying condition \eqref{eq_risk_adj_eval}, PH-RCS applies Algorithm \ref{algo_PHCS} with the e-variable $E_j$ replaced by the generalized e-variable $E_j^{\textrm{g}}$ throughout. The nested candidate sets $\mathcal{R}_k$ in \eqref{eq_Rk_def}, the FDP estimate $\hat{\alpha}(\mathcal{R}_k)$ in \eqref{eq_alpha_k_def}, and the utility optimization in \eqref{eq_PHCS_choice} all retain the same form. We denote the resulting output by $(\mathcal{R}^{\textrm{PH-RCS}}, \alpha^{\textrm{PH-RCS}})$ to distinguish it from the binary setting in Sec. \ref{sec_PHCS}. The reported level $\alpha^{\textrm{PH-RCS}}$ now serves as a reliability certificate for the generalized FDP \eqref{eq_gfdp} rather than the standard FDP \eqref{eq_FDP_def}. The following result establishes the post-hoc reliability of this generalized procedure.

\begin{theorem}[\textbf{Post-hoc risk-control guarantee}]\label{thm_ph_rcs}
Assume that, for each $j = 1, \dots, m$, the statistic $E_j^{\text{\rm{g}}} \geq 0$ satisfies \eqref{eq_risk_adj_eval}. Then the PH-RCS output $(\mathcal{R}^{\text{\rm{PH-RCS}}}, \alpha^{\text{\rm{PH-RCS}}})$ of Algorithm \ref{algo_PHCS}, with $E_j$ replaced by $E_j^{\text{\rm{g}}}$, satisfies the average requirement
\begin{equation}\label{eq_ph_rcs_guarantee}
    \mathbb{E}\left[\frac{\text{\rm{FDP}}^{\text{\rm{g}}}(\mathcal{R}^{\text{\rm{PH-RCS}}}, \mathcal{Y}^{\text{\rm{test}}})}{\alpha^{\text{\rm{PH-RCS}}}}\right] \leq 1,
\end{equation}
where the average is evaluated with respect to the joint distribution of the selected set $\mathcal{R}^{\text{\rm{PH-RCS}}}$ and of the true test labels $\mathcal{Y}^{\text{\rm{test}}}$.
\end{theorem}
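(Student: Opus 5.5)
The plan is a pointwise domination argument in the spirit of e-BH, with a case split according to whether the cap in \eqref{eq_alpha_k_def} is active. Let $k^\star$ denote the (data-dependent) index with $\mathcal{R}^{\textrm{PH-RCS}}=\mathcal{R}_{k^\star}$, and define $Q=\textrm{FDP}^{\textrm{g}}(\mathcal{R}_{k^\star},\mathcal{Y}^{\textrm{test}})/\alpha^{\textrm{PH-RCS}}$. If $k^\star=0$, then $\mathcal{R}_0=\varnothing$, the numerator is $0$, and I adopt the convention $Q=0$.

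For $k^\star\ge 1$, every $j\in\mathcal{R}_{k^\star}$ satisfies $E^{\textrm{g}}_j\ge E^{\textrm{g}}_{(k^\star)}$. The statement does not specify how ties at $E^{\textrm{g}}_{(k^\star)}$ are handled in the denominator, so I will take $|\mathcal{R}_{k^\star}|=k^\star$ and break ties consistently.

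\emph{Uncapped case}, where $m/(k^\star E^{\textrm{g}}_{(k^\star)})\le 1$. Here $\alpha^{\textrm{PH-RCS}}=m/(k^\star E^{\textrm{g}}_{(k^\star)})$, and
\begin{align*}
Q=\frac{E^{\textrm{g}}_{(k^\star)}}{m}\sum_{j\in\mathcal{R}_{k^\star}}L_j\le \frac{1}{m}\sum_{j=1}^m L_jE^{\textrm{g}}_j .
\end{align*}
This holds since $L_j\ge 0$ and $E^{\textrm{g}}_j\ge 0$. The right-hand side does not depend on $k^\star$, which is why the bound is valid for any data-dependent choice of utility.

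\emph{Capped case}, where $\alpha^{\textrm{PH-RCS}}=1$. Here $Q=\textrm{FDP}^{\textrm{g}}\le 1$, because $L_j\in[0,1]$.

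Combining the cases gives
\begin{align*}
Q\le \mathds{1}\{\text{cap}\}+\mathds{1}\{\text{no cap}\}\,\frac{1}{m}\sum_{j=1}^m L_jE^{\textrm{g}}_j .
\end{align*}
Taking expectations and using \eqref{eq_risk_adj_eval} would finish the proof if the capped term could be absorbed. I would try to prove the stronger bound $Q\le \frac1m\sum_j L_jE^{\textrm{g}}_j$ also in the capped case. In that case $k^\star E^{\textrm{g}}_{(k^\star)}<m$, so I would compare $\sum_{j\in\mathcal{R}_{k^\star}}L_j$ with $\frac1m\sum_j L_jE^{\textrm{g}}_j$, combining the lower bound $E^{\textrm{g}}_j\ge E^{\textrm{g}}_{(k^\star)}$ with the fact that $U$ chose $k^\star$ over $k=0$.

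\textbf{Main obstacle.} I expect this capped-case step to be the main obstacle, and I suspect it fails under assumption \eqref{eq_risk_adj_eval} alone. Consider $m=1$ and a constrained-size utility with $r_{\min}=1$, so that $k^\star=1$ always. Suppose that with probability $1/2$ we have $(E^{\textrm{g}}_1,L_1)=(0,1)$, and with probability $1/2$ we have $(2,1)$. Then $\mathbb{E}[L_1E^{\textrm{g}}_1]=1$, yet $\mathbb{E}[Q]=\tfrac12\cdot 1+\tfrac12\cdot 2=1.5$.

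My plan is therefore as follows. First, verify this counterexample carefully against the exact definitions, including whether the capped case with $k^\star\ge1$ can be forced by the utility. If the counterexample stands, the capped-level statement needs an additional ingredient, and the natural candidates are:
\begin{itemize}
\item an extra hypothesis, for instance that $\mathcal{R}_k$ is admissible only when $kE^{\textrm{g}}_{(k)}\ge m$, which is exactly the uncapped regime, where the argument above gives $\mathbb{E}[Q]\le\frac1m\sum_j\mathbb{E}[L_jE^{\textrm{g}}_j]\le 1$;
\item or a restatement of the bound with the uncapped level.
\end{itemize}
In the uncapped regime the proof is complete as sketched.
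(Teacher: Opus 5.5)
Your uncapped case is the paper's argument: self-consistency plus $\mathds{1}\{E\ge t\}\le E/t$, giving the $\alpha$-free bound \eqref{eq_level_uniform_rcs}. Your version is actually a little more careful. You use only the one-sided implication $j\in\mathcal{R}_k\Rightarrow E^{\textrm{g}}_j\ge E^{\textrm{g}}_{(k)}=m/(\hat\alpha k)$. Appendix~\ref{apdx_proof_rcs} instead identifies $\mathcal{R}^{\textrm{PH-RCS}}$ with the e-BH output at level $\alpha^{\textrm{PH-RCS}}$, which need not hold, since e-BH at that level may reject more indices. You also need no tie convention, because $|\mathcal{R}_k|\ge k$ only shrinks the ratio.

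Your suspicion about the capped case is correct, and the gap is in the paper's proof, not in your plan. The paper fixes $\alpha\in(0,1)$, invokes \eqref{eq_self_consist_rcs}, and then substitutes $\alpha=\alpha^{\textrm{PH-RCS}}$. When the minimum in \eqref{eq_alpha_k_def} is active, however, $\alpha^{\textrm{PH-RCS}}=1$ while $E^{\textrm{g}}_{(k^\star)}<m/k^\star$. Indices in $\mathcal{R}_{k^\star}$ with $E^{\textrm{g}}_j<m/k^\star$ then violate self-consistency, so the step $\mathds{1}\{j\in\mathcal{R}\}\le\alpha|\mathcal{R}|E^{\textrm{g}}_j/m$ fails: capping lowers the denominator and inflates the ratio.

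Your counterexample is valid after two small fixes.
\begin{enumerate}
\item With $U=(1-\alpha)\mathds{1}\{r\ge 1\}$, the capped branch ties $k=0$ and $k=1$ at $U=0$. Either invoke the paper's ``any maximizer'' convention, or use an admissible utility such as $U(r,\alpha)=r(2-\alpha)$, which strictly prefers $k=1$.
\item Replace $E^{\textrm{g}}_1\in\{0,2\}$ by $\{\epsilon,2-\epsilon\}$ to avoid dividing by zero. This keeps $\mathbb{E}[L_1E^{\textrm{g}}_1]=1$ and gives $\mathbb{E}[Q]=\tfrac32-\tfrac{\epsilon}{2}>1$.
\end{enumerate}
The same mechanism also breaks Theorem~\ref{theo_posthoc}. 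Take $n=m=1$, all nulls true, the same utility, and a score constant in $Y$ on $\{Y\le c\}$ equal to $1$ or $3$ with probability $\tfrac12$ each. Then $E_1=E_1^*$ takes the values $\tfrac12,1,1,\tfrac32$ with probability $\tfrac14$ each, and $\mathbb{E}[Q]=\mathbb{E}[\max\{1,E_1\}]=\tfrac98$.

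So the statement as written admits no proof, and your proposed repair is the right one. If you report the uncapped level $m/(k^\star E^{\textrm{g}}_{(k^\star)})$, possibly above one, your bound $Q\le\frac1m\sum_jL_jE^{\textrm{g}}_j$ holds for every $k\ge 1$ simultaneously and for any utility. Alternatively, restrict the path to $k=0$ together with $\{k:kE^{\textrm{g}}_{(k)}\ge m\}$. If the cap is kept, your case split yields only $\mathbb{E}[Q]\le\mathbb{E}[\max\{1,\frac1m\sum_jL_jE^{\textrm{g}}_j\}]\le 2$.
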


\textit{Proof:} {\color{blue}See Appendix \ref{apdx_proof_rcs}.}

% {\color{blue}The framework also accommodates priority weighting of test inputs, which preserves the post-hoc guarantee. See Appendix \ref{apdx_proof_weighted} for details.}

\section{Experiments}\label{sec_exp}
{\color{blue}In this section, we evaluate PH-CS under the constrained-size utility \eqref{eq_utility_size_first}, the additive trade-off utility \eqref{eq_u_add}, and the multiplicative trade-off utility \eqref{eq_u_mult}. We compare PH-CS against conventional CS \cite{jin2023selection} and three data-driven baselines that instead estimate the FDP from p-variables, from a data split, or from a PAC bound in lieu of the average ratio condition \eqref{eq_heur_FDR}.} {\color{blue}Further results on additional datasets and regressors are collected in Appendix~\ref{apdx_add_exp}, and Appendix~\ref{apdx_asymp_behavior} empirically validates the large-sample behavior of Proposition~\ref{prop_asymp}.}

\subsection{Settings}
We consider two different experimental settings, including both synthetic and real data.

\subsubsection{Synthetic Data}
Following reference \cite{jin2023selection}, in each run, in the synthetic-data setting, we independently generate a training sample, a calibration sample, and a test batch with sizes $1000$, $n = 1000$, and $m = 100$, respectively, using the same data-generating mechanism. According to this mechanism, the covariate vector $X=[X^{(1)}, \ldots, X^{(20)}]$ has i.i.d. entries with uniform distribution $\textrm{U}([-1,1])$, and the response follows the relationship $Y = f(X) + \varepsilon$ with function $f(X) = 5\big(X^{(1)} X^{(2)} + e^{X^{(4)} - 1}\big)$ and additive noise satisfying one of the following models: \textit{(i)} homoscedastic noise: $\varepsilon \sim \mathcal{N}\big(0,(0.15)^2\big)$; and \textit{(ii)} heteroscedastic noise: $\varepsilon \mid X \sim \mathcal{N}\big(0, \tilde{\sigma}(X)^2\big)$, where $\tilde{\sigma}(X) = 0.1(5.5 - |f(X)|)/2$. All results are averaged over $100$ independent trials. As in \cite{jin2023selection}, we use gradient boosting as the regressor to obtain the prediction function $\mu(X)$. We set the target outcome to $c_j = c = 0$ for all test inputs $j = 1, \ldots, m$.

% {\color{blue}While the realized FDP \eqref{eq_FDP_def} measures reliability, we also report the \emph{power} to measure effectiveness, defined as the fraction of test points satisfying the requirement \eqref{eq_perform_req} that are selected, i.e.,
% \begin{align}\label{eq_power_def}
%     \textrm{Power}(\mathcal{R}, \mathcal{Y}^{\textrm{test}}) = \frac{\sum_{j=1}^m \mathds{1}\{j\in\mathcal{R}, Y_{n+j} > c_j\}}{\max\{1, \sum_{j=1}^m \mathds{1}\{Y_{n+j} > c_j\}\}}.
% \end{align}
% Also known as the true discovery rate, the power complements the FDP, with larger values preferable.}

\subsubsection{Real Data}
We also consider three real-world datasets, which are randomly split into training, calibration, and test subsets at each run:
\begin{itemize}
    \item \textbf{Recruitment}: The campus recruitment dataset \cite{jin2023selection} has $215$ samples, where $Y=1$ indicates a successful placement, and the covariates $X\in\mathbb{R}^{21}$ include features such as exam percentages, degree type, and specialization. In each run, the data are split into $45$ training samples, $85$ calibration samples, and $85$ test samples. We use a gradient boosting classifier as the prediction model $\mu(X)$.
    \item \textbf{Musk}: The Musk dataset \cite{bashari2023derandomized} contains $6598$ molecular conformations with covariates $X\in\mathbb{R}^{166}$ describing spatial shape features, and a binary label $Y$ indicating musk activity ($Y=1$) or not ($Y=0$). In each run, the data are split into training, calibration, and test subsets with proportions $(0.8, 0.1, 0.1)$. We use a support vector machine as the prediction model $\mu(X)$.
    \item \textbf{Shuttle}: The Shuttle dataset \cite{bashari2023derandomized} contains $58000$ samples with covariates $X\in\mathbb{R}^{9}$ representing sensor readings, and label $Y$ binarized by setting $Y=1$ for all classes other than class $1$, and $Y=0$ for class $1$. In each run, the data are split into training, calibration, and test subsets with proportions $(0.5,0.25,0.25)$. We use logistic regression as the prediction model $\mu(X)$.
\end{itemize}
{\color{blue}For all three real-data classification datasets, we use the same threshold $c_j=c=0$, so that the requirement $Y_{n+j}>c_j$ identifies the positive class $Y_{n+j}=1$.}

\begin{figure*}[t]
    \centering
    {
    \includegraphics[width = 0.294\textwidth]{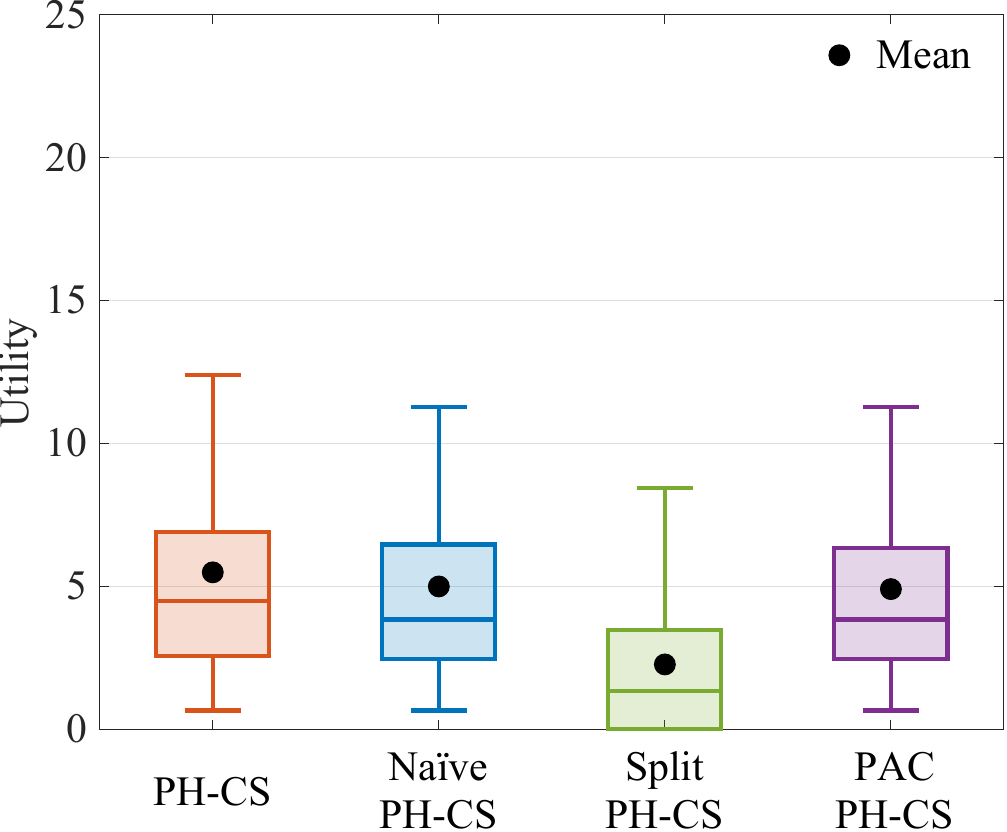}
    \includegraphics[width = 0.3\textwidth]{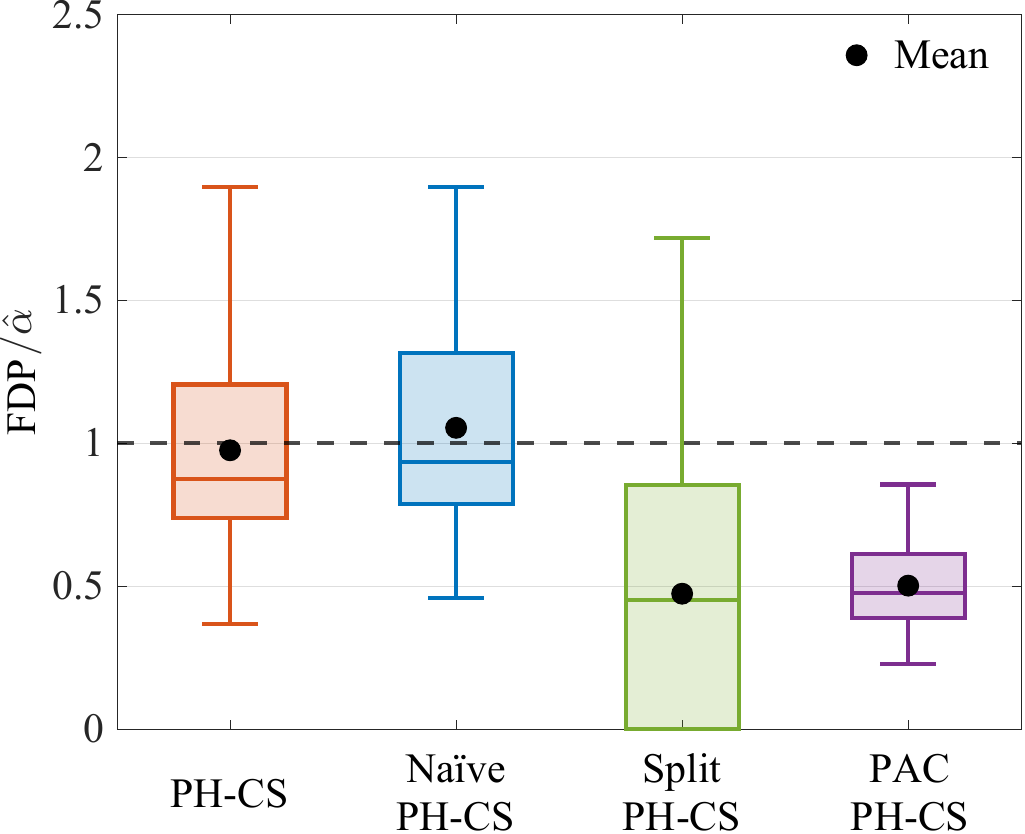}
    % \\ \includegraphics[width = 0.32\textwidth]{./figure/F_rec_add_FDP_baselines.pdf}
    \includegraphics[width = 0.3\textwidth]{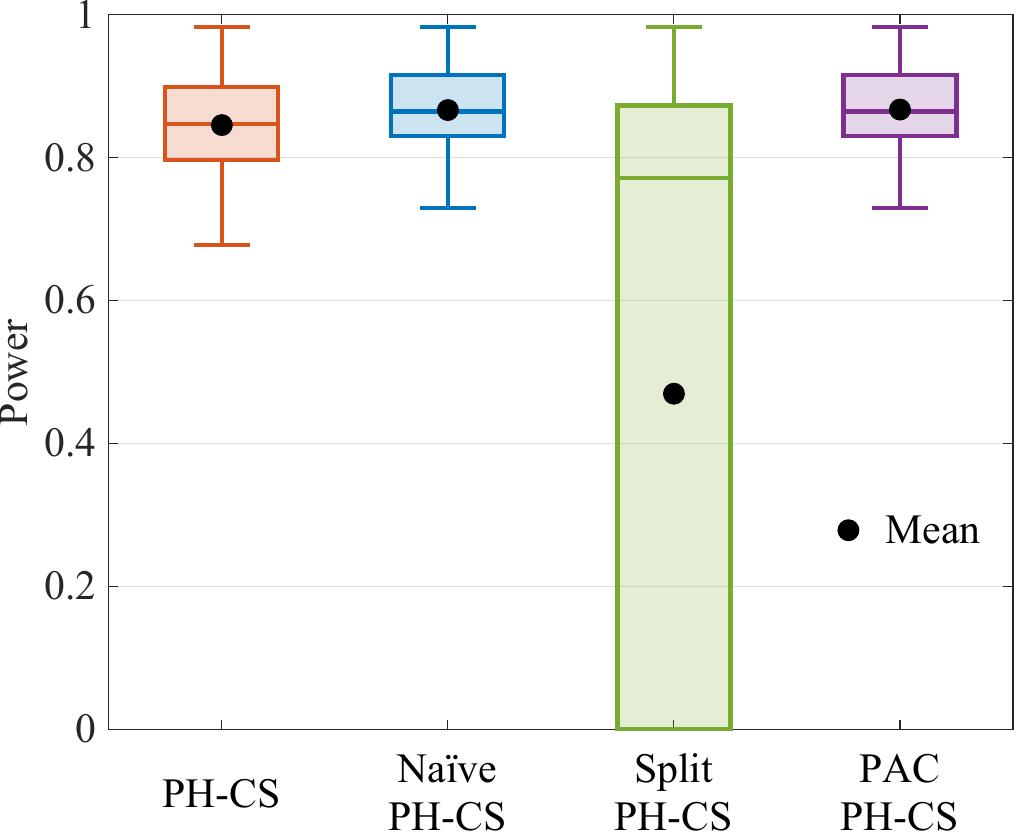}
    }
    \vspace{-1.2mm}
    \caption{{\color{blue}Box plots of the realized utility (left), the ratio between the realized FDP \eqref{eq_FDP_def} and the declared level $\hat{\alpha}$ (middle), and the power (right) under the additive trade-off utility \eqref{eq_u_add} on the Recruitment dataset, for PH-CS and the three baselines of Sec.~\ref{sec_exp_baselines}.}}
    \vspace{-5mm}
    \label{fig_baselines_rec}
\end{figure*}
\begin{figure*}[t]
    \centering
    {
    \hspace{-2mm}\includegraphics[width = 0.305\textwidth]{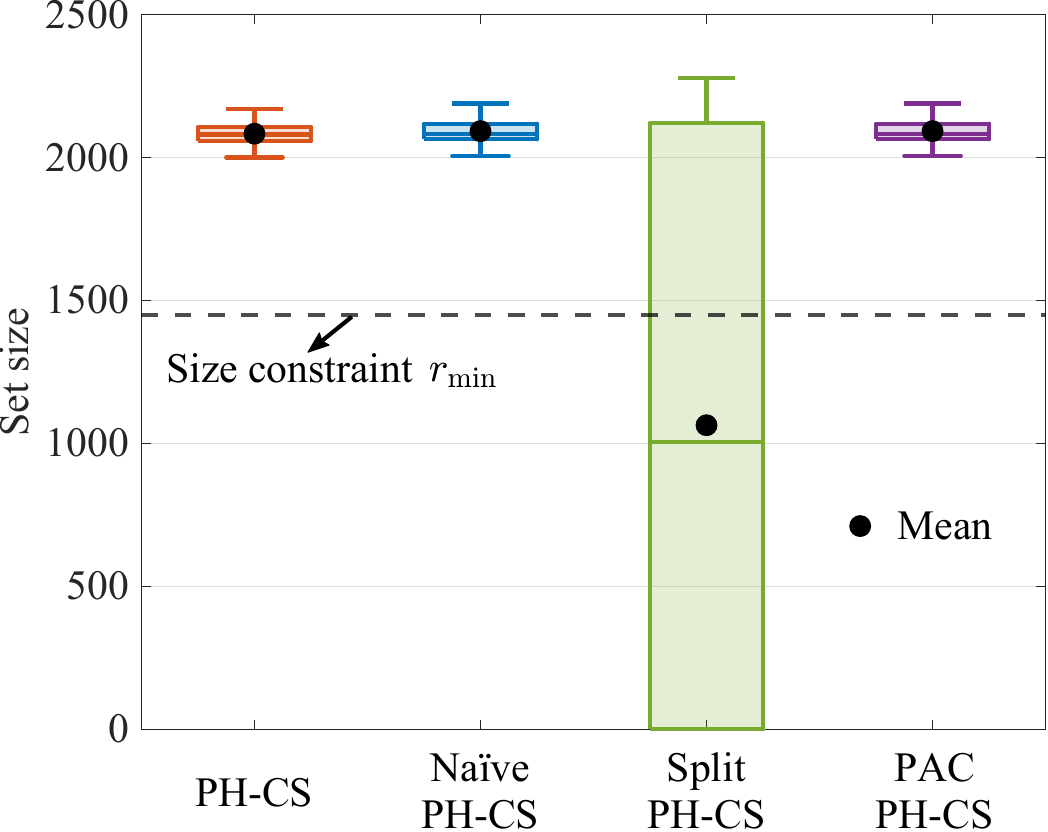}
    \includegraphics[width = 0.3\textwidth]{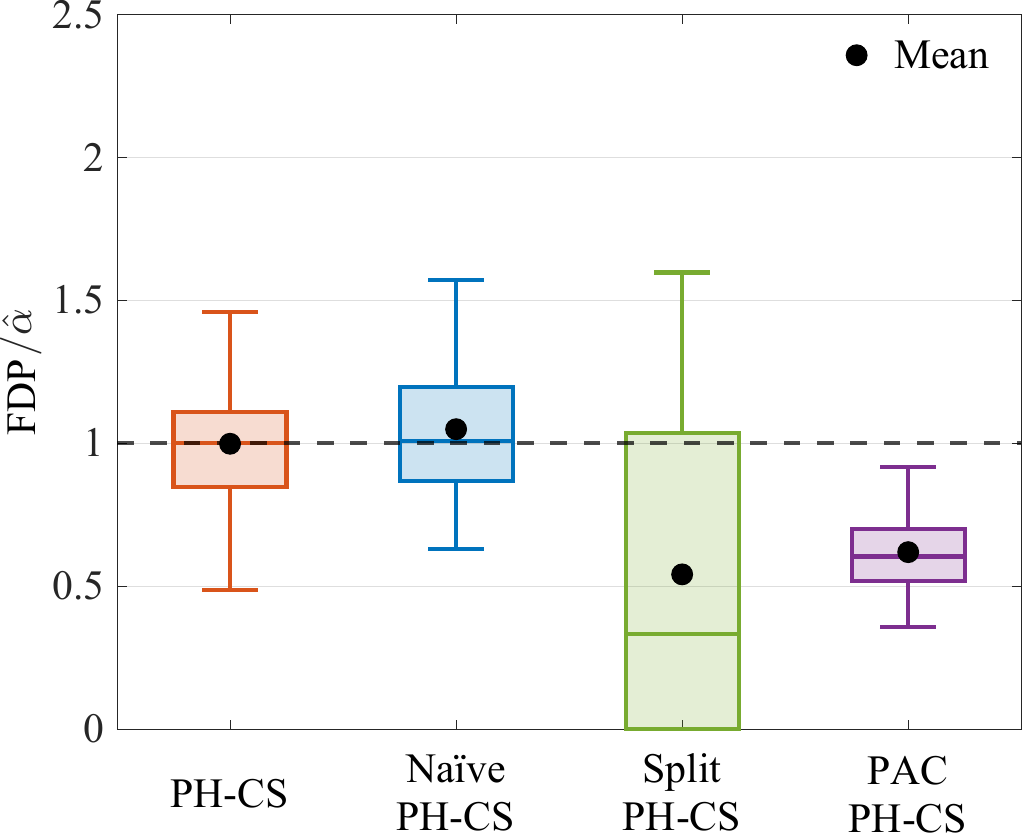}
    % \\ \includegraphics[width = 0.32\textwidth]{./figure/F_shuttle_Csize_FDP_baselines.pdf}
    \includegraphics[width = 0.3\textwidth]{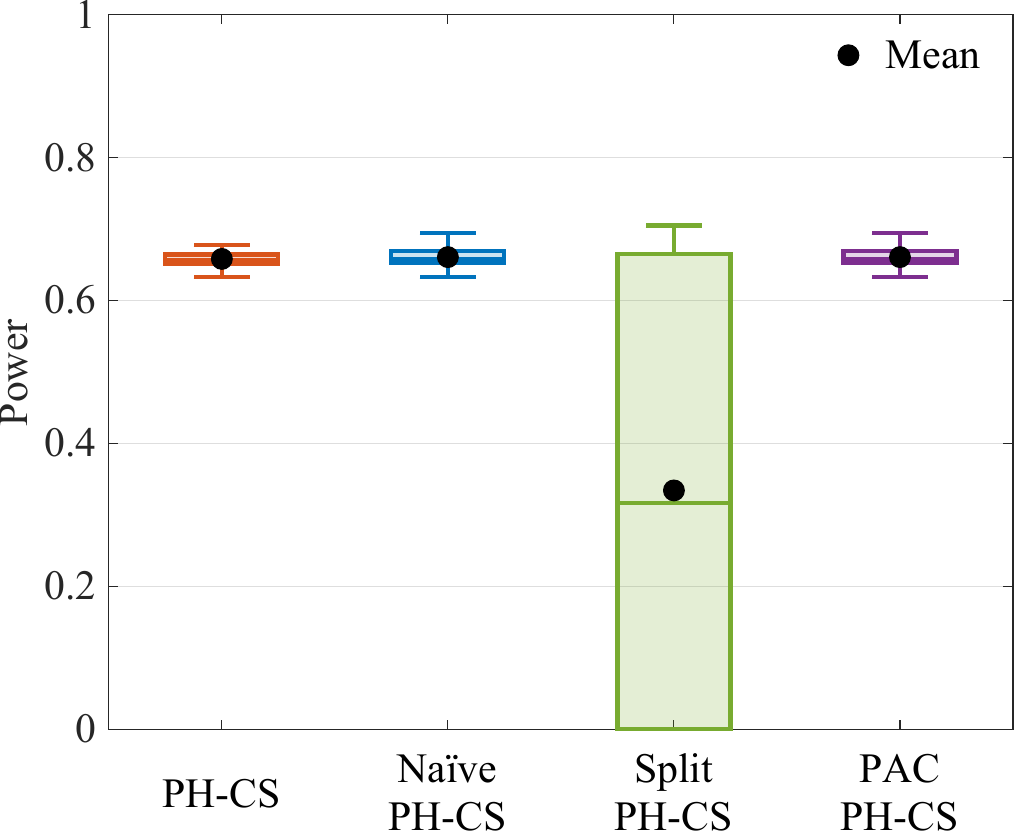}
    }
    \vspace{-1.2mm}
    \caption{{\color{blue}Box plots of the selected set size (left), the ratio between the realized FDP \eqref{eq_FDP_def} and the declared level $\hat{\alpha}$ (middle), and the power (right) under the constrained-size utility \eqref{eq_utility_size_first} on the Shuttle dataset, for PH-CS and the three baselines of Sec.~\ref{sec_exp_baselines}.}}
    \vspace{-5mm}
    \label{fig_baselines_shuttle}
\end{figure*}

\subsubsection{Score Function}
For the real-data classification tasks, the prediction $\mu(X)$ represents the predicted probability for class $Y=1$, while for the synthetic regression setting, the model output is first mapped to the interval $(0,1)$ via min-max normalization using the range observed on the training set.
As discussed in Sec. \ref{sec_conv_prelim}, the conformity score $S(X,Y)$ should capture the likelihood that the outcome exceeds $Y$, and must be non-negative and non-increasing in $Y$ (see \eqref{eq_sco_mon}). Following \cite{jin2023selection}, given the target $c$, we adopt the piece-wise constant function 
\begin{equation}\label{eq_score}
    S(X, Y) = \begin{cases}
    \Big(\dfrac{\mu(X)}{1 - \mu(X)}\Big)^{\gamma}, & \text{if } Y \leq c, \\[6pt]
    \delta, & \text{if } Y > c,
    \end{cases}
\end{equation}
where $\gamma > 0$ is a tuning parameter and $\delta = 10^{-6}$ is a small constant.
When $Y \leq c$, the score increases with the predicted quality $\mu(X)$, and a larger hyperparameter $\gamma$ amplifies this contrast, improving the discriminative power of the e-BH procedure for well-trained predictors. When $Y > c$, the score is clipped to $\delta$, reducing the denominator of the e-variable \eqref{eq_e_vari} and boosting the e-values for promising inputs.
{\color{blue}A larger parameter $\gamma$ sharpens the score \eqref{eq_score} toward the hard threshold used by indicator-type e-variables \cite{bashari2023derandomized, lee2025selection, bai2026conformal}, which yield a single selection set, whereas the continuous score generates the entire path \eqref{eq_Rk_def}. We use $\gamma=3$ on the well-separated synthetic data and $\gamma=50$ on the harder real data.
% As validity is independent of $\gamma$ by Theorem~\ref{theo_posthoc} and power is robust to it, an ablation is provided in Appendix~\ref{...}.
}

{\color{blue}
\subsubsection{Performance Metrics}
We report the selected set size $|\mathcal{R}|$, the realized FDP \eqref{eq_FDP_def}, its declared estimate $\alpha^{\textrm{PH-CS}}$ \eqref{eq_PHCS_alpha}, and the utility \eqref{eq_utility_obj}, all introduced earlier, together with the \emph{power}, i.e., the fraction of test points satisfying the requirement \eqref{eq_perform_req} that are selected:
\begin{align}\label{eq_power_def}
    \textrm{Power}(\mathcal{R}, \mathcal{Y}^{\textrm{test}}) = \frac{\sum_{j=1}^m \mathds{1}\{j\in\mathcal{R}, Y_{n+j} > c_j\}}{\max\{1, \sum_{j=1}^m \mathds{1}\{Y_{n+j} > c_j\}\}}.
\end{align}
Also known as the true discovery rate, the power  complements the FDP, with larger values preferable.}

{\color{blue}
\subsection{Baseline Methods}\label{sec_exp_baselines}
Beside conventional CS \cite{jin2023selection}, which targets a pre-specified FDR level $\alpha_{\textrm{max}}$, we compare PH-CS against three data-driven baselines that also address the post-hoc objective \eqref{eq_utility_obj}. All three build on the BH counterpart of the e-BH path \eqref{eq_Rk_def} used by PH-CS, spanning the same range of set sizes, and differ only in how the FDP estimate of the selected set is constructed.

\begin{itemize}
    \item[\textit{1)}] \textit{Naïve PH-CS} takes the nominal BH level of each candidate set as its FDP estimate and keeps the most useful operating point. Since this level is chosen after inspecting the data, it carries no validity guarantee.
    \item[\textit{2)}] \textit{Split PH-CS} restores a guarantee through data splitting. It selects the level on one half of the calibration data, and runs conventional CS at that fixed level on the other half. Thus, the guarantee of Theorem \ref{theo_conv_fdr} applies, but at the cost of calibrating on only part of the data.
    \item[\textit{3)}] \textit{PAC PH-CS} replaces the in-expectation reliability of PH-CS with a high-probability bound that upper bounds the realized FDP of all candidate sets simultaneously \cite{song2026everywhere, gazin2024transductive}. The bound holds with probability $1-\eta$, but requires the extra confidence level $\eta$ and, being simultaneous over all sets, may be loose at the selected operating point.
\end{itemize}

The constructions of the three FDP estimates are detailed in Appendix \ref{apx_baselines}.
}

\begin{figure*}[t]
    \centering
    {\hspace{-5mm}
    \includegraphics[width = 0.305\textwidth]{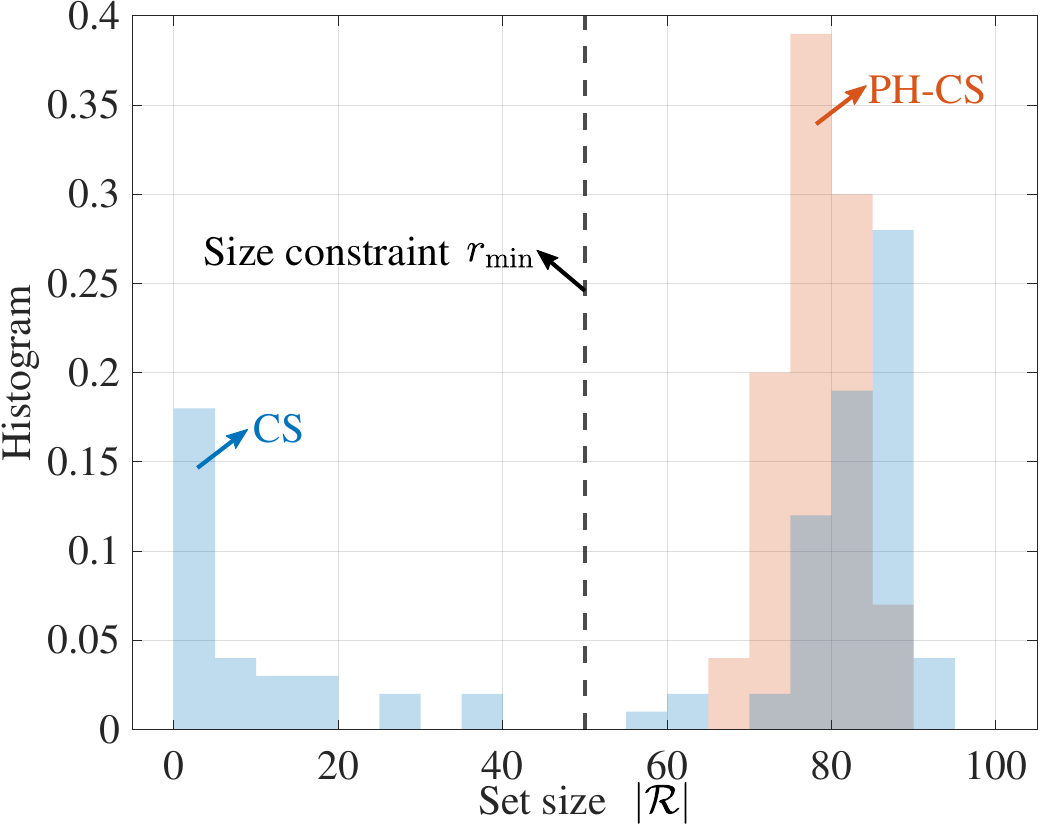}
	\includegraphics[width = 0.3\textwidth]{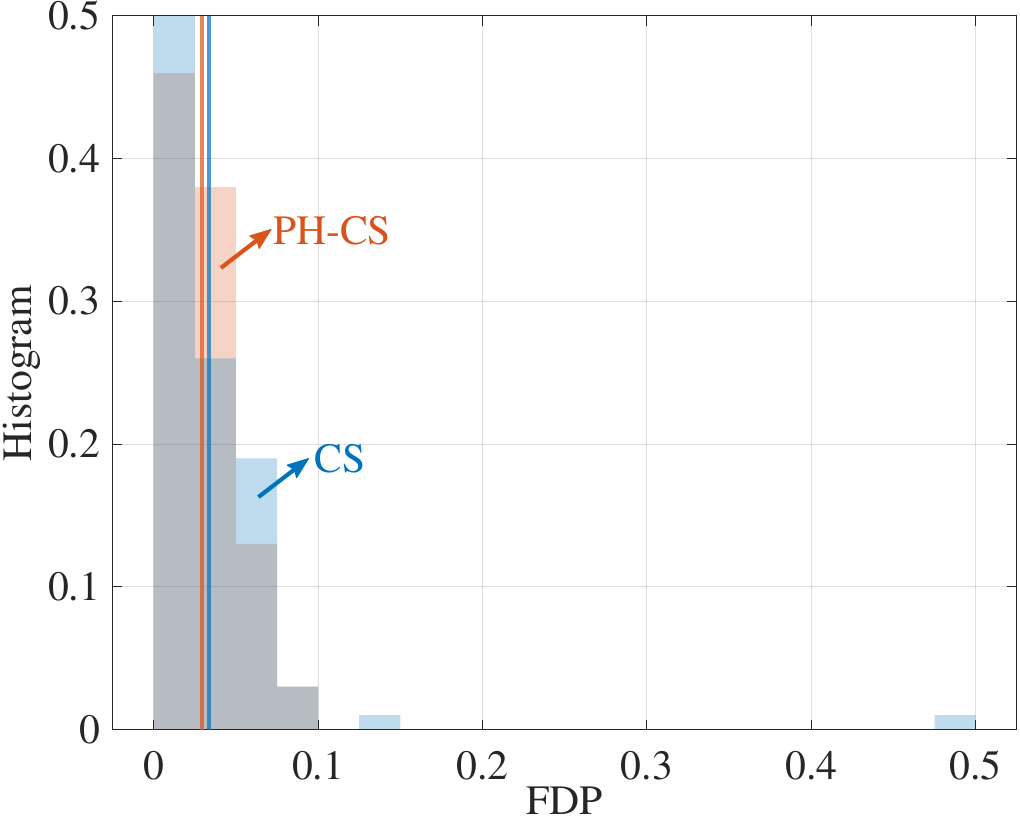}
	\includegraphics[width = 0.3\textwidth]{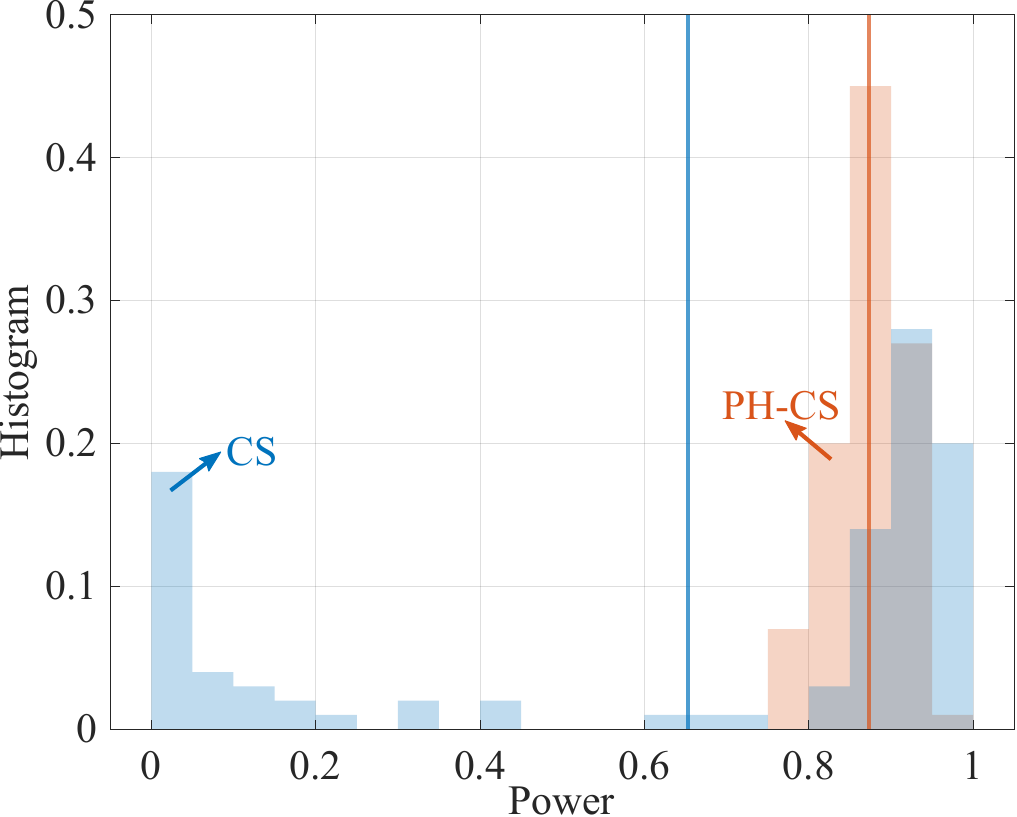}\\

    \includegraphics[width = 0.3\textwidth]{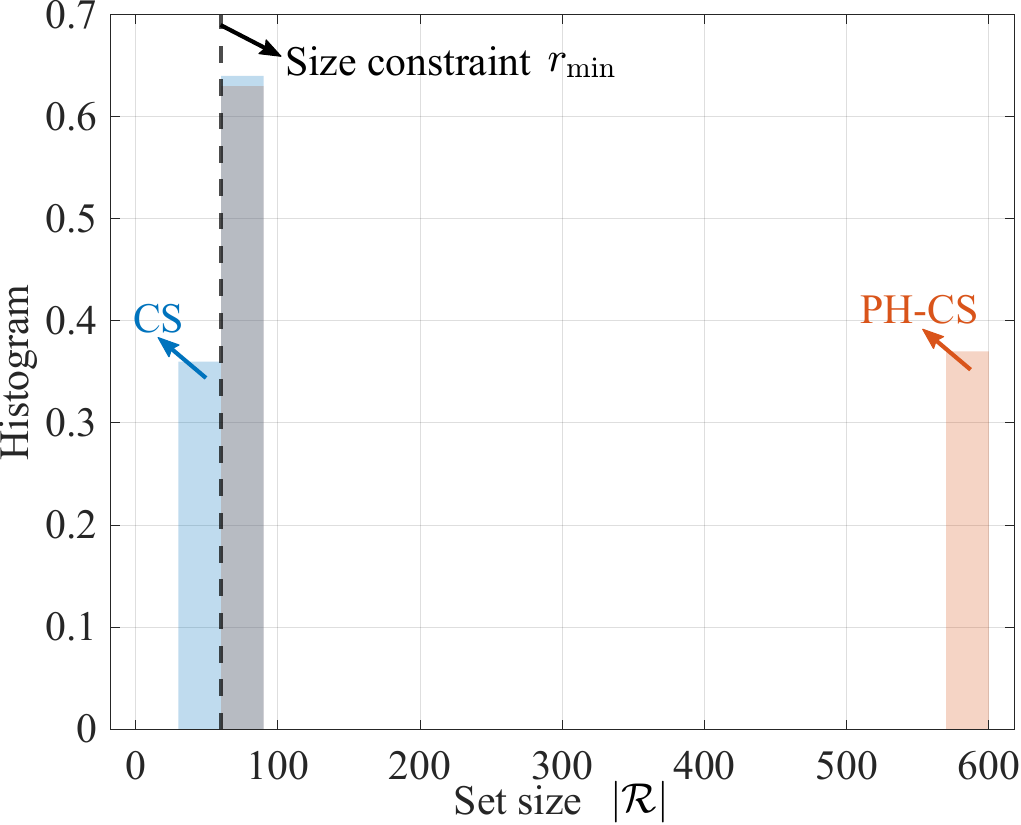}
    \includegraphics[width = 0.3\textwidth]{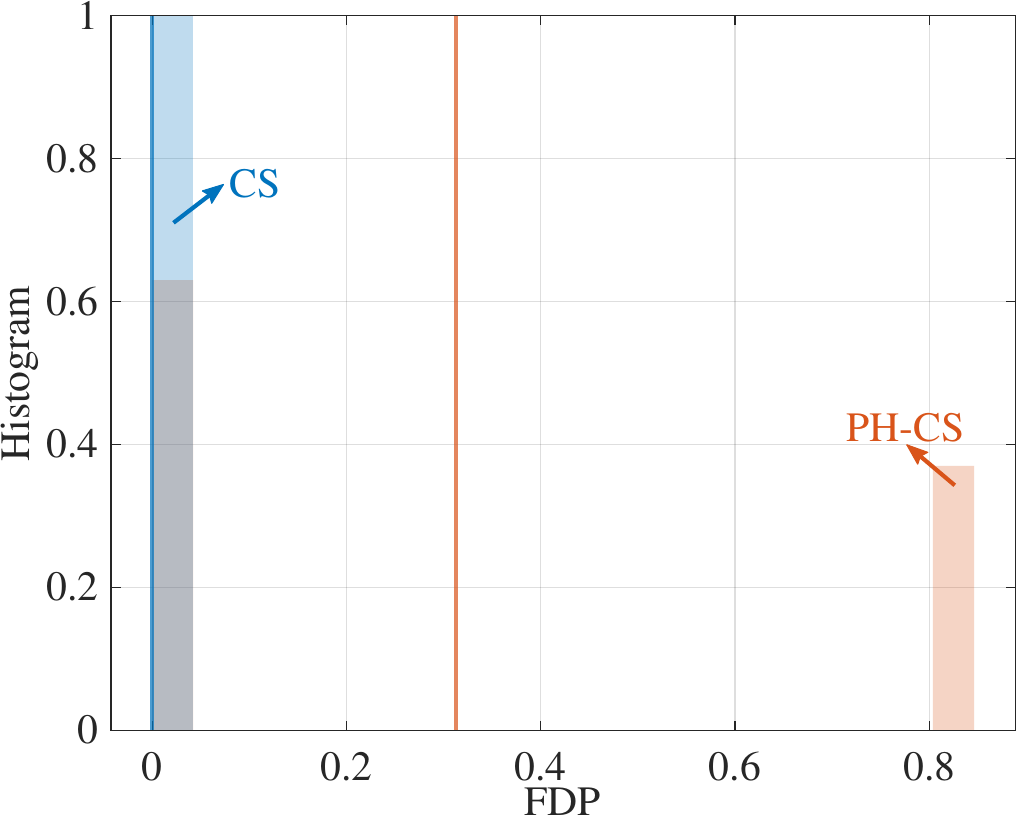}
    \includegraphics[width = 0.3\textwidth]{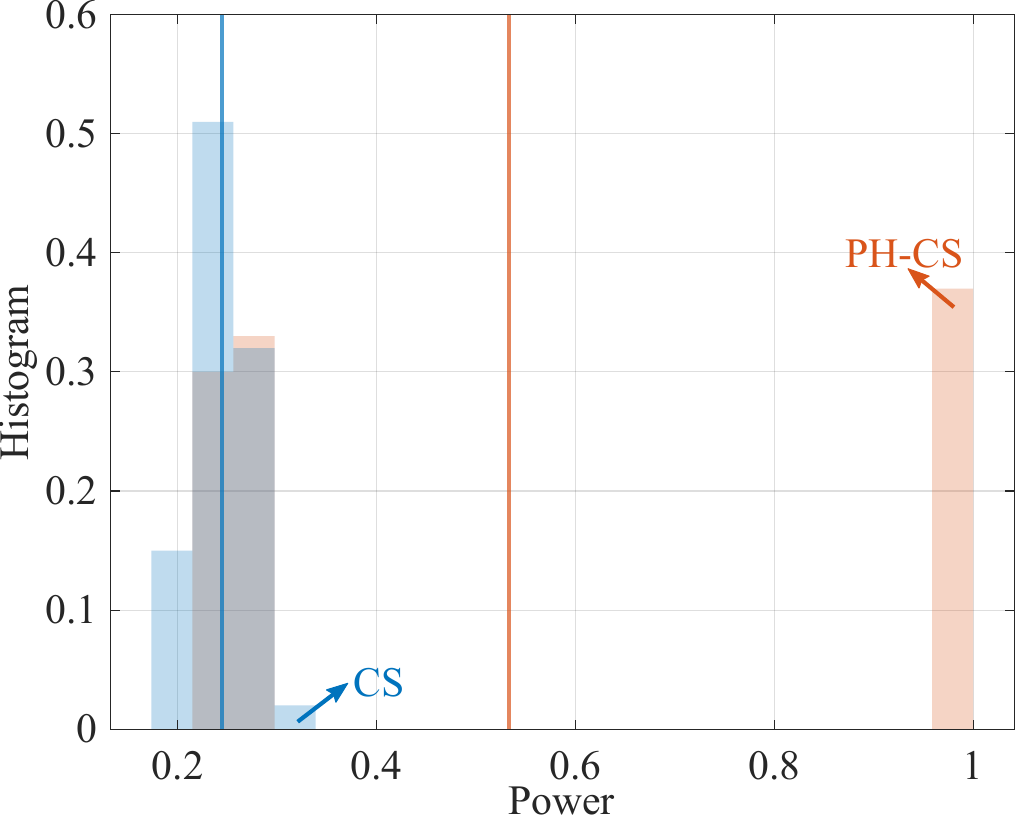}
    }
    \vspace{-1.2mm}
    \caption{{\color{blue}Histograms of the selected set size (left), realized FDP (middle), and power (right) under the constrained-size utility \eqref{eq_utility_size_first} on the synthetic data with homoscedastic noise (top row) and the Musk dataset (bottom row). In the middle and right panels, the blue and red vertical lines indicate the average values for CS and PH-CS, respectively.}}
    \vspace{-5mm}
    \label{fig_Csize}
\end{figure*}

{\color{blue}
\subsection{Comparison with Baselines}\label{sec_exp_baselines_results}
We compare PH-CS against the three post-hoc baselines of Sec.~\ref{sec_exp_baselines} on the Recruitment dataset under the additive trade-off utility \eqref{eq_u_add} with $U(r,\alpha)=\log r-\lambda\log(1/(1-\alpha))$, $r=|\mathcal{R}|$, and $\lambda=15$, and on the Shuttle dataset under the constrained-size utility \eqref{eq_utility_size_first} with $r_{\textrm{min}}=0.1m$. The PAC PH-CS approach uses confidence level $\eta=0.1$.

Fig.~\ref{fig_baselines_rec} and Fig.~\ref{fig_baselines_shuttle} report the achieved utility or selected set size, the ratio between the realized FDP \eqref{eq_FDP_def} and the declared level $\hat{\alpha}$, and the power \eqref{eq_power_def}.
PH-CS attains the highest utility, meets the size constraint in every run, and remains competitive in power. Its average ratio between the realized FDP and the declared level stays closest to one, so its FDP estimate satisfies the post-hoc reliability \eqref{eq_post_FDP_first} and is also the tightest among all methods. 

Na\"ive PH-CS instead reports the nominal level of the selected set as its FDP estimate, which is chosen by maximizing the utility and is no longer valid in advance. Its average ratio exceeds one, so the post-hoc reliability \eqref{eq_post_FDP_first} no longer holds. Split PH-CS and PAC PH-CS keep the average ratio below one and remain reliable, but are overly conservative. Split PH-CS uses only half of the data for calibration, which weakens its selection, yielding the lowest utility on the Recruitment dataset and frequently violating the size constraint on the Shuttle dataset. PAC PH-CS upper bounds the FDP of all candidate sets at once, so its estimate is loose at the selected set. Overall, PH-CS is the only method that is both reliable and effective for the selection task.

As PH-CS outperforms all three post-hoc baselines, we adopt it as our method and compare it with conventional CS \cite{jin2023selection} in the remaining experiments.}

\subsection{Constrained-Size Utility}\label{sec_exp_csize}
Under the constrained-size utility in \eqref{eq_utility_size_first} {\color{blue}with $r=|\mathcal{R}|$}, the goal is to select, for each test batch, a set of test samples of size at least $r_{\textrm{min}}$, while keeping the realized FDP as small as possible. In our experiments, we set $r_{\textrm{min}}=0.5m$ for synthetic data and $r_{\textrm{min}}=0.1m$ for real data, corresponding to selecting at least $50\%$ and $10\%$ of the test batch, respectively. {\color{blue}The resulting histograms of the selected set size, realized FDP, and power are shown in Fig.~\ref{fig_Csize} for the synthetic data with homoscedastic noise and the Musk dataset.} To enable a fair comparison with CS \cite{jin2023selection}, we set the target FDR level $\alpha_{\textrm{max}}$ in \eqref{eq_BH_k}--\eqref{eq_BH_set} to the empirical average of the declared levels $\alpha^{\textrm{PH-CS}}$ produced by PH-CS over the $100$ trials. This way, both PH-CS and CS guarantee the same FDR level, while PH-CS also controls the selected set size for test run.

% A consistent trend is observed{\color{blue}}.
In the {\color{blue}left panels} of Fig.~\ref{fig_Csize}, the selected sets produced by PH-CS are seen to always satisfy the minimum-size requirement, whereas the selected sets produced by CS fall below this threshold in a non-negligible fraction of realizations. At the same time, the {\color{blue}middle panels} show that PH-CS is generally competitive in terms of FDP, although the capacity for size control may come at the cost of a larger FDP on test batches, most notably on the Musk dataset. {\color{blue}The right panels further show that PH-CS attains higher power, as it reliably selects a set of the required size while CS, tied to a single pre-specified level, often selects far fewer points.} Overall, these results highlight the practical advantage of the post-hoc construction in \eqref{eq_PHCS_choice}. Unlike CS, which is tied to a single pre-specified FDR level, PH-CS can adaptively select an operating point so as to satisfy the size requirement $r_{\textrm{min}}$ in each realization, while obtaining competitive FDR performance as compared to CS and offering reliable estimates of the FDP. This latter property is illustrated in Sec.~\ref{sec_FDP_estimate}.

\begin{figure*}[t]
    \centering
    {
	\includegraphics[width = 0.294\textwidth]{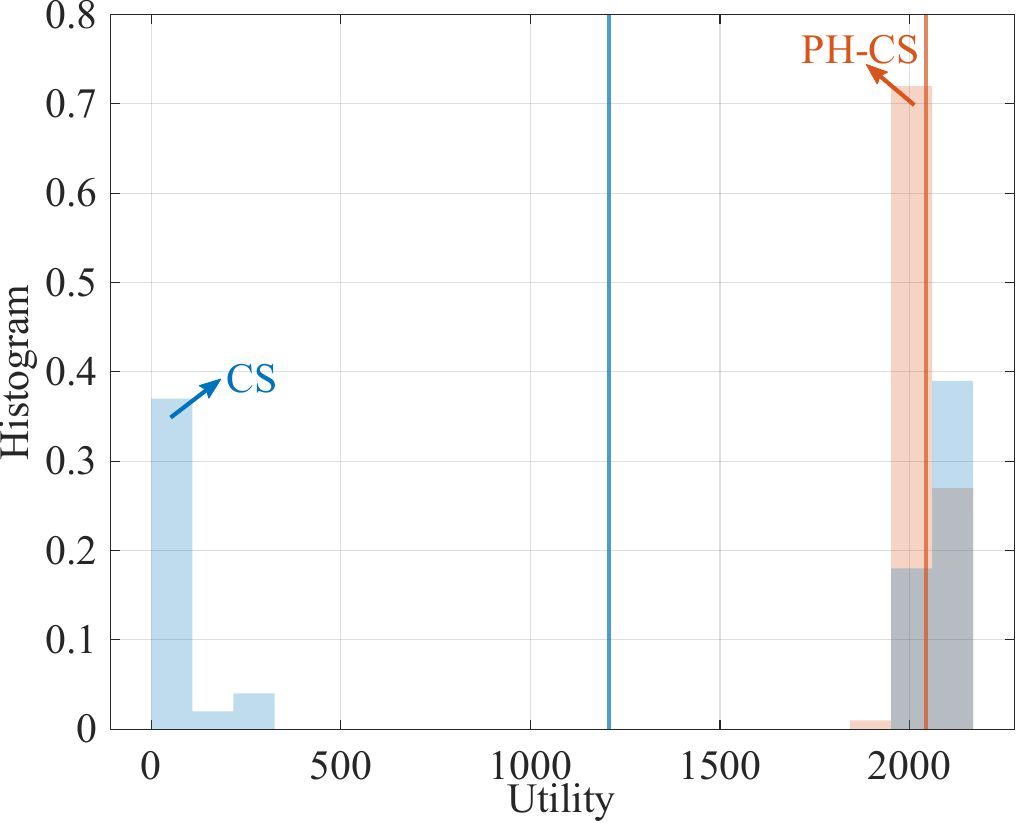}
    \includegraphics[width = 0.295\textwidth]{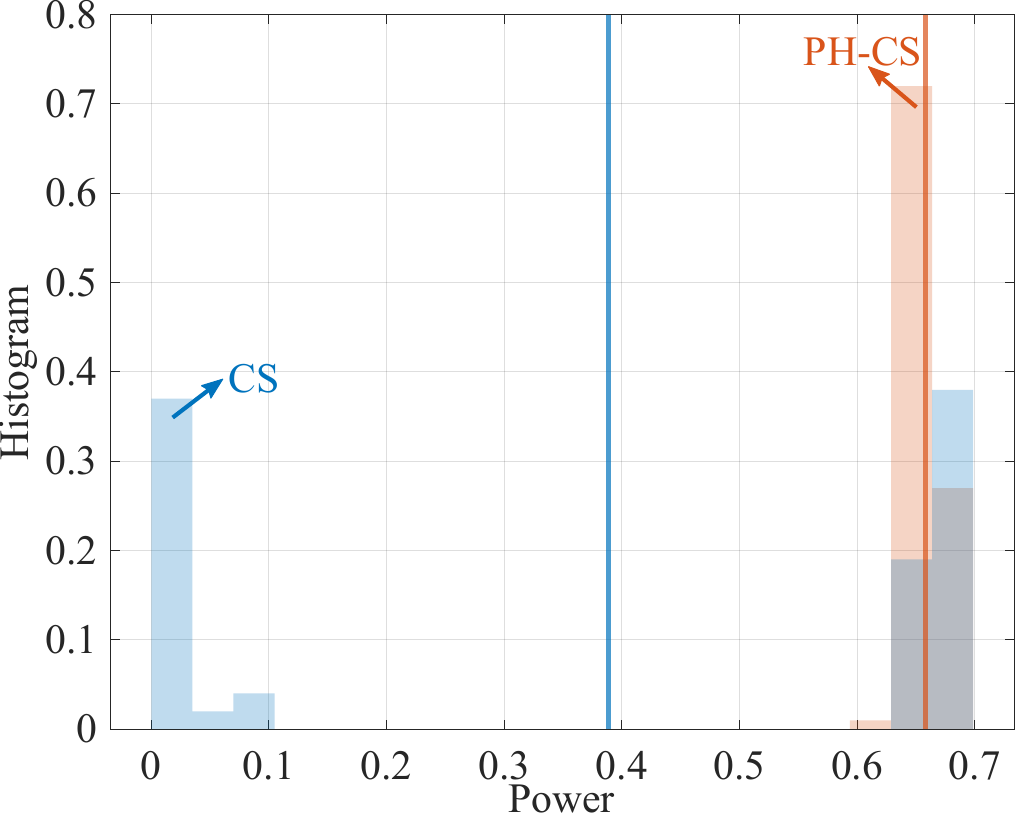}
    \includegraphics[width = 0.31\textwidth]{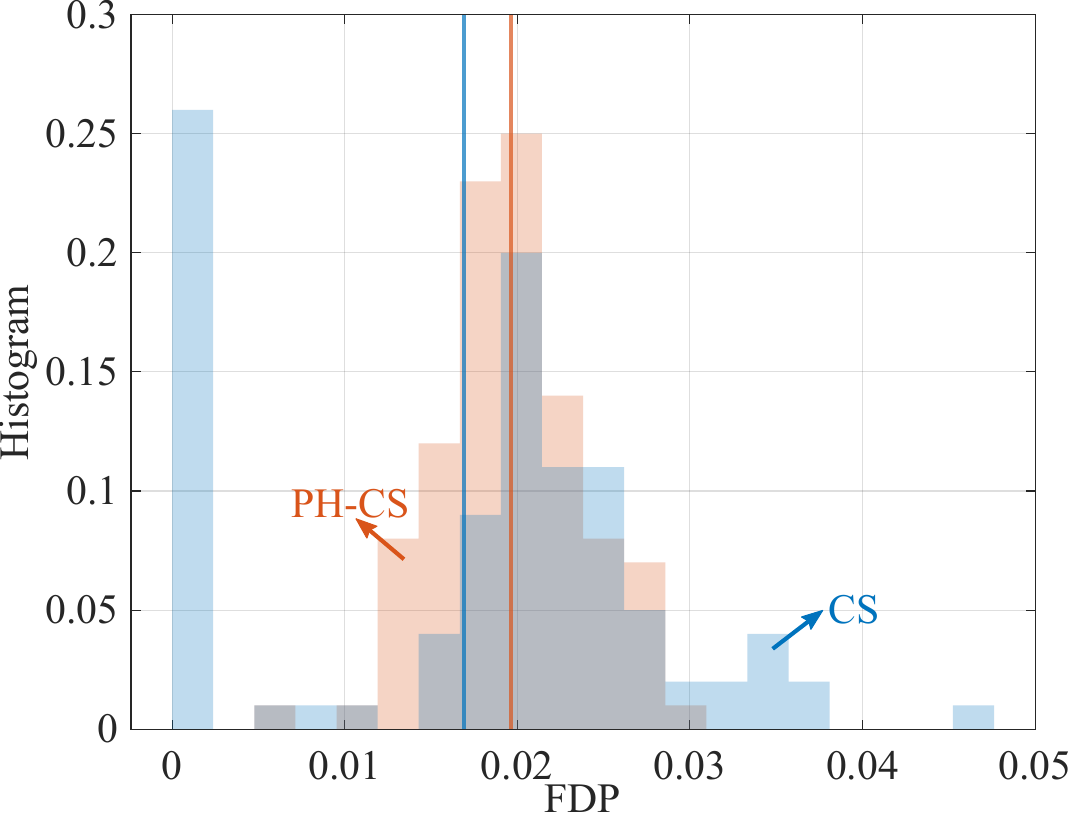}
	\includegraphics[width = 0.3\textwidth]{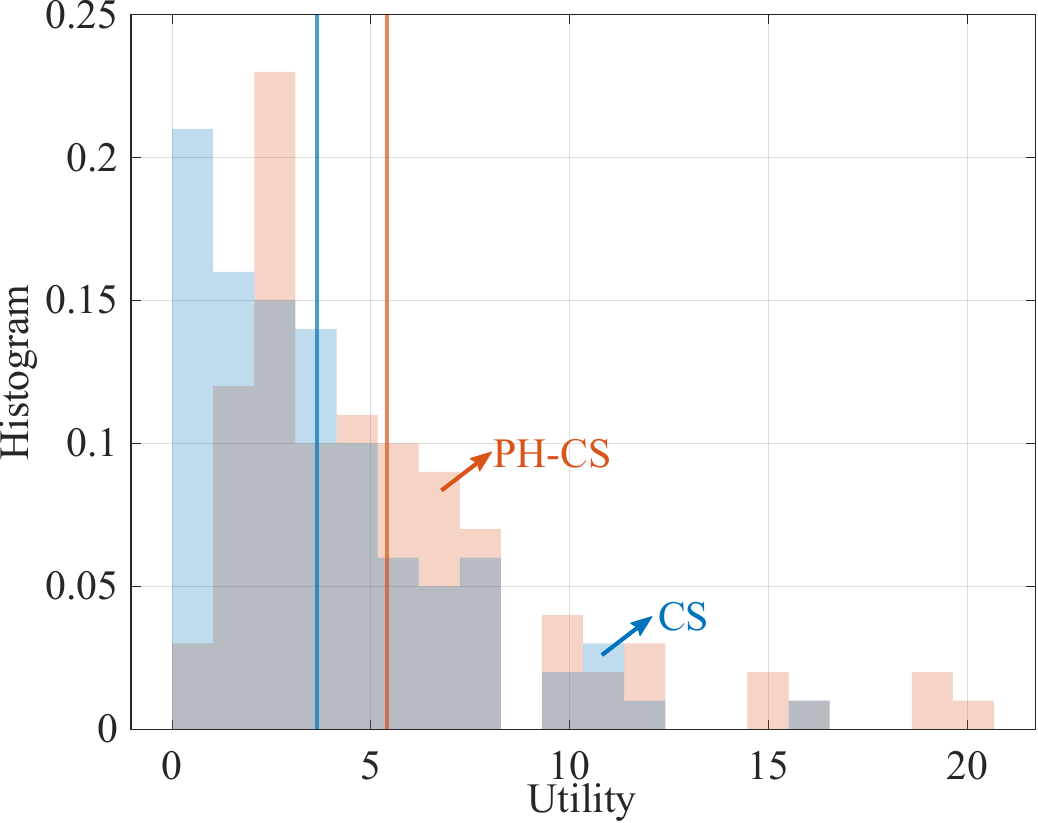}
    \includegraphics[width = 0.3\textwidth]{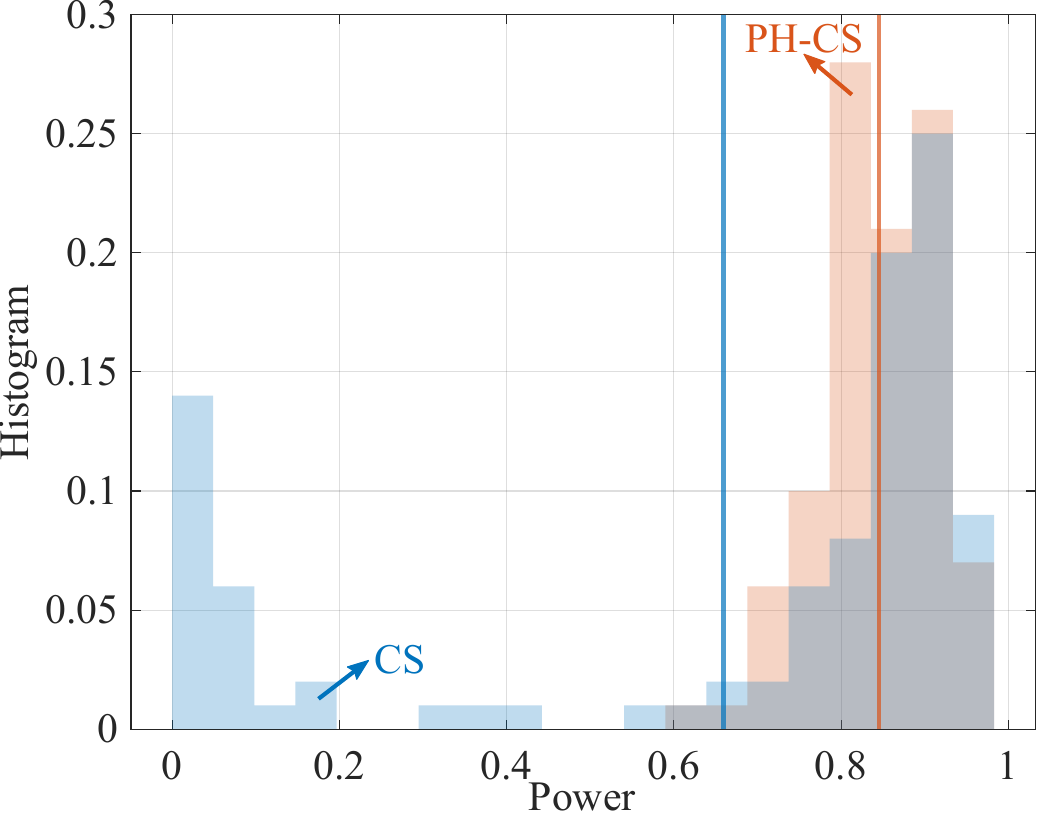}
    \includegraphics[width = 0.3\textwidth]{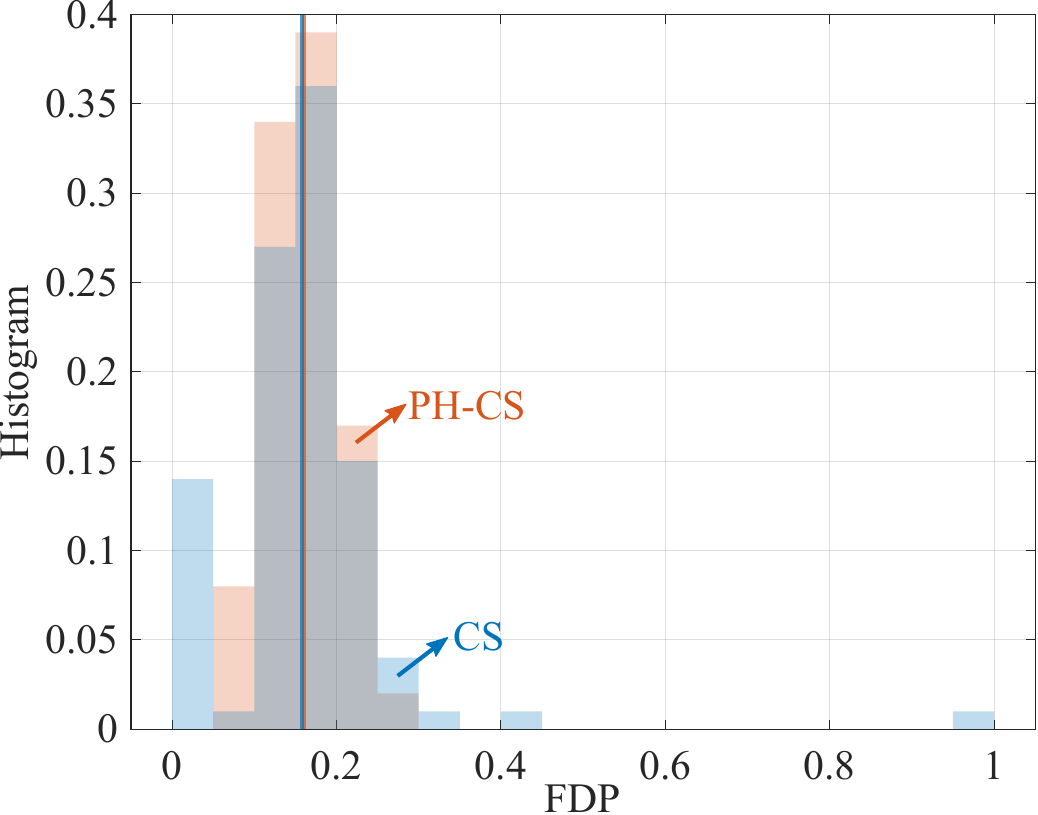}
    }
    \vspace{-1.2mm}
    \caption{{\color{blue}Histograms of the realized utility (left), power (middle), and FDP (right) under the multiplicative utility \eqref{eq_u_mult} on the Shuttle dataset (top row) and the additive utility \eqref{eq_u_add} with weighted set size \eqref{eq_w_size} on the Recruitment dataset (bottom row). In each panel, the blue and red vertical lines indicate the average values for CS and PH-CS, respectively.}}
    \vspace{-5mm}
    \label{fig_power_aware}
\end{figure*}

{\color{blue}
\subsection{Power-Aware Utility}\label{sec_power_aware}
We now consider two utilities that target the correct selections rather than the raw set size: (\emph{i}) the multiplicative utility \eqref{eq_u_mult} with $U(r,\alpha)=r(1-\alpha)$ and $r=|\mathcal{R}|$; and (\emph{ii}) the additive utility \eqref{eq_u_add} with $U(r,\alpha)=\log r-\lambda\log(1/(1-\alpha))$, $\lambda=15$, and the weighted size $r$ given by \eqref{eq_w_size}, where $\pi_j$ estimates the probability of the event $\{Y_{n+j}>c_j\}$ based on the model. Specifically, for classification, we set $\pi_j = \mu(X_{n+j})$, and for regression, we model $Y_{n+j} \mid X_{n+j} \sim \mathcal{N}(\mu(X_{n+j}), \sigma(X_{n+j})^2)$, with $\sigma(X)$ estimated from the training residuals, and set
\begin{align}\label{eq_pi_gaussian}
    \pi_j = 1 - \Phi\left(\frac{c_j - \mu(X_{n+j})}{\sigma(X_{n+j})}\right),
\end{align}
where $\Phi$ is the standard normal cumulative distribution function.

Fig.~\ref{fig_power_aware} reports the realized utility, power, and FDP for the multiplicative utility on the Shuttle dataset and the weighted utility on the Recruitment dataset, with CS matched to the average declared level as before. Under both utilities, PH-CS attains a higher average utility and a higher power than CS at a comparable FDP. This shows that, by rewarding the estimated correct selections, the utility can be steered toward power while the FDP remains controlled.
}

\subsection{Empirical Evaluation of the FDP Estimate}\label{sec_FDP_estimate}
\begin{figure*}[t]
    \centering
    {
    \includegraphics[width = 0.305\textwidth]{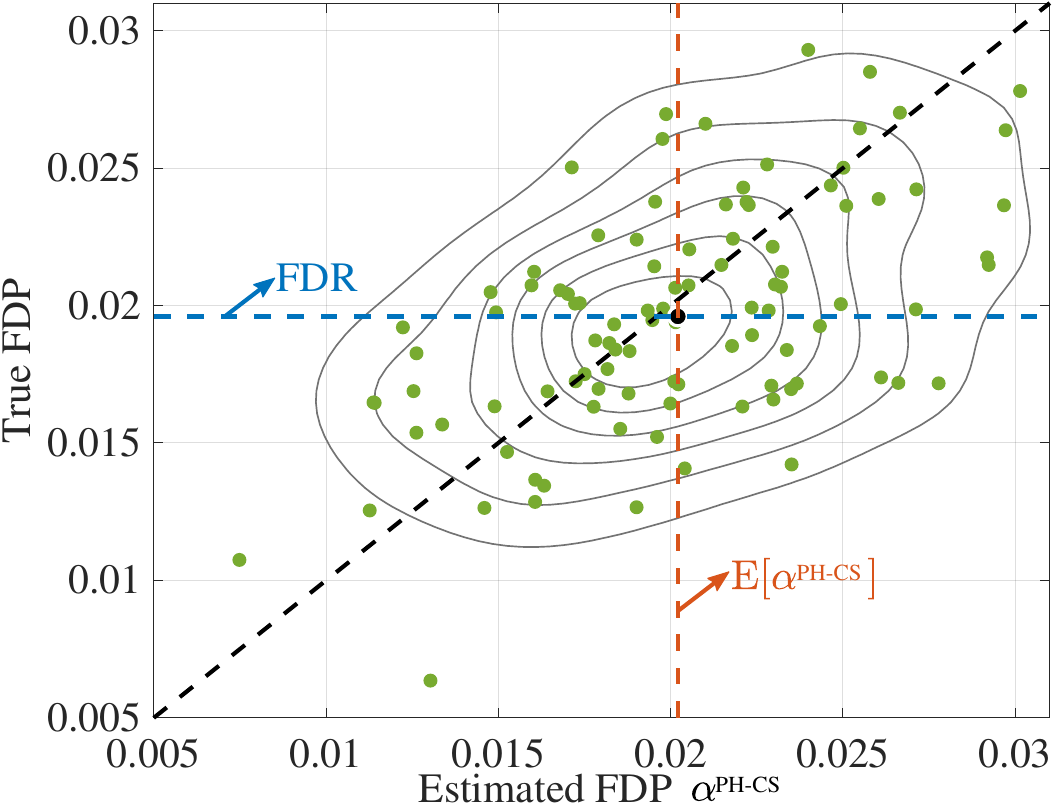}
    \includegraphics[width = 0.31\textwidth]{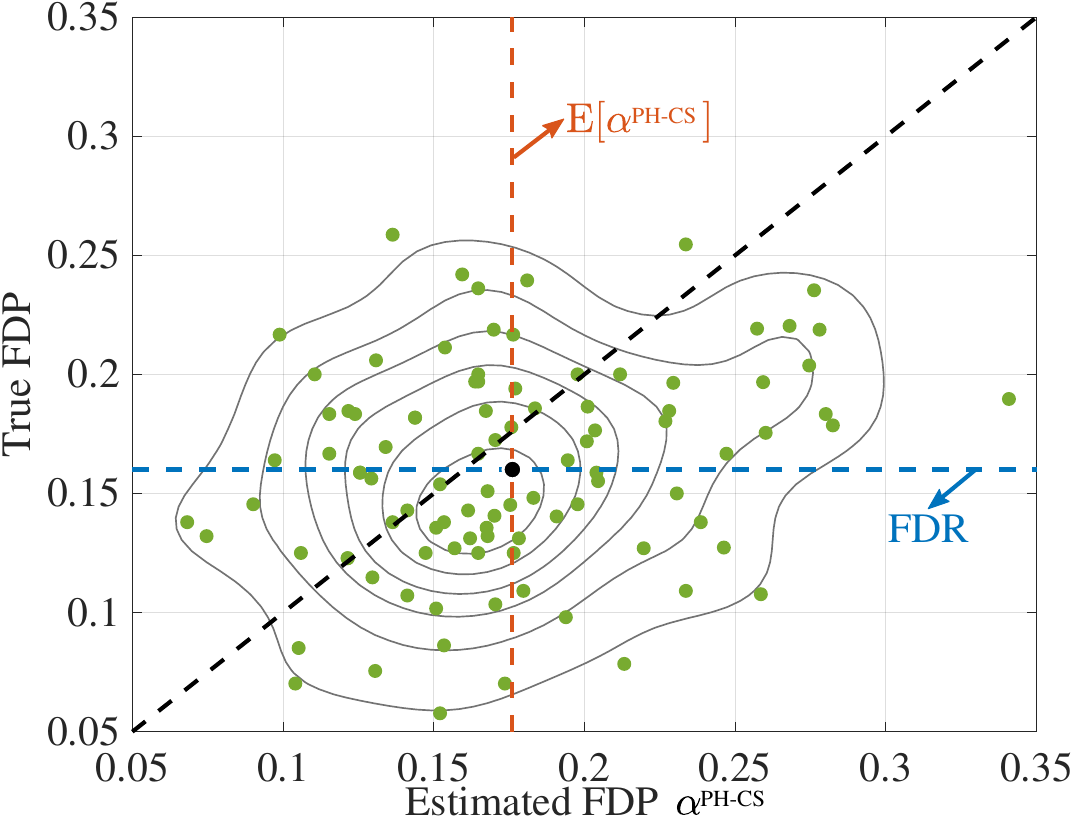}
    }
    \vspace{-1.2mm}
    \caption{Scatter plots of the realized FDP, $\textrm{FDP}(\mathcal{R}^{\textrm{PH-CS}}, \mathcal{Y}^{\textrm{test}})$ in \eqref{eq_FDP_def}, versus the estimated level $\alpha^{\textrm{PH-CS}}$ \eqref{eq_PHCS_alpha} over $100$ random seeds under the constrained-size utility \eqref{eq_utility_size_first} on the Shuttle dataset (left) and the additive trade-off utility \eqref{eq_u_add} on the Recruitment dataset (right). Contour lines show the kernel density of the joint distribution. The black dashed line is the reference at which estimated and true FDP coincide, while the vertical red and horizontal blue dashed lines mark the average estimate $\mathbb{E}[\alpha^{\textrm{PH-CS}}]$ and the true FDR, $\textrm{FDR}(R^\textrm{PH-CS})$, respectively.}
    \vspace{-5mm}
    \label{fig_FDP_estimate}
\end{figure*}

A key property of PH-CS is its capacity to provide reliable conservative estimates $\alpha^{\textrm{PH-CS}}$ of the FDP, as formalized by \eqref{eq_post_FDP}--\eqref{eq_heur_FDR}. To illustrate this, Fig. \ref{fig_FDP_estimate} reports the relationship between the realized $\textrm{FDP}(\mathcal{R}^{\textrm{PH-CS}},\mathcal{Y}^{\textrm{test}})$ and the declared level $\alpha^{\textrm{PH-CS}}$ over $100$ trials via a scatter plot evaluated on the Shuttle dataset under the constrained-size utility \eqref{eq_utility_size_first} and on the Recruitment dataset under the additive trade-off utility \eqref{eq_u_add}.

The main observation is that, in both cases, the joint distribution of true and estimated FDP is concentrated around the diagonal, indicating that the declared level $\alpha^{\textrm{PH-CS}}$ tracks the realized FDP reasonably well across realizations. Some points lie above the diagonal, meaning that the realized FDP can occasionally exceed the declared level, but the average estimated FDP $\mathbb{E}[\alpha^{\textrm{PH-CS}}]$ {\color{blue}is larger than} the true FDR $\textrm{FDR}(R^{\textrm{PH-CS}})$, confirming the theoretical results in \eqref{eq_post_FDP} and \eqref{eq_heur_FDR}.

\section{Conclusion} \label{sec_conclusion}
This paper has introduced post-hoc conformal selection (PH-CS), a framework that extends conventional CS by allowing the user to choose the operating point after observing the data while retaining a finite-sample reliability guarantee. PH-CS replaces conformal p-variables and the BH procedure with conformal e-variables and the e-BH procedure, exploiting the level-uniform property of e-BH to generate a path of candidate selection sets, each paired with a data-driven FDP estimate. The user then selects the operating point that maximizes an arbitrary utility balancing set size and reliability. We proved that the reported FDP estimate $\alpha^{\textrm{PH-CS}}$ satisfies the post-hoc reliability condition $\mathbb{E}[\textrm{FDP}/\alpha^{\textrm{PH-CS}}]\leq 1$ under exchangeability and score monotonicity, and extended the framework to continuous risk control (PH-RCS) and priority-weighted selection. Experiments on synthetic and real-world datasets confirmed that PH-CS consistently satisfies user-imposed size and utility constraints that CS cannot enforce, while maintaining competitive FDR control and providing reliable FDP estimates across different regressors. Overall, PH-CS presents a principled solution for utility-driven selection with post-hoc flexibility, establishing a foundation for further research in data-adaptive conformal inference.

Future work could explore extensions of PH-CS to settings with distribution shift between calibration and test data \cite{jin2023sensitivity, barber2025unifying, bai2026conformal}, the use of optimized e-variable designs \cite{koning2025fuzzy} and improved e-BH procedures \cite{xu2025bringing}, and the design of level-independent boosting strategies for conformal e-variables that improve power while preserving post-hoc validity.

\bibliographystyle{IEEEtran}
\bibliography{cite.bib}

\ifshowappendix
\appendix

\subsection{Link Between Post-hoc Reliability and FDR Control} \label{apdx_Taylor}

For completeness, we provide a short derivation supporting the relation \eqref{eq_heur_FDR}, which links the post-hoc reliability condition \eqref{eq_post_FDP_first} to an approximate upper bound on the FDR in terms of $\mathbb{E}[\hat{\alpha}(\mathcal{R})]$. The approximation is most informative when the estimate $\hat{\alpha}(\mathcal{R})$ has a sufficiently small variance.

A first-order Taylor expansion of $1/\hat{\alpha}(\mathcal{R})$ around $\mathbb{E}[\hat{\alpha}(\mathcal{R})]$ yields
\begin{align}\label{eq_taylor_alpha_star}
    \frac{1}{\hat{\alpha}(\mathcal{R})} =& \frac{1}{\mathbb{E}[\hat{\alpha}(\mathcal{R})]} - \frac{\hat{\alpha}(\mathcal{R}) - \mathbb{E}[\hat{\alpha}(\mathcal{R})]}{\mathbb{E}[\hat{\alpha}(\mathcal{R})]^2} \nonumber\\
    &+ O\big((\hat{\alpha}(\mathcal{R}) - \mathbb{E}[\hat{\alpha}(\mathcal{R})])^2\big).
\end{align}
Substituting \eqref{eq_taylor_alpha_star} into \eqref{eq_post_FDP_first} and neglecting the higher-order term yields
\begin{align}
    \mathbb{E}\Bigg[\frac{\textrm{FDP}(\mathcal{R}, \mathcal{Y}^{\textrm{test}})}{\hat{\alpha}(\mathcal{R})}\Bigg]
    \hspace{-1mm}\approx & \frac{\mathbb{E}\big[\textrm{FDP}(\mathcal{R}, \mathcal{Y}^{\textrm{test}})\big]}{\mathbb{E}[\hat{\alpha}(\mathcal{R})]} \nonumber\\
    & - \frac{\mathbb{E} \big[\textrm{FDP}(\mathcal{R}, \mathcal{Y}^{\textrm{test}})(\hat{\alpha}(\mathcal{R}) - \mathbb{E}[\hat{\alpha}(\mathcal{R})])\big]}{\mathbb{E}[\hat{\alpha}(\mathcal{R})]^2}\nonumber\\
    \approx & \frac{\mathbb{E}\big[\textrm{FDP}(\mathcal{R}, \mathcal{Y}^{\textrm{test}})\big]}{\mathbb{E}[\hat{\alpha}(\mathcal{R})]} \hspace{-1mm} = \hspace{-1mm} \frac{\textrm{FDR}(R)}{\mathbb{E}[\hat{\alpha}(\mathcal{R})]} \hspace{-1mm} \leq 1,
\end{align}
which yields the approximate bound \eqref{eq_heur_FDR}.

\subsection{Proof of Theorem \ref{theo_posthoc}} \label{apx_prof_posthoc}
We first introduce an \emph{oracle} counterpart of the conformal e-variable in \eqref{eq_e_vari}, obtained by evaluating the score at the unobserved test label $Y_{n+j}$, i.e.,
\begin{align}\label{eq_oracle_e}
    E^*_j = \frac{S_{n+j}}{\frac{1}{n+1}(\sum_{i=1}^n S_i + S_{n+j})}, ~ j  = 1, \ldots, m,
\end{align}
where $S_{n+j} = S(X_{n+j}, Y_{n+j})$ for $j = 1, \ldots, m$, and we assume $\sum_{i=1}^n S_i + S_{n+j}>0$ almost surely since the score is non-negative as defined in Sec. \ref{sec_conv_prelim}.

\subsection*{\textbf{Step 1: $\mathbb{E}[E_j^*] = 1$}}
By assumption \textit{(i)} in Theorem \ref{theo_posthoc}, for each fixed $j$, the collection $\{S_i\}_{i=1}^n\cup\{S_{n+j}\}$ is exchangeable. Hence
\begin{align}\label{eq_exp_E_star}
    \mathbb{E}[E^*_j] &= \mathbb{E}\bigg[\frac{(n+1) S_{n+j}}{\sum_{i=1}^n S_i + S_{n+j}}\bigg]\nonumber\\
    &=\frac{1}{n+1}\mathbb{E}\bigg[\sum_{k\in\{1,\ldots,n,n+j\}}\frac{(n+1) S_k}{\sum_{i=1}^n S_i + S_{n+j}} \bigg]\nonumber\\
    &=\frac{1}{n+1}\mathbb{E}[n+1]=1,
\end{align}
where the second equality uses exchangeability to replace the expectation of the single term involving $S_{n+j}$ by the average of all $n+1$ symmetric terms $S_1,\ldots, S_n, S_{n+j}$. Based on this, we apply the following steps.

\subsection*{\textbf{Step 2: Under the null $H_j$, $E_j\leq E^*_j$}}
To compare $E_j$ in \eqref{eq_e_vari} with its oracle counterpart $E_j^*$ in \eqref{eq_oracle_e}, we study how the e-variable changes with its score input while keeping the calibration sum fixed.

Fix $a = \sum_{i=1}^n S_i$ and define the scalar mapping
\begin{align}
    f(S) = \frac{(n+1)S}{a+S},
\end{align}
which is the functional form of the e-variable as a function of the score input $S$. A direct calculation gives
\begin{align}
    f^{\prime}(S) = \frac{a(n+1)}{(a+S)^2} \geq 0,
\end{align}
so $f(S)$ is non-decreasing in $S$. Consequently, for each $j$, we have
\begin{align}
    E_j=f(\hat S_{n+j}) ~ \text{and} ~ E_j^*=f(S_{n+j}).
\end{align}

By assumption \textit{(ii)} in Theorem \ref{theo_posthoc}, the score $S(X,Y)$ is monotone non-increasing in $Y$. Hence, under the null hypothesis $H_j$ in \eqref{eq_hypo}, i.e., on the event $\{Y_{n+j}\leq c_j\}$, we have
\begin{align}
    S_{n+j} = S(X_{n+j},Y_{n+j}) \geq S(X_{n+j},c_j)=\hat{S}_{n+j}.
\end{align}
Since function $f(S)$ is non-decreasing in score $S$, this implies
\begin{align}\label{eq_E_E_star}
    E_j\leq E^*_j~\textrm{under}~H_j~\textrm{for}~j=1,\ldots,m.
\end{align}

\subsection*{\textbf{Step 3: Post-hoc reliability via e-BH consistency}}
We now adapt the proof strategy of \cite[Theorem 3]{jin2023selection} to the e-variable and e-BH setting. 
% Building on Steps 1--2, we first establish the post-hoc bound for any nominal FDR level $\alpha \in (0,1)$ and the corresponding e-BH output $\mathcal{R}(\alpha)$ in \eqref{eq_set_alpha}, and then specialize the result to the data-dependent choice $(\mathcal{R}^{\textrm{PH-CS}},\alpha^{\textrm{PH-CS}})$ selected by \eqref{eq_PHCS_choice}.
Fix any $\alpha\in(0,1)$ and let $\mathcal{R} = \mathcal{R}(\alpha)$ denote the rejection set obtained by the e-BH procedure \eqref{eq_set_alpha}--\eqref{eq_consist_thre}. The selection set $\mathcal{R}$ is then self-consistent with threshold $m/(\alpha|\mathcal{R}|)$, in the sense that
\begin{align}\label{eq_self_consist}
    j \in \mathcal{R} \iff E_j\geq \frac{m}{\alpha |\mathcal{R}|}.
\end{align}
Expanding the FDP \eqref{eq_FDP_def} using \eqref{eq_self_consist}, and noting that $E_j\leq E^*_j$ on the event $\{Y_{n+j}\leq c_j\}$ by \eqref{eq_E_E_star} and that $\mathds{1}\{Y_{n+j}\leq c_j\}\leq 1$, we obtain
\begin{align}\label{eq_FDR_expand}
    \textrm{FDP}(\mathcal{R}, \mathcal{Y}^{\textrm{test}})
    &= \frac{\sum_{j=1}^m\mathds{1}\{Y_{n+j}\leq c_j\}\mathds{1}\{j\in\mathcal{R}\}}{\max\{1, |\mathcal{R}|\}}\nonumber\\
    &\leq \frac{1}{|\mathcal{R}|}\sum_{j=1}^m \mathds{1}\{Y_{n+j}\leq c_j\} \mathds{1} \bigg\{E^*_j \geq \frac{m}{\alpha |\mathcal{R}|}\bigg\}\nonumber\\
    &\leq \frac{1}{|\mathcal{R}|}\sum_{j=1}^m \mathds{1} \bigg\{E^*_j \geq \frac{m}{\alpha |\mathcal{R}|}\bigg\}.
\end{align}
Applying the elementary bound $\mathds{1}\{E\geq t\}\leq E/t$ for $E\geq 0$ and $t>0$ with $E = E^*_j$ and $t = m/(\alpha|\mathcal{R}|)$, and noting that $|\mathcal{R}|$ cancels, gives
\begin{align}
    \textrm{FDP}(\mathcal{R},\mathcal{Y}^{\textrm{test}})
    \leq \frac{1}{|\mathcal{R}|}\sum_{j=1}^m \frac{\alpha |\mathcal{R}|}{m} E^*_j
    = \frac{\alpha}{m}\sum_{j=1}^m E^*_j.
\end{align}
Dividing both sides by $\alpha$ yields the level-uniform bound
\begin{align}\label{eq_level_uniform}
    \frac{\textrm{FDP}(\mathcal{R}, \mathcal{Y}^{\textrm{test}})}{\alpha} \leq \frac{1}{m} \sum_{j=1}^m E_j^*
\end{align}
for all $\alpha\in(0,1)$. Taking expectations and using $\mathbb{E}[E^*_j]=1$ from \eqref{eq_exp_E_star} gives
\begin{align}
    \mathbb{E}\Bigg[\frac{\textrm{FDP}(\mathcal{R}, \mathcal{Y}^{\textrm{test}})}{\alpha}\Bigg] \leq 1.
\end{align}
Since the right-hand side of \eqref{eq_level_uniform} depends on neither $\alpha$ nor the thresholds $\{c_j\}_{j=1}^m$, the bound holds when $\alpha$ is replaced by any data-dependent choice measurable with respect to $(\mathcal{D}^{\textrm{cal}}, \mathcal{X}^{\textrm{test}})$, and remains valid even if the thresholds $\{c_j\}_{j=1}^m$ are themselves functions of $(\mathcal{D}^{\textrm{cal}}, \mathcal{X}^{\textrm{test}})$. In particular, taking $\alpha = \alpha^{\textrm{PH-CS}}$ and $\mathcal{R} = \mathcal{R}(\alpha^{\textrm{PH-CS}}) = \mathcal{R}^{\textrm{PH-CS}}$ yields \eqref{eq_post_FDP}.

The key step enabling post-hoc validity is the inequality $\mathds{1}\{E\geq t\}\leq E/t$, which yields a bound free of $\alpha$ (see \eqref{eq_level_uniform}). In contrast, conventional CS relies on the tail calibration $\mathbb{P}(P\leq t)\leq t$, whose analogue would require $\mathbb{E}[1/P]<\infty$, which fails even for $P\sim\textrm{U}(0,1)$. This is why CS guarantees are tied to a pre-specified level and cannot extend to post-hoc selection.

{\color{blue}
\subsection{Proof of Proposition~\ref{prop_asymp}} \label{apx_prop_asymp}
By the definition of the FDP \eqref{eq_FDP_def}, the claim \eqref{eq_prop_asymp} requires the realized FDP of the selected set $\mathcal{R}^{\textrm{PH-CS}}$ to stay below the reported level $\alpha^{\textrm{PH-CS}}$ in large samples. We obtain this from the level-uniform bound \eqref{eq_level_uniform} in Appendix~\ref{apx_prof_posthoc}, which holds for every level $\alpha\in(0,1)$ and for the data-dependent level $\alpha^{\textrm{PH-CS}}$. Evaluated at $\alpha=\alpha^{\textrm{PH-CS}}$ and $\mathcal{R}=\mathcal{R}^{\textrm{PH-CS}}$, it reads
\begin{align}\label{eq_fdp_le_avg}
    \textrm{FDP}\big(\mathcal{R}^{\textrm{PH-CS}}, \mathcal{Y}^{\textrm{test}}\big) \leq \alpha^{\textrm{PH-CS}}\cdot\frac{1}{m}\sum_{j=1}^m E_j^*,
\end{align}
where $E_j^*$ is the oracle e-variable \eqref{eq_oracle_e}. Since $\alpha^{\textrm{PH-CS}}\leq 1$ by \eqref{eq_alpha_k_def}, this yields the deterministic inequality
\begin{align}\label{eq_fdp_to_avg}
    \textrm{FDP}\big(\mathcal{R}^{\textrm{PH-CS}}, \mathcal{Y}^{\textrm{test}}\big)
    \leq \alpha^{\textrm{PH-CS}} + \bigg|\frac{1}{m}\sum_{j=1}^m E_j^* - 1\bigg|.
\end{align}
By \eqref{eq_fdp_to_avg}, the event $\{|\sum_{j=1}^m E_j^*/m - 1|\leq\varepsilon\}$ is contained in $\{\textrm{FDP}(\mathcal{R}^{\textrm{PH-CS}}, \mathcal{Y}^{\textrm{test}})\leq\alpha^{\textrm{PH-CS}}+\varepsilon\}$, so for every $\varepsilon>0$,
\begin{align}\label{eq_prob_reduction}
    &\mathbb{P}\Big(\textrm{FDP}\big(\mathcal{R}^{\textrm{PH-CS}}, \mathcal{Y}^{\textrm{test}}\big) \leq \alpha^{\textrm{PH-CS}} + \varepsilon\Big)\nonumber\\
    \geq &\mathbb{P}\bigg(\bigg|\frac{1}{m}\sum_{j=1}^m E_j^* - 1\bigg| \leq \varepsilon\bigg).
\end{align}
Since this probability is also at most one, it therefore suffices to prove that, for every $\varepsilon>0$,
\begin{align}\label{eq_avg_E_star_conv}
    \lim_{n,m \rightarrow \infty}\mathbb{P}\bigg(\bigg|\frac{1}{m}\sum_{j=1}^m E_j^* - 1\bigg| \leq \varepsilon\bigg) = 1.
\end{align}

By definition, the score is non-negative, so dropping the term $S_{n+j}$ from the denominator of the oracle e-variable \eqref{eq_oracle_e} only enlarges it. This yields, for each $j$,
\begin{align}\label{eq_E_star_sandwich}
    0 \leq \frac{S_{n+j}}{\frac{1}{n+1}\sum_{i=1}^n S_i} - E_j^*
    &= \frac{(n+1)S_{n+j}^2}{\big(\sum_{i=1}^n S_i\big)\big(\sum_{i=1}^n S_i + S_{n+j}\big)} \notag\\
    &\leq \frac{(n+1)S_{n+j}^2}{\big(\sum_{i=1}^n S_i\big)^2},
\end{align}
where the last step again drops $S_{n+j}$ from a denominator. Averaging \eqref{eq_E_star_sandwich} over $j=1,\ldots,m$, the gap
\begin{align}\label{eq_R_def}
    R_{n,m} = \frac{1}{m}\cdot\frac{\sum_{j=1}^m S_{n+j}}{\frac{1}{n+1}\sum_{i=1}^n S_i} - \frac{1}{m}\sum_{j=1}^m E_j^*
\end{align}
satisfies
\begin{align}\label{eq_R_bound}
    0\leq R_{n,m} \leq \frac{n+1}{\big(\sum_{i=1}^n S_i\big)^2}\cdot\frac{1}{m}\sum_{j=1}^m S_{n+j}^2.
\end{align}

By assumption \textit{(i)} in Proposition~\ref{prop_asymp}, the scores are i.i.d, so for every $\varepsilon>0$ the law of large numbers gives
\begin{subequations}\label{eq_lln}
    \begin{align}
        \lim_{n \rightarrow \infty}\mathbb{P}\bigg(\bigg|\frac{1}{n}\sum_{i=1}^n S_i - \mathbb{E}[S(X,Y)]\bigg| > \frac{\varepsilon}{2}\bigg) &= 0,\\
        \lim_{m \rightarrow \infty}\mathbb{P}\bigg(\bigg|\frac{1}{m}\sum_{j=1}^m S_{n+j} - \mathbb{E}[S(X,Y)]\bigg| > \frac{\varepsilon}{2}\bigg) &= 0,
    \end{align}
\end{subequations}
with $\mathbb{E}[S(X,Y)]>0$ by assumption \textit{(iii)}. Since the first term in \eqref{eq_R_def} is the ratio of these two averages, \eqref{eq_lln} gives
\begin{align}\label{eq_ratio_lim}
    \lim_{n,m \rightarrow \infty}\mathbb{P}\bigg(\bigg|\frac{1}{m}\cdot\frac{\sum_{j=1}^m S_{n+j}}{\frac{1}{n+1}\sum_{i=1}^n S_i} - 1\bigg| > \frac{\varepsilon}{2}\bigg) = 0.
\end{align}
For the gap, the finite second moment of assumption \textit{(iii)} keeps $\sum_{j=1}^m S_{n+j}^2/m$ bounded, while \eqref{eq_lln} makes $\sum_{i=1}^n S_i$ grow as $n\,\mathbb{E}[S(X,Y)]$, so the upper bound in \eqref{eq_R_bound} gives
\begin{align}\label{eq_R_lim}
    \lim_{n,m \rightarrow \infty}\mathbb{P}\bigg(R_{n,m} > \frac{\varepsilon}{2}\bigg) = 0.
\end{align}
By \eqref{eq_R_def} and the triangle inequality, $|\sum_{j=1}^m E_j^*/m - 1|$ is at most the sum of the two deviations bounded in \eqref{eq_ratio_lim} and \eqref{eq_R_lim}. These two limits therefore establish \eqref{eq_avg_E_star_conv}, which with \eqref{eq_prob_reduction} proves \eqref{eq_prop_asymp}.
}

\begin{figure*}[t]
    \centering
    {\hspace{-5mm}
    \includegraphics[width = 0.3\textwidth]{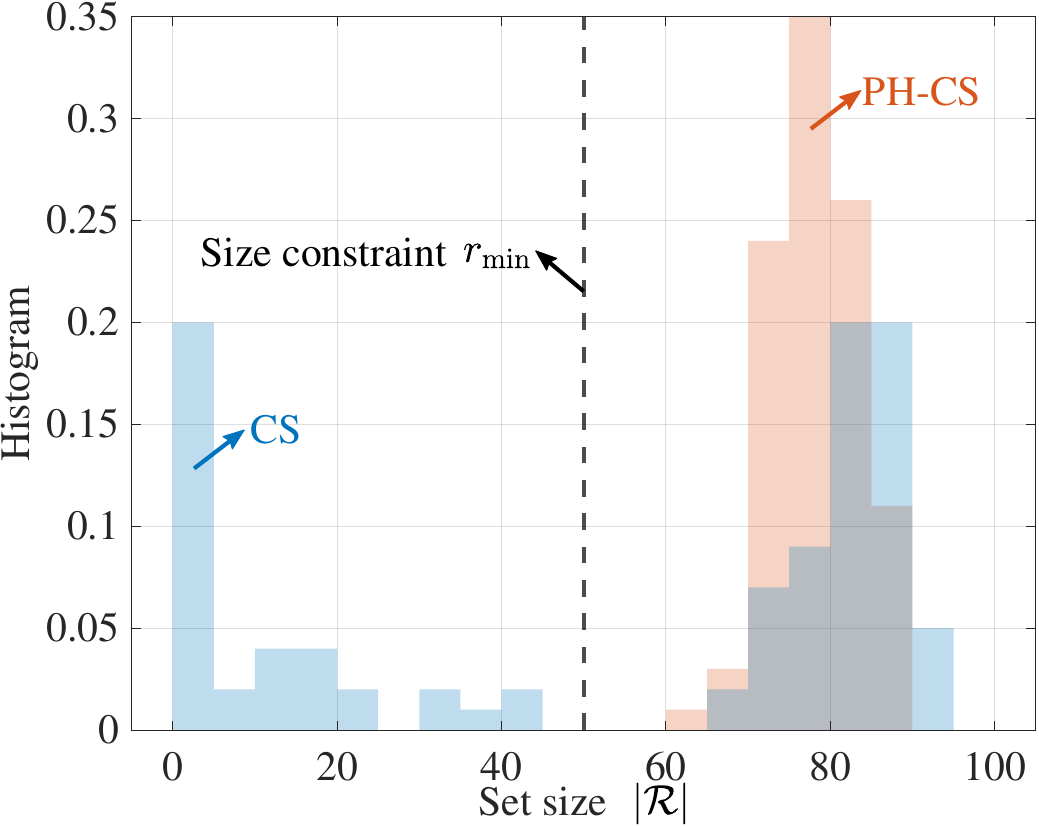}
    \includegraphics[width = 0.305\textwidth]{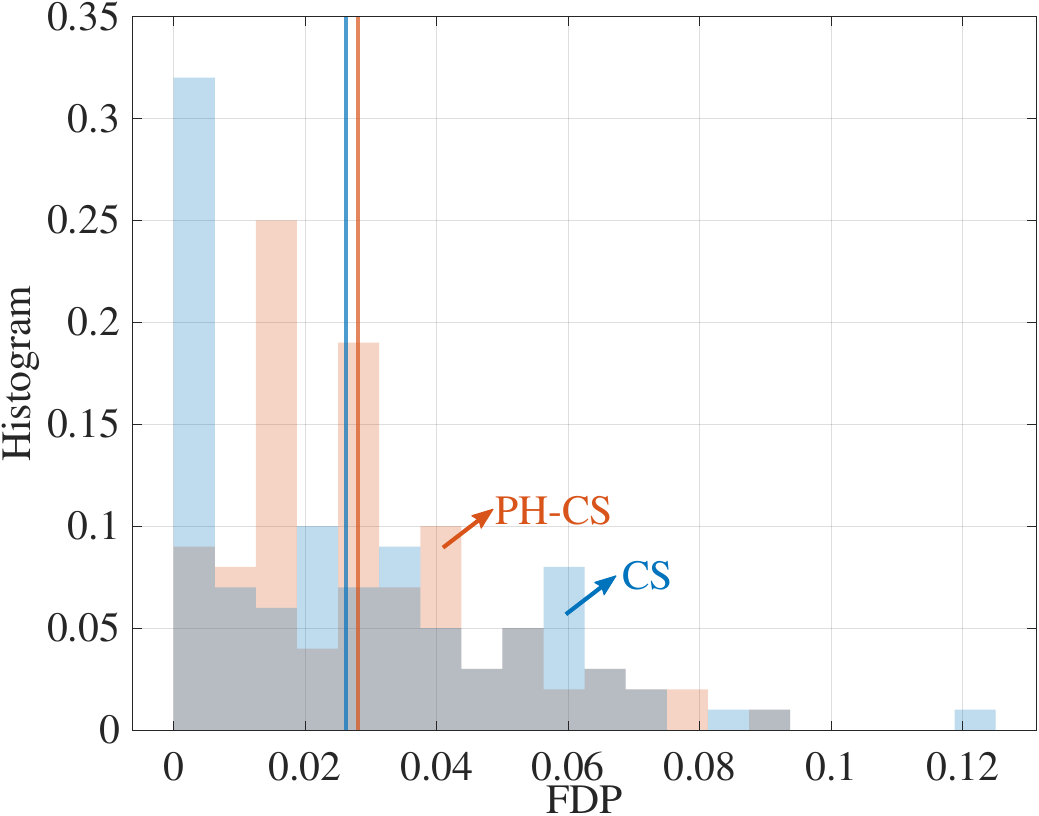}
    \includegraphics[width = 0.3\textwidth]{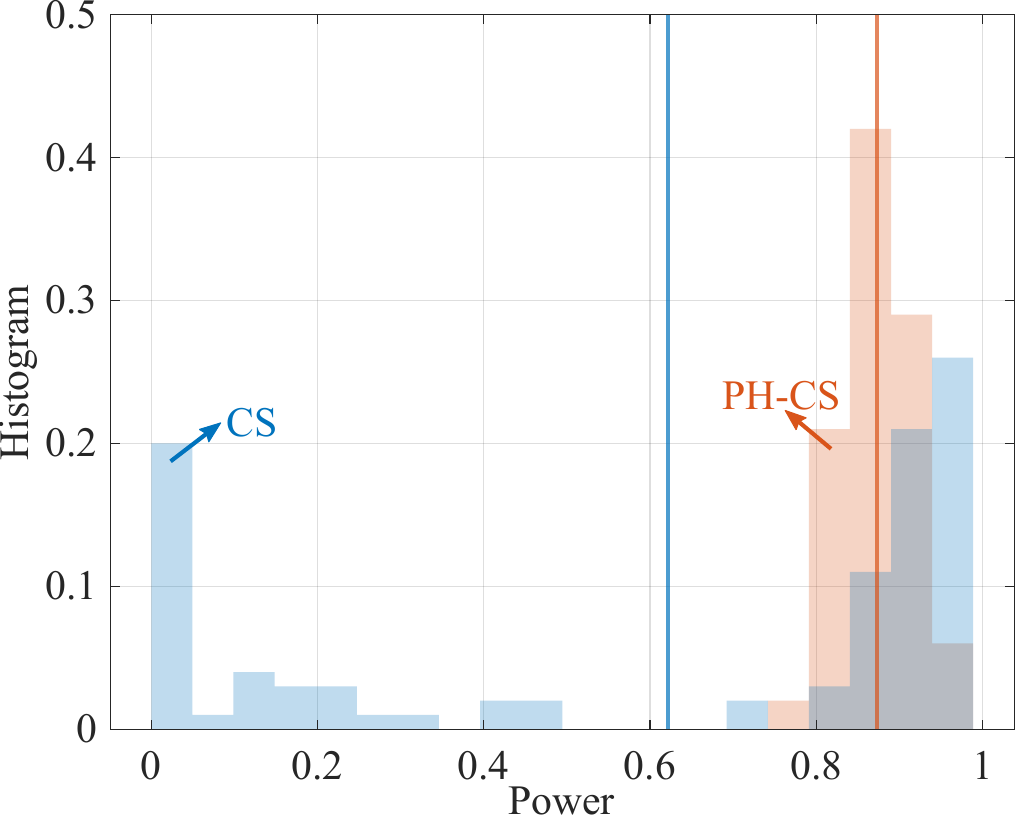}\\
    \hspace{-5.5mm}
    \includegraphics[width = 0.302\textwidth]{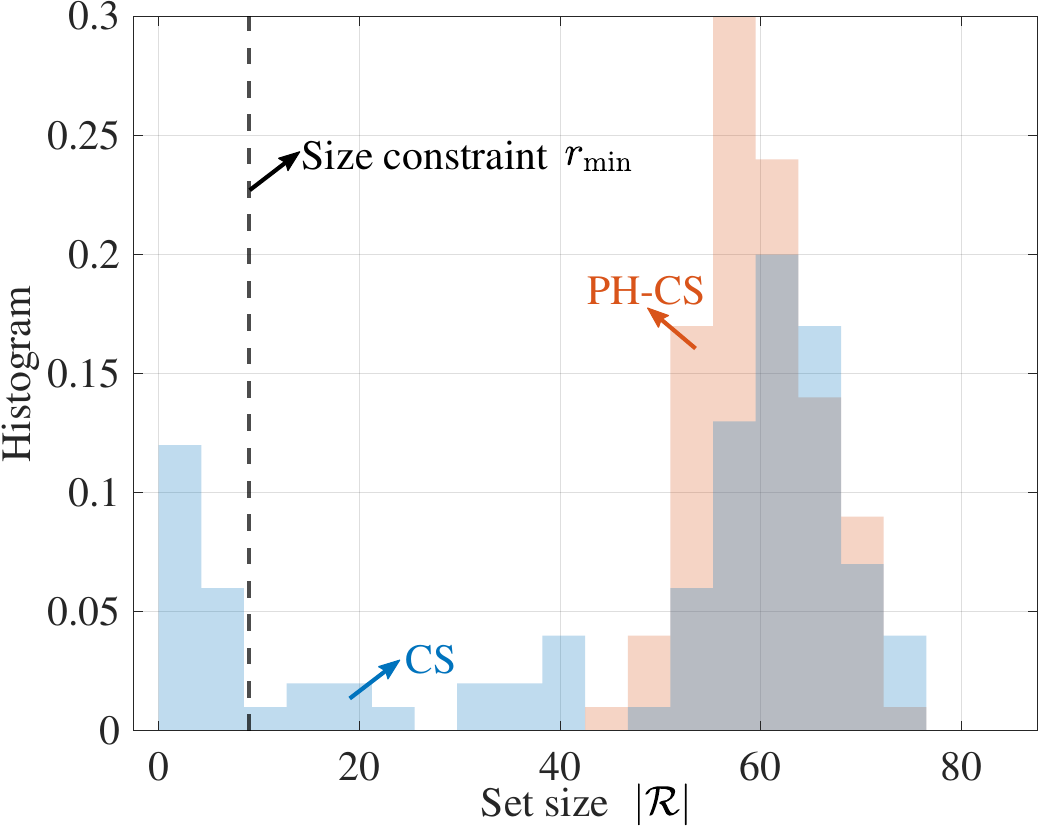}
    \includegraphics[width = 0.304\textwidth]{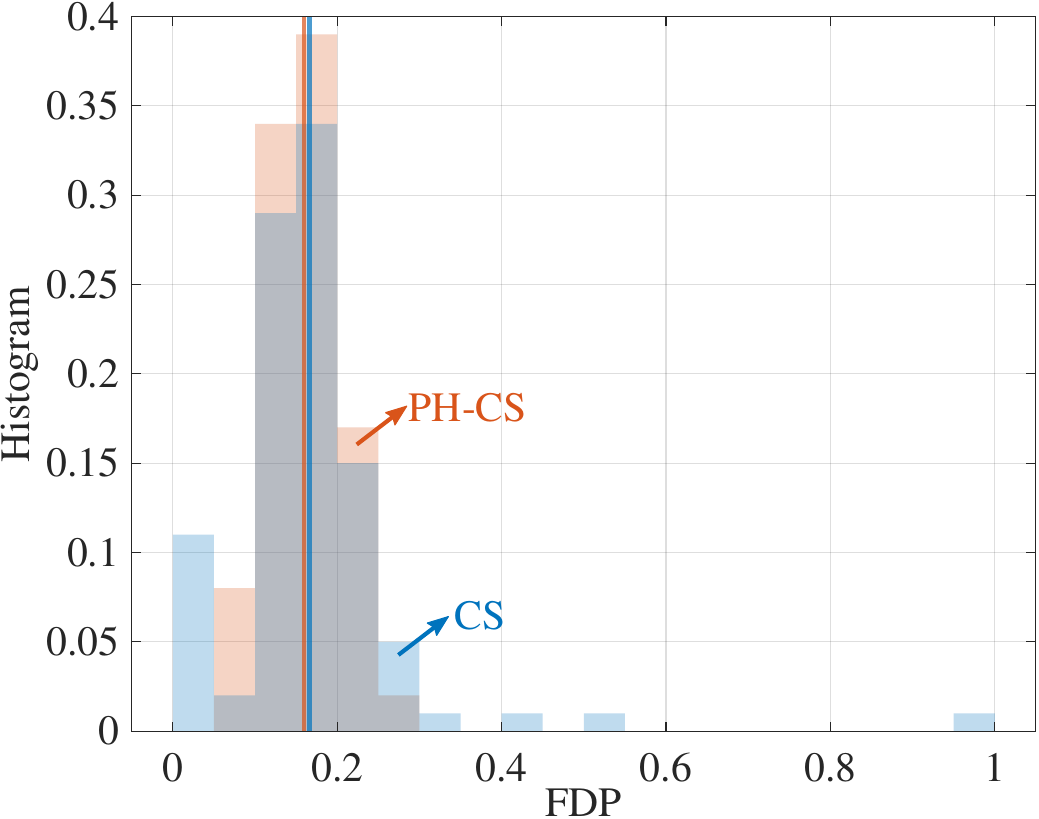}
    \includegraphics[width = 0.303\textwidth]{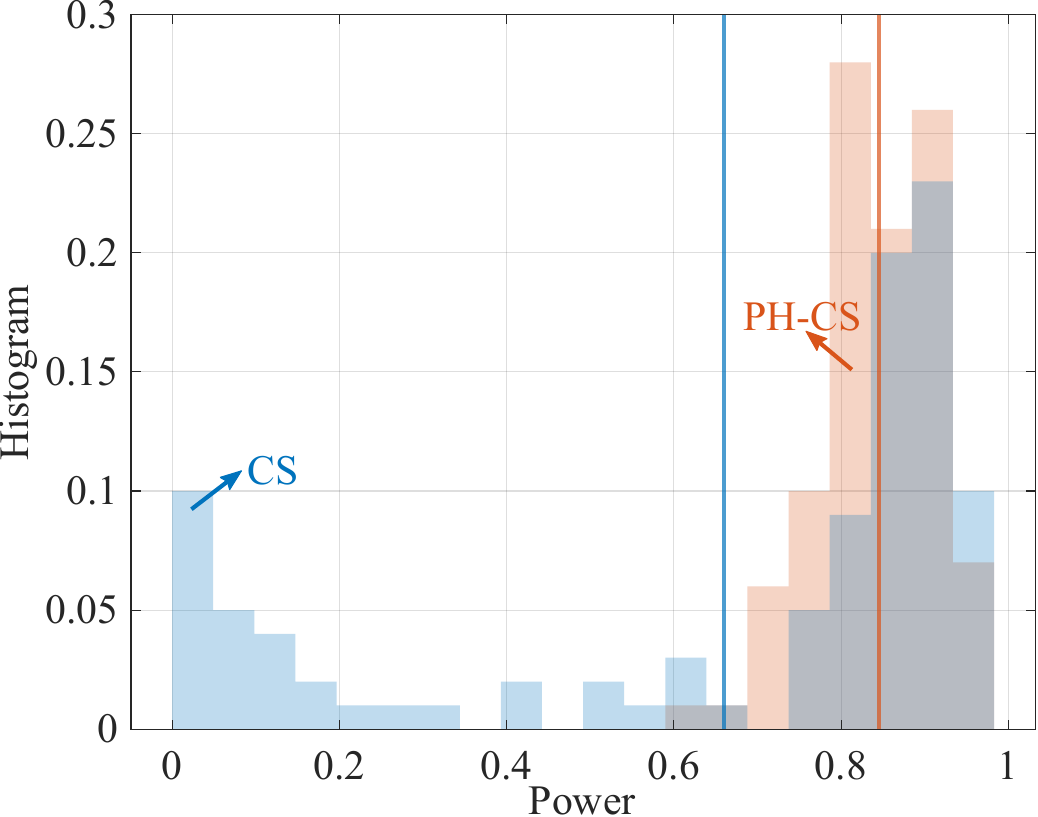}\\
    
    \includegraphics[width = 0.3\textwidth]{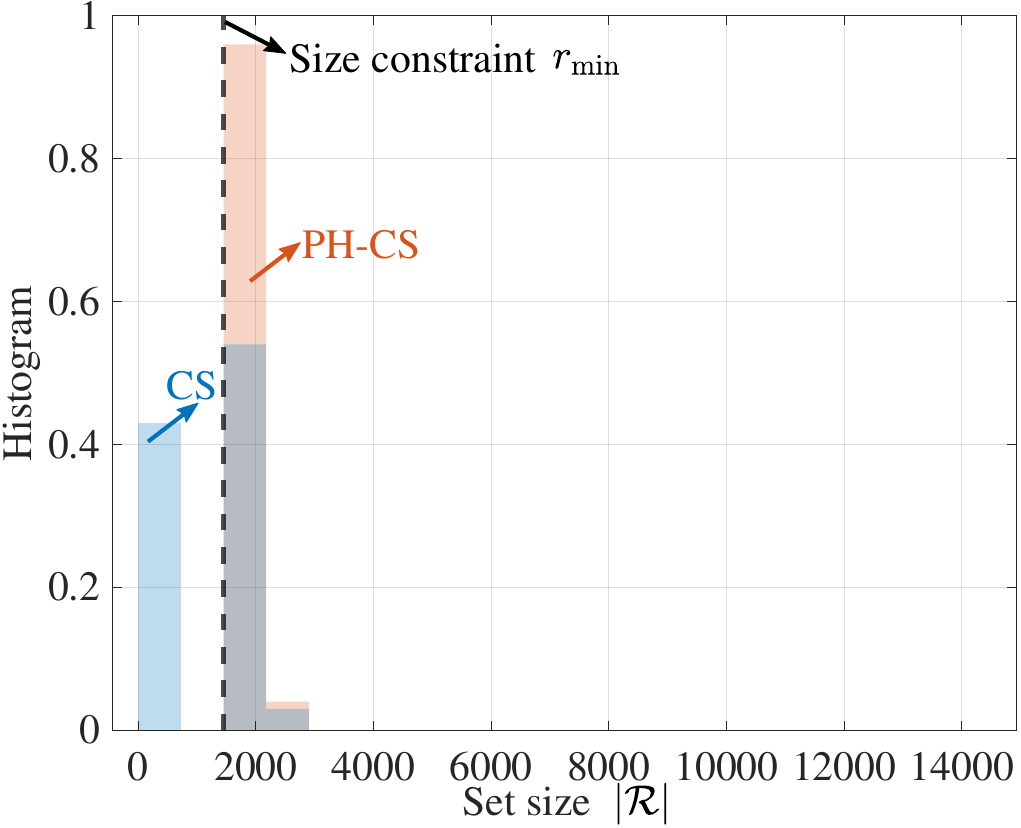}
    \includegraphics[width = 0.3\textwidth]{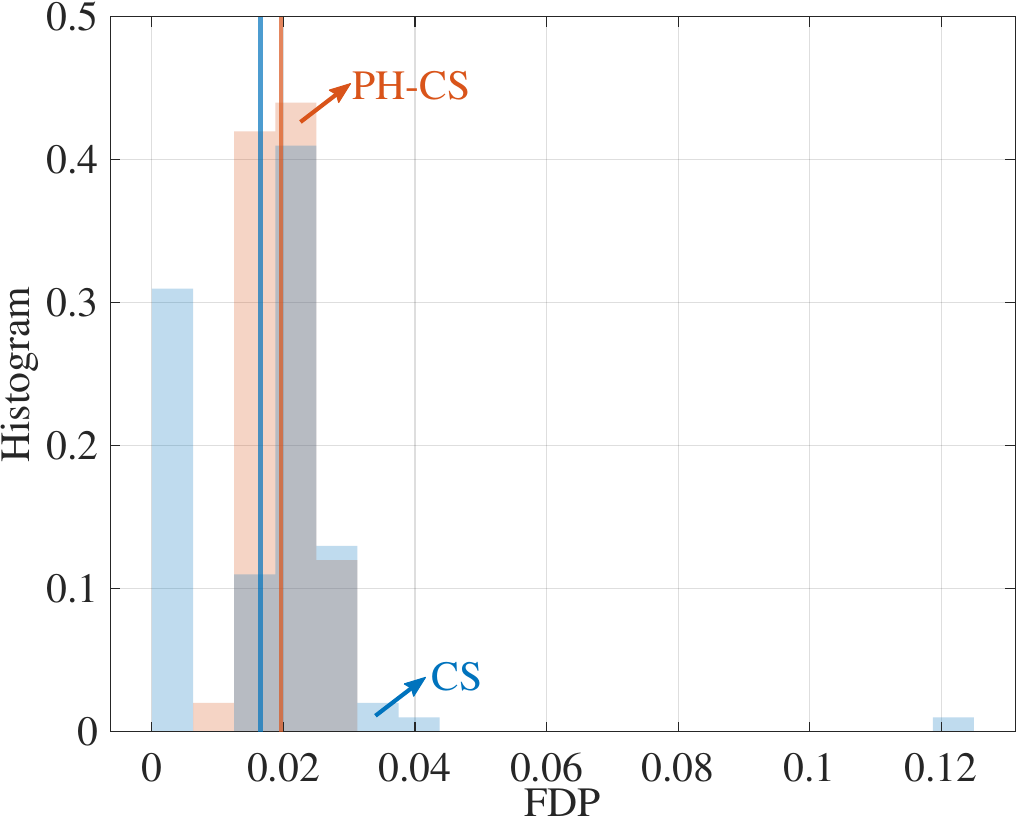}
    \includegraphics[width = 0.3\textwidth]{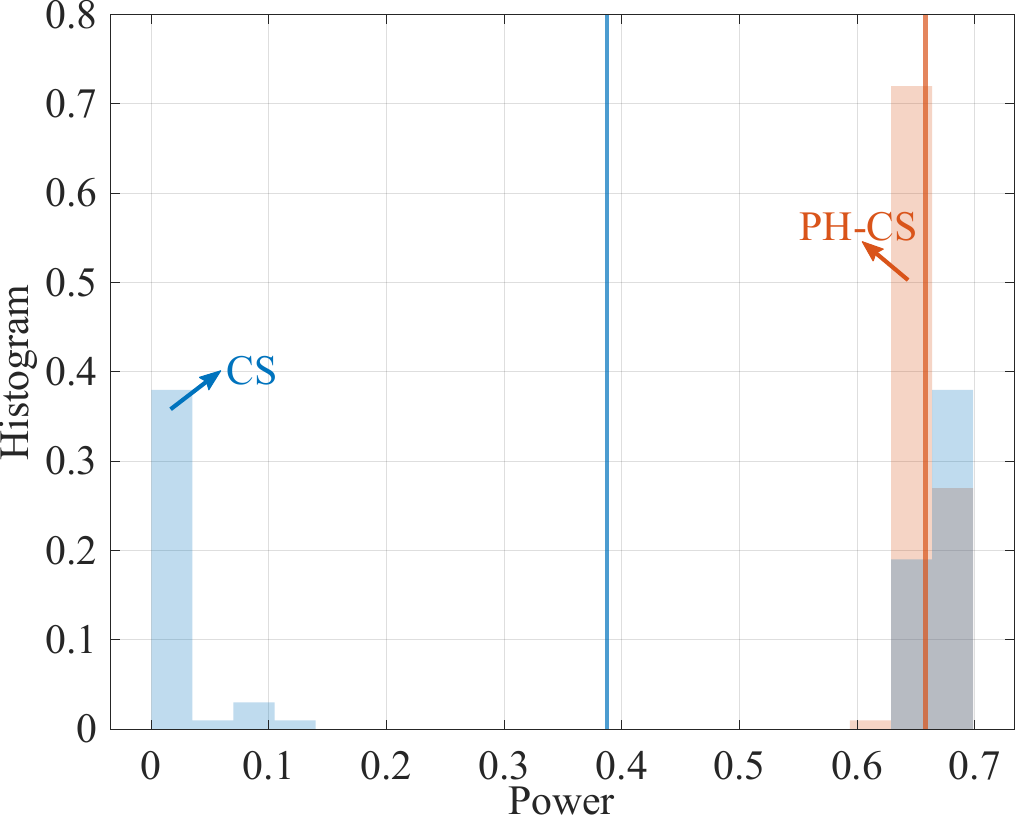}
    }
    \vspace{-1mm}
    \caption{{\color{blue}Histograms of the selected set size (left), realized FDP (middle), and power (right) under the constrained-size utility \eqref{eq_utility_size_first} on the synthetic data with heteroscedastic noise using gradient boosting (top row), the Recruitment dataset (middle row), and the Shuttle dataset (bottom row). In the middle and right panels, the blue and red vertical lines indicate the average values for CS and PH-CS, respectively.}}
    \vspace{-3.5mm}
    \label{fig_Csize_apdx}
\end{figure*}
\begin{figure*}[t]
    \centering
    {\hspace{-4mm}
	\includegraphics[width = 0.3\textwidth]{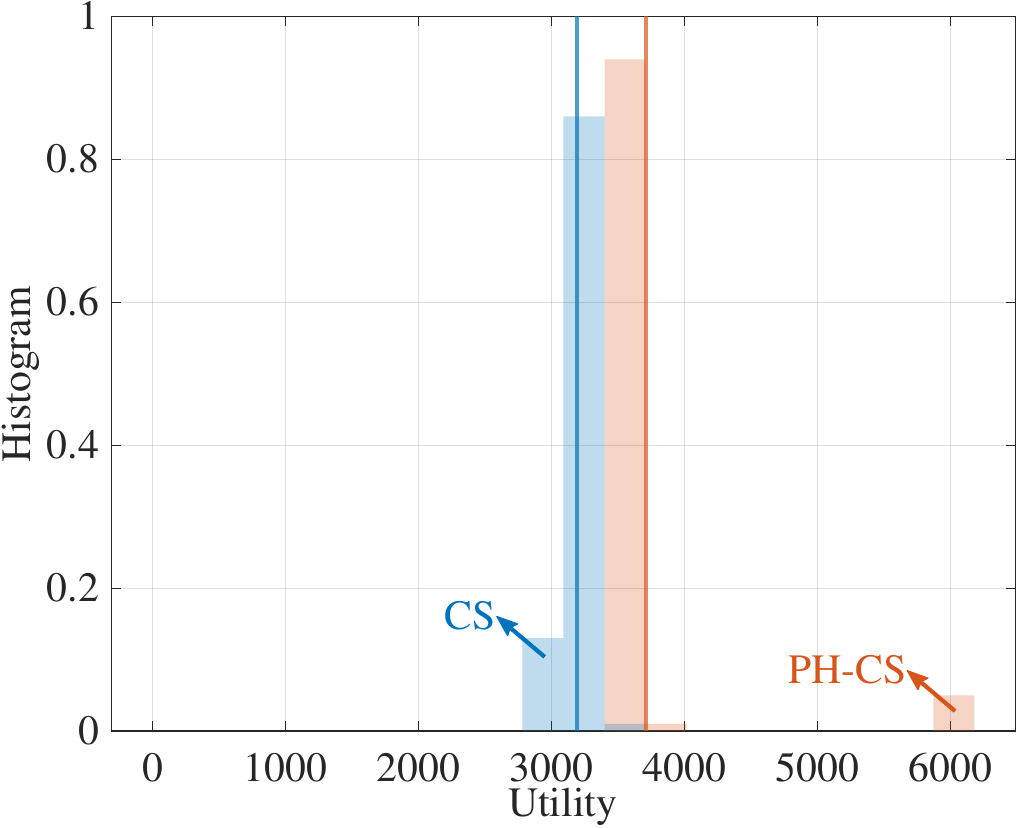}
    \includegraphics[width = 0.305\textwidth]{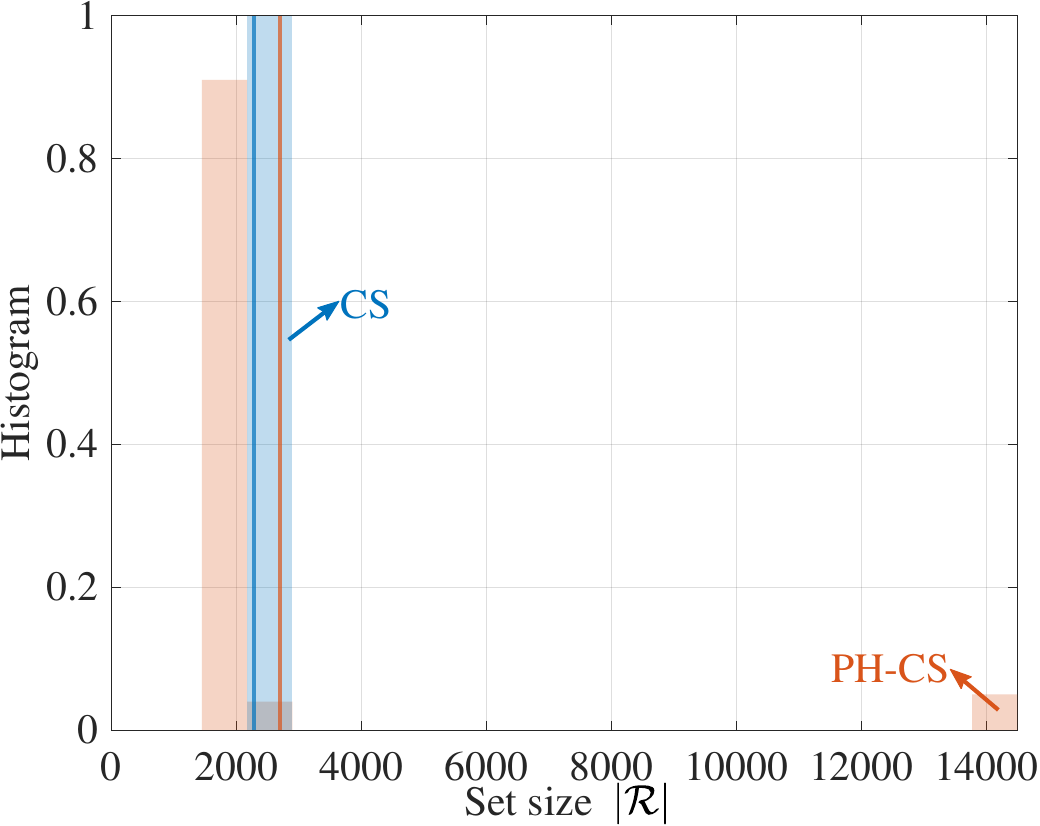}
    \includegraphics[width = 0.3\textwidth]{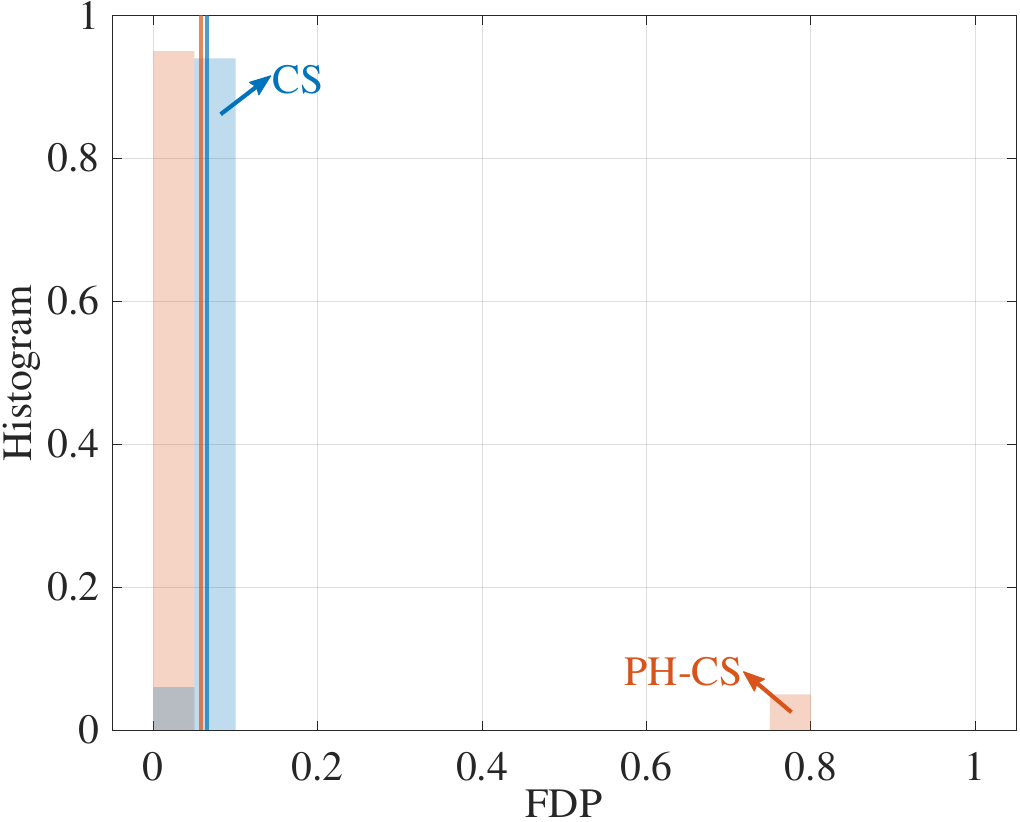}
	\includegraphics[width = 0.3\textwidth]{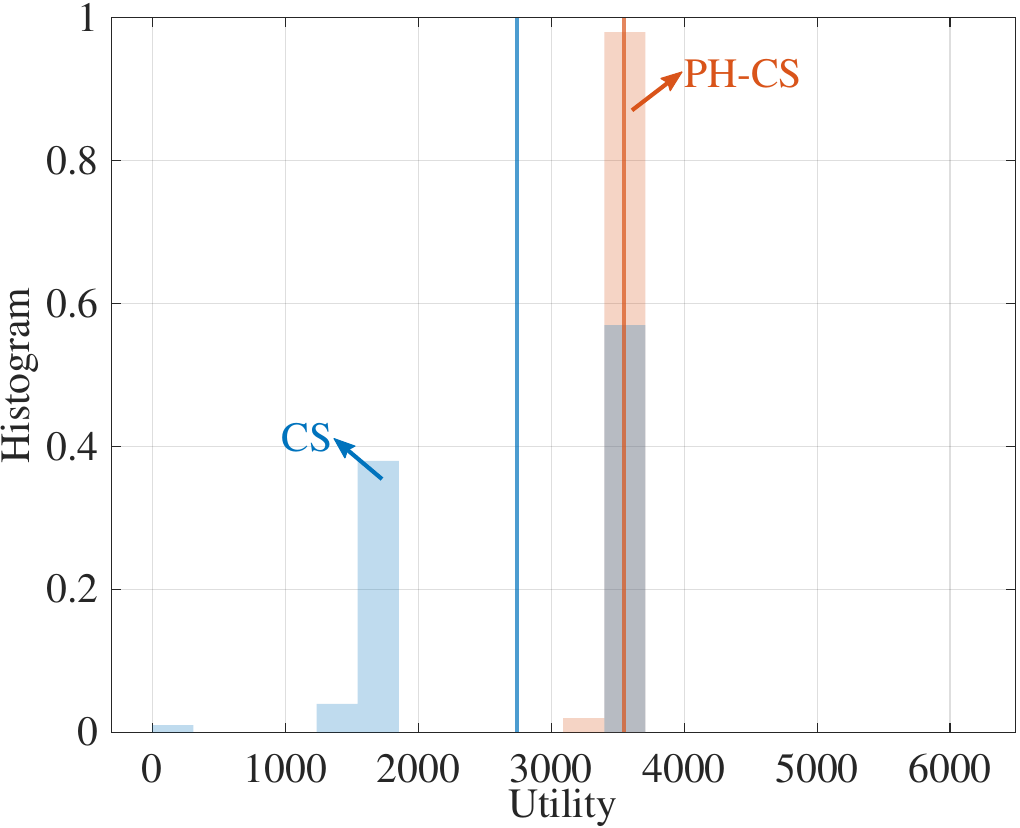}
    \includegraphics[width = 0.304\textwidth]{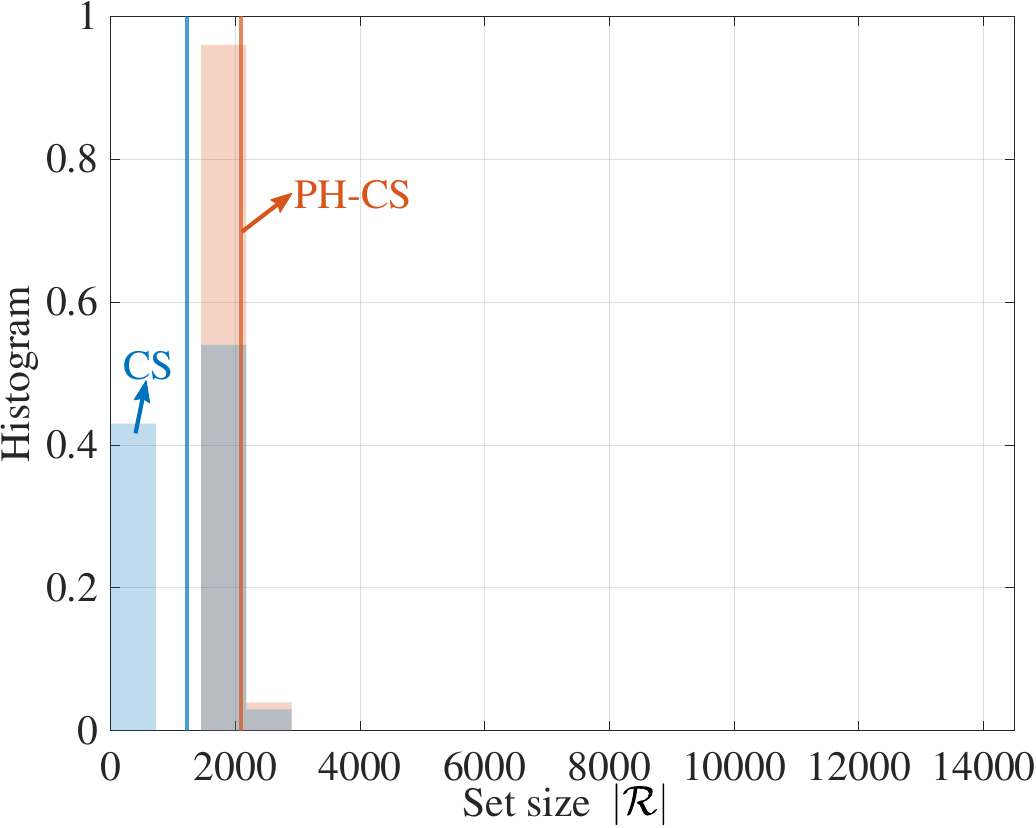}
    \includegraphics[width = 0.3\textwidth]{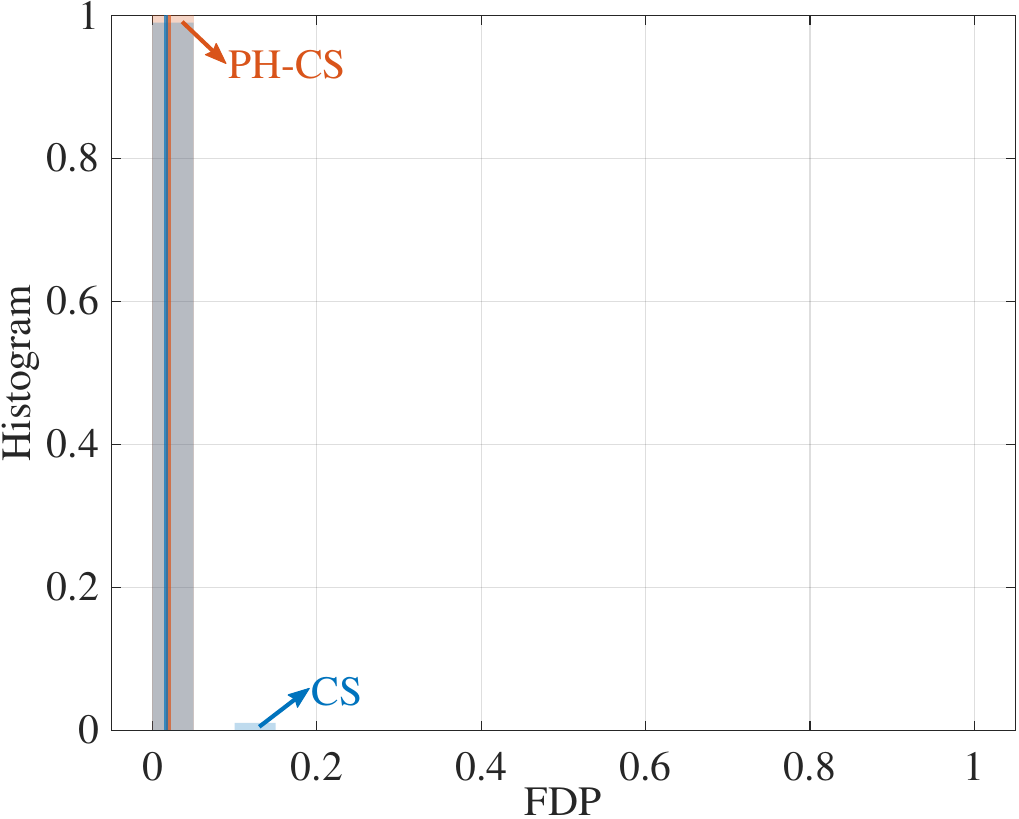}
    }
    \vspace{-1.2mm}
    \caption{Histograms of the realized utility (left), selected set size (middle), and FDP (right) under the additive trade-off utility \eqref{eq_u_add} on the Shuttle dataset, with $\lambda = 12800$ (top row) and $\lambda = 14000$ (bottom row). In each panel, the blue and red vertical lines indicate the corresponding average values of CS and PH-CS, respectively.}
    \vspace{-5mm}
    \label{fig_real_utility}
\end{figure*}
\begin{figure*}[t]
    \centering
    {
	\includegraphics[width = 0.3\textwidth]{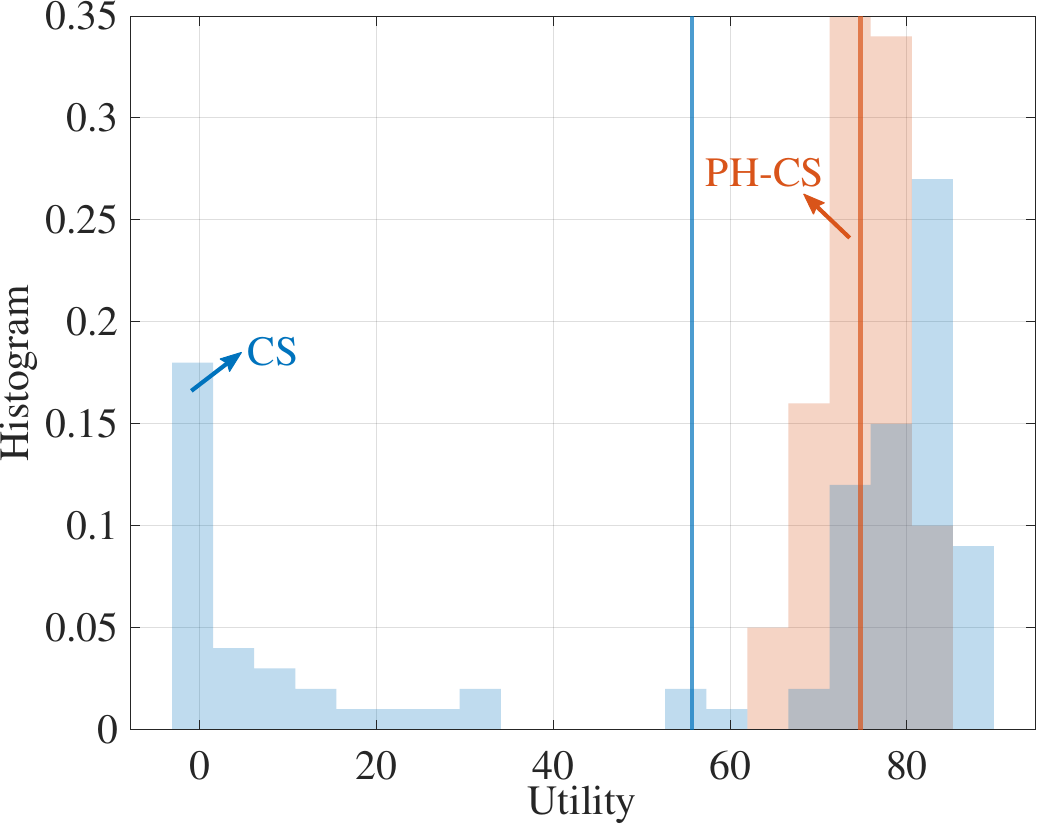}
    \includegraphics[width = 0.3\textwidth]{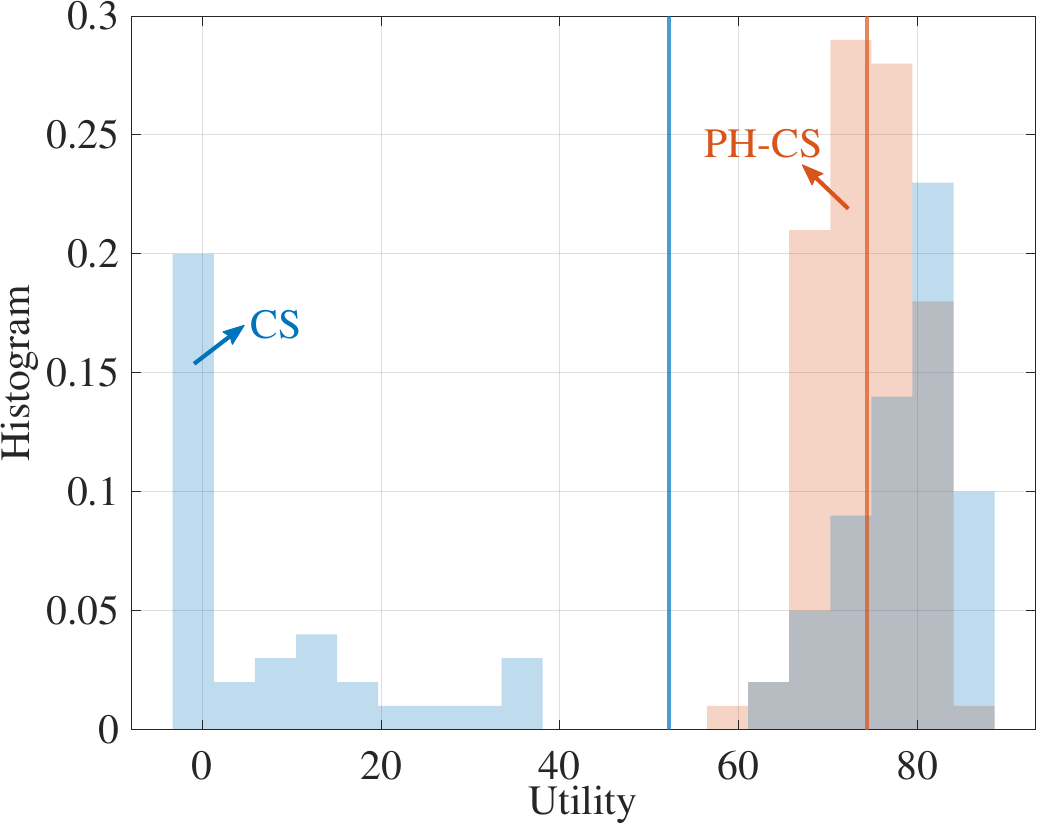}\\
    \includegraphics[width = 0.3\textwidth]{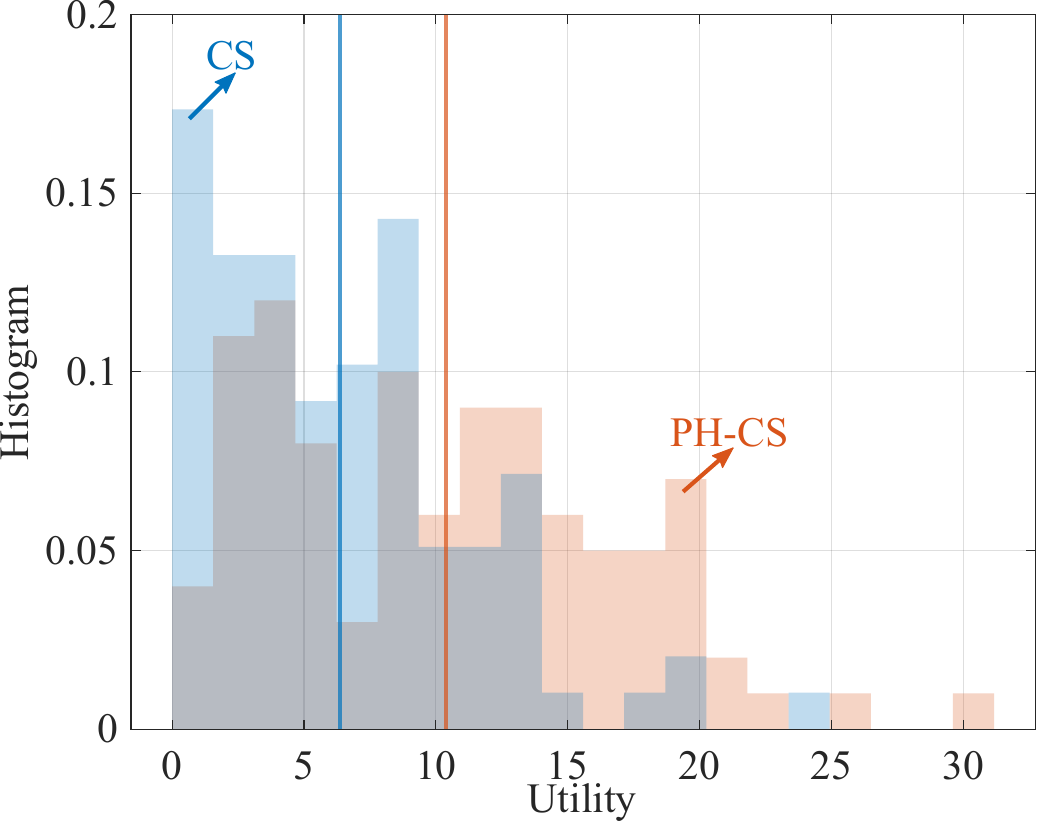}
    \includegraphics[width = 0.3\textwidth]{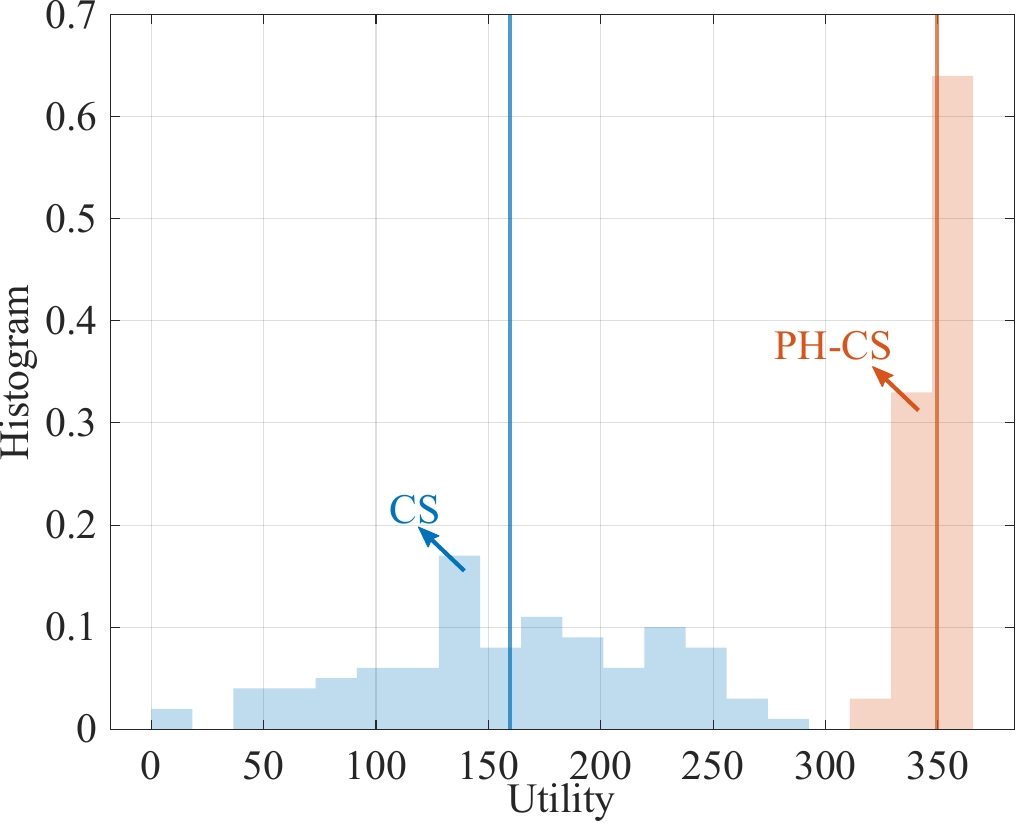}
    }
    \vspace{-1mm}
    \caption{{\color{blue}Histograms of the realized utility under the additive trade-off utility \eqref{eq_u_add} on the synthetic data with homoscedastic (top left) and heteroscedastic (top right) noise using gradient boosting, and on the Recruitment (bottom left) and Musk (bottom right) datasets. The blue and red vertical lines indicate the average utility of CS and PH-CS, respectively.}}
    \vspace{-3.5mm}
    \label{fig_add_apdx}
\end{figure*}
\begin{figure*}[t]
    \centering
    {
    \hspace{-4mm}\includegraphics[width = 0.3\textwidth]{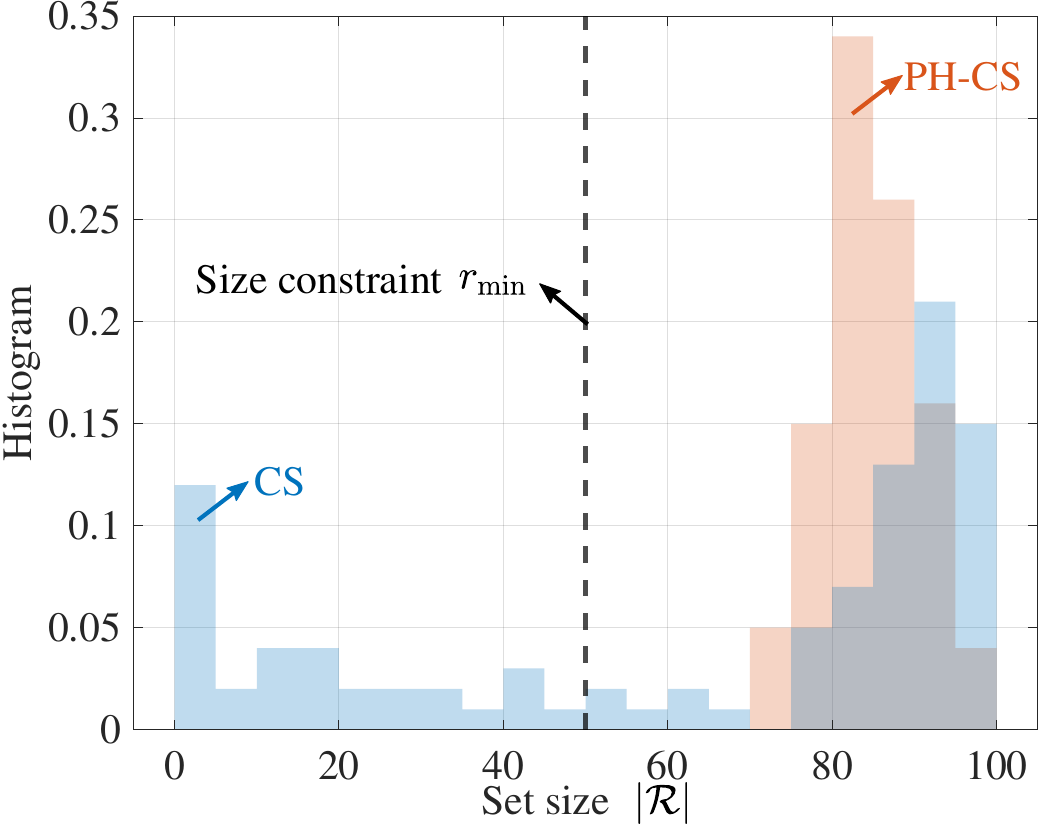}
    \includegraphics[width = 0.3\textwidth]{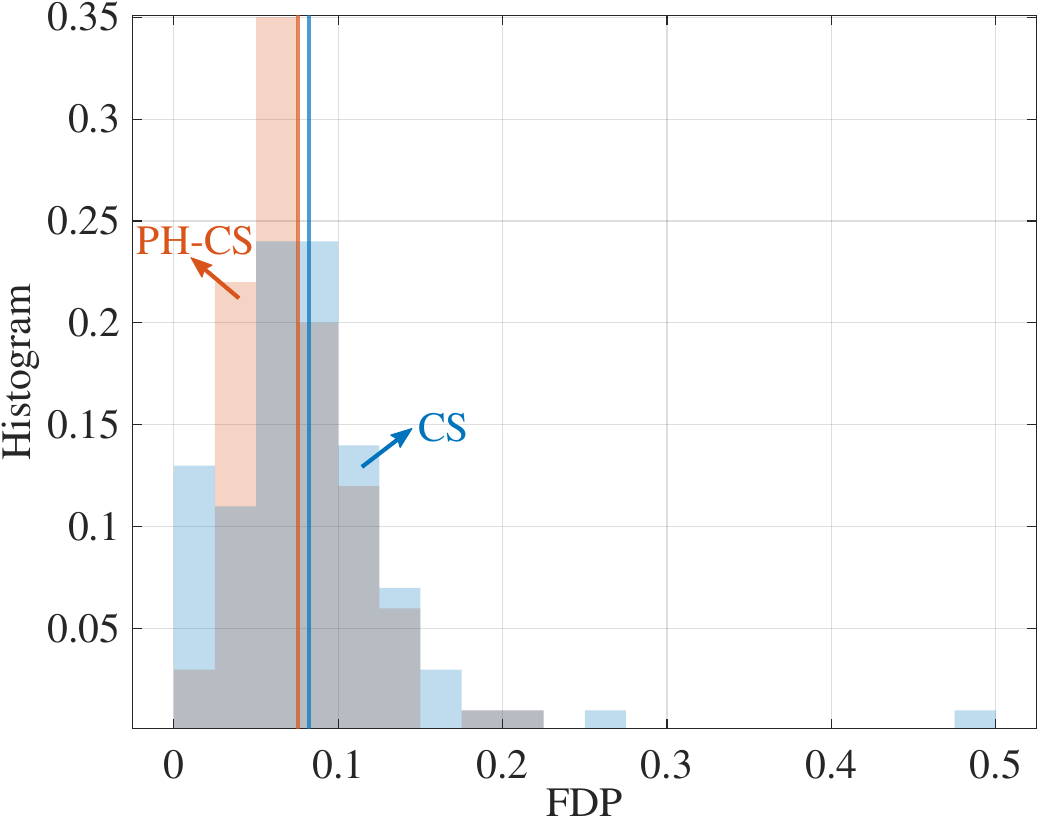}
    \includegraphics[width = 0.3\textwidth]{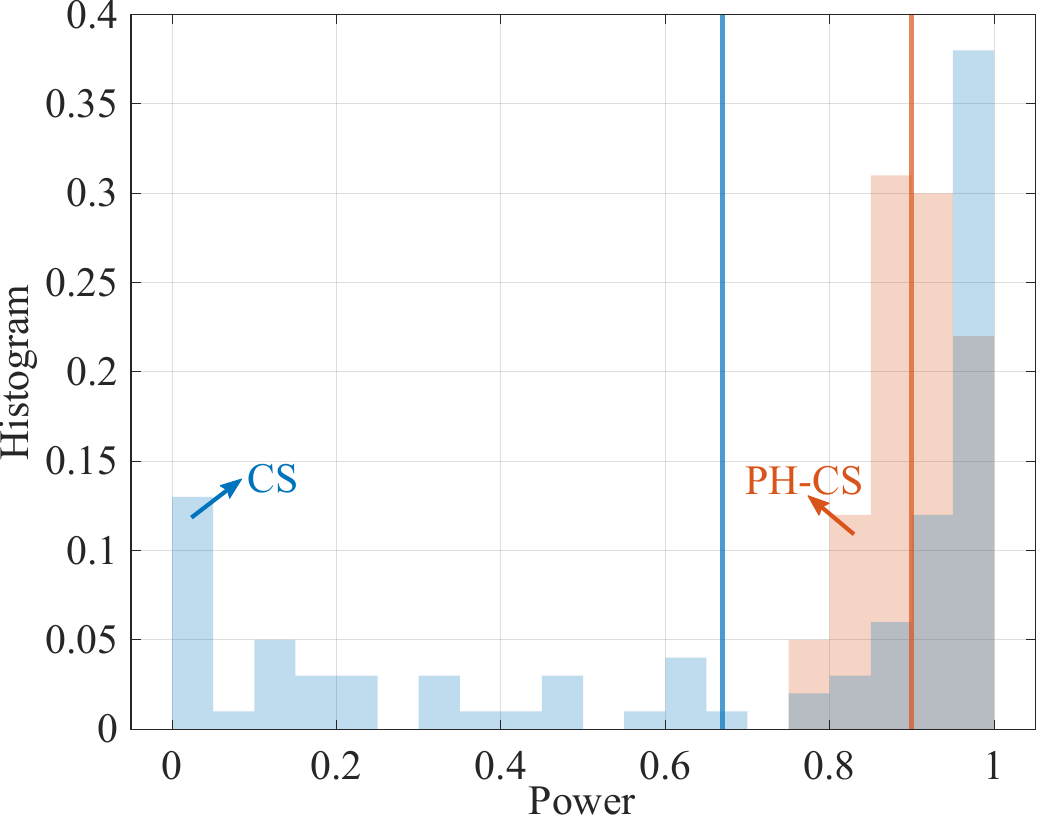}
    \includegraphics[width = 0.3\textwidth]{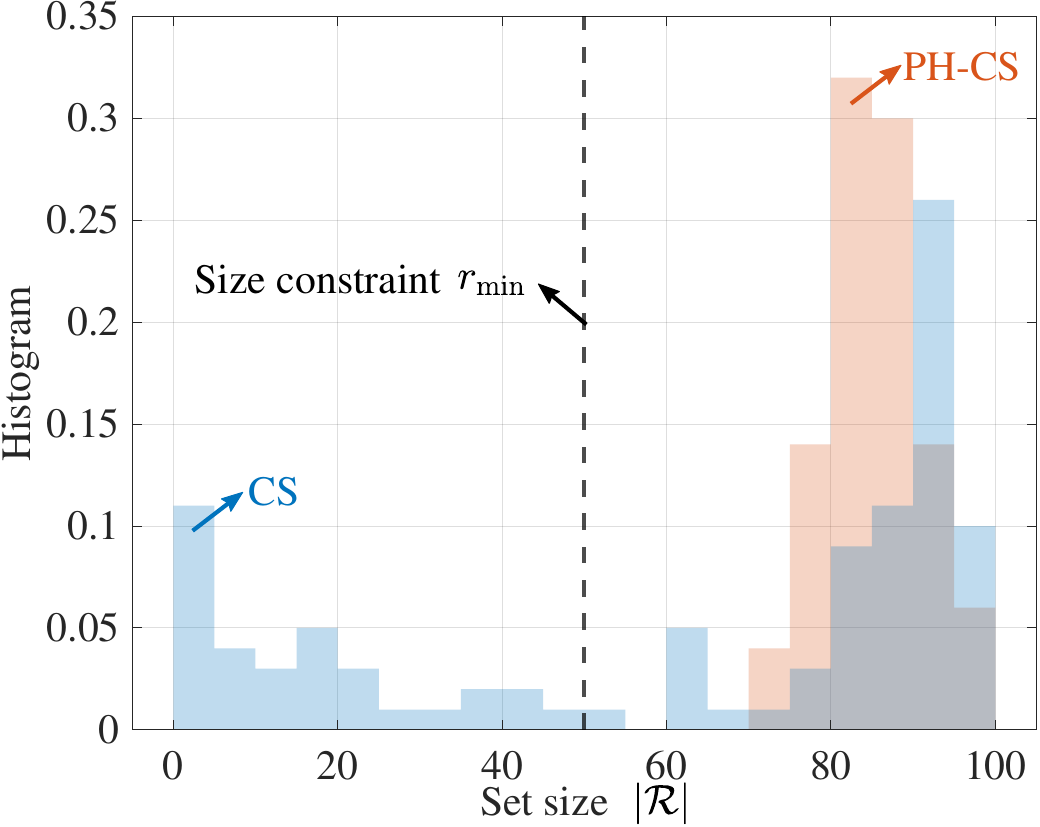}
    \includegraphics[width = 0.308\textwidth]{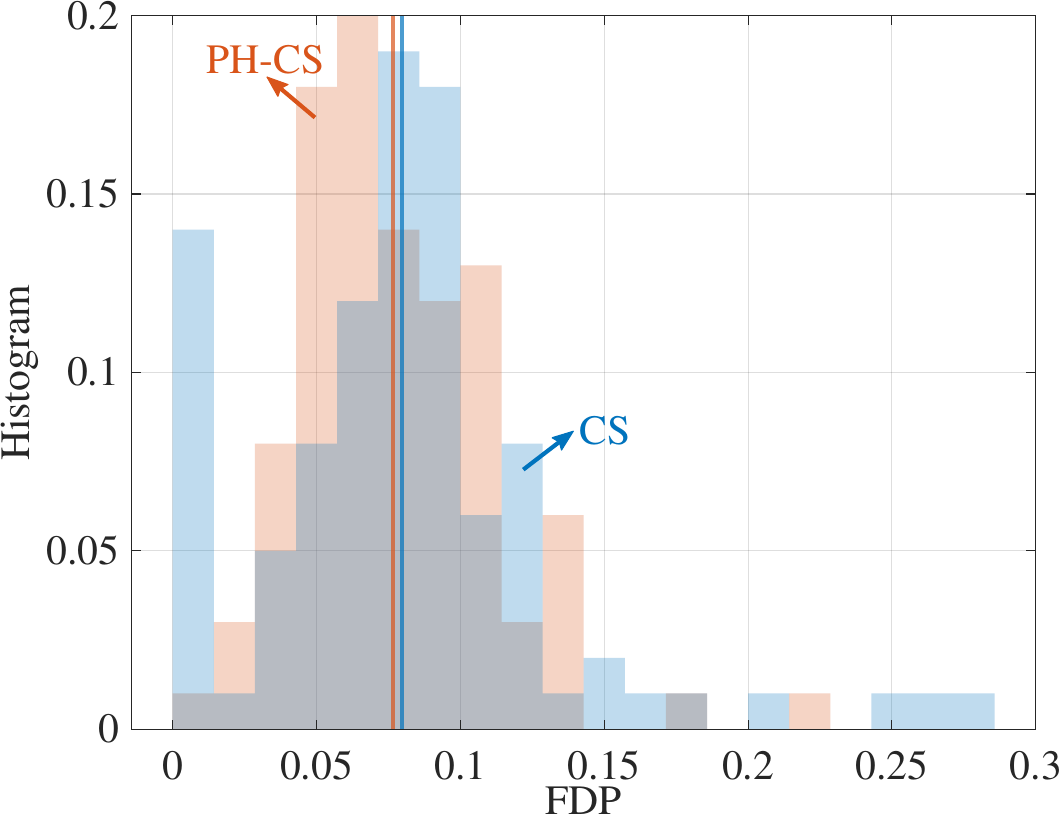}
    \includegraphics[width = 0.3\textwidth]
    {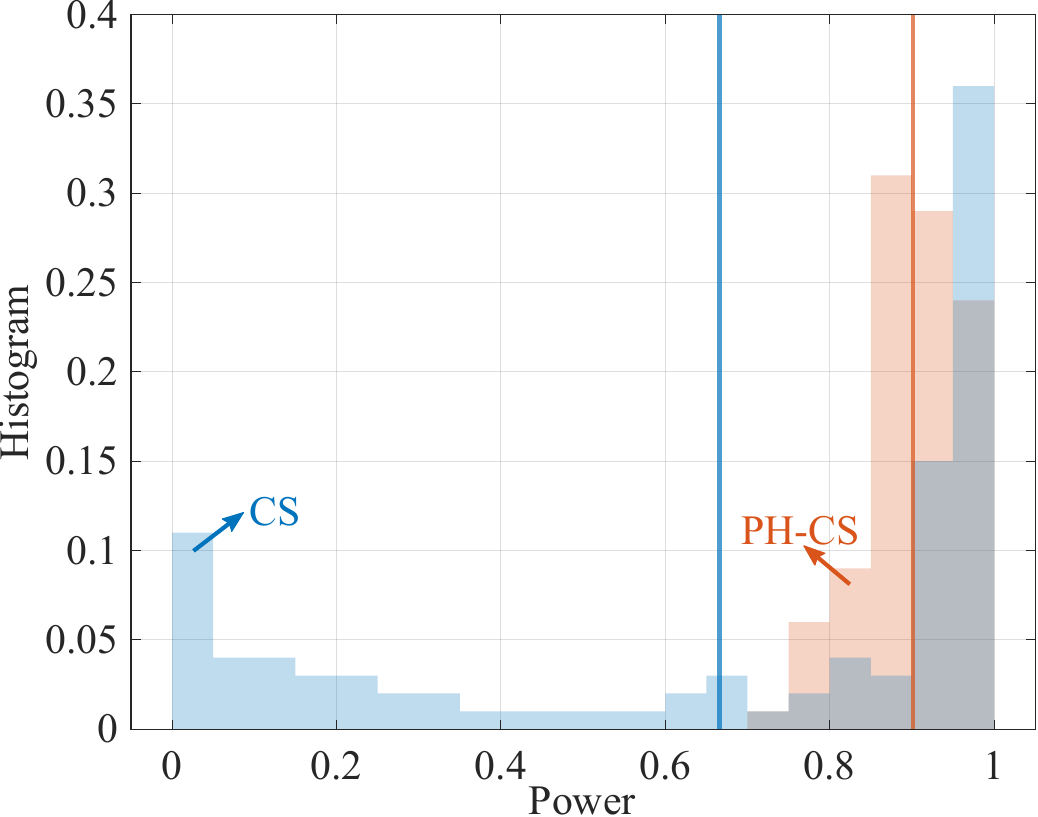}
    }
    \vspace{-1mm}
    \caption{{\color{blue}Histograms of the selected set size (left), realized FDP (middle), and power (right) under the constrained-size utility \eqref{eq_utility_size_first} on synthetic data with homoscedastic (top row) and heteroscedastic (bottom row) noise using random forest. In the middle and right panels, the blue and red vertical lines indicate the average values for CS and PH-CS, respectively.}}
    \label{fig_syn_Csize_rf_apdx}
    \vspace{-3.5mm}
\end{figure*}
\begin{figure*}[t]
    \centering
    {
	\includegraphics[width = 0.3\textwidth]{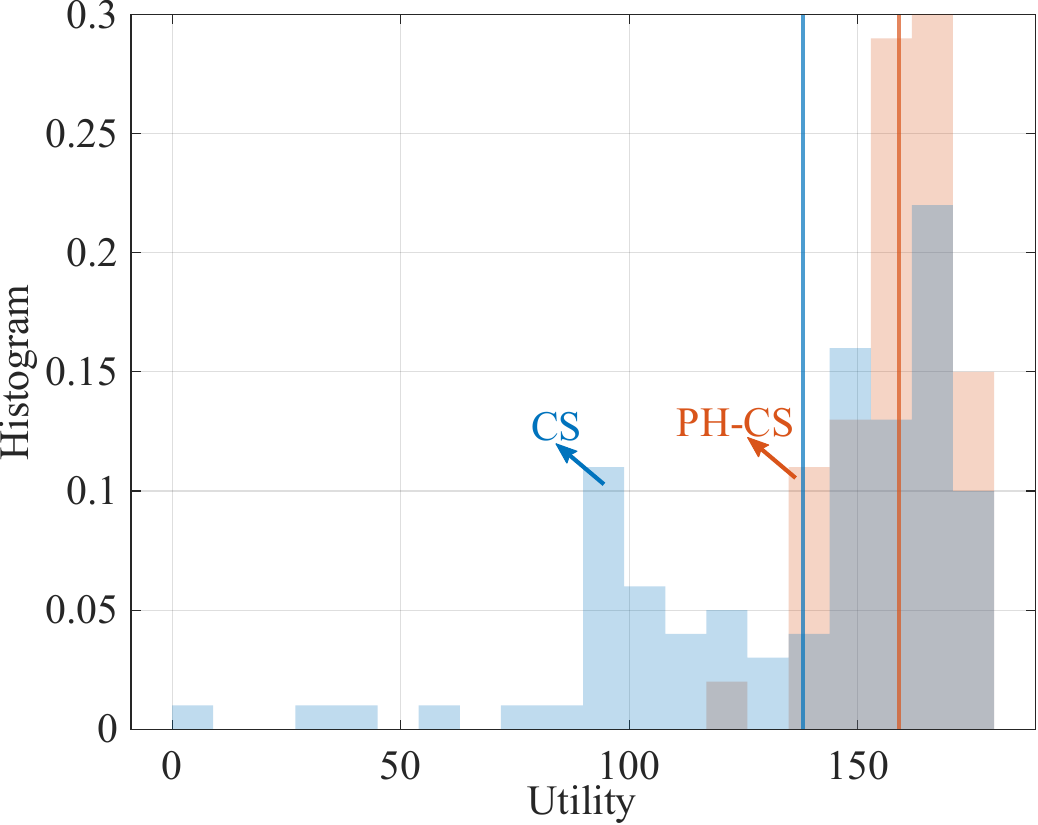}
    \includegraphics[width = 0.295\textwidth]{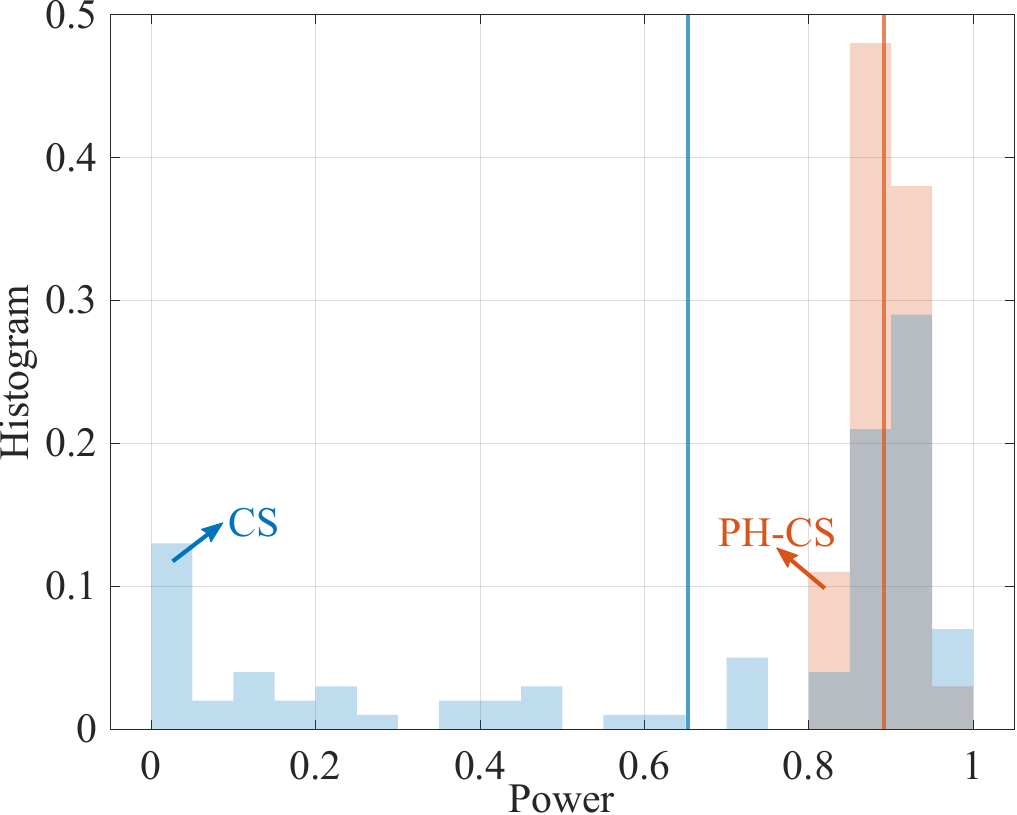}
    \includegraphics[width = 0.3\textwidth]{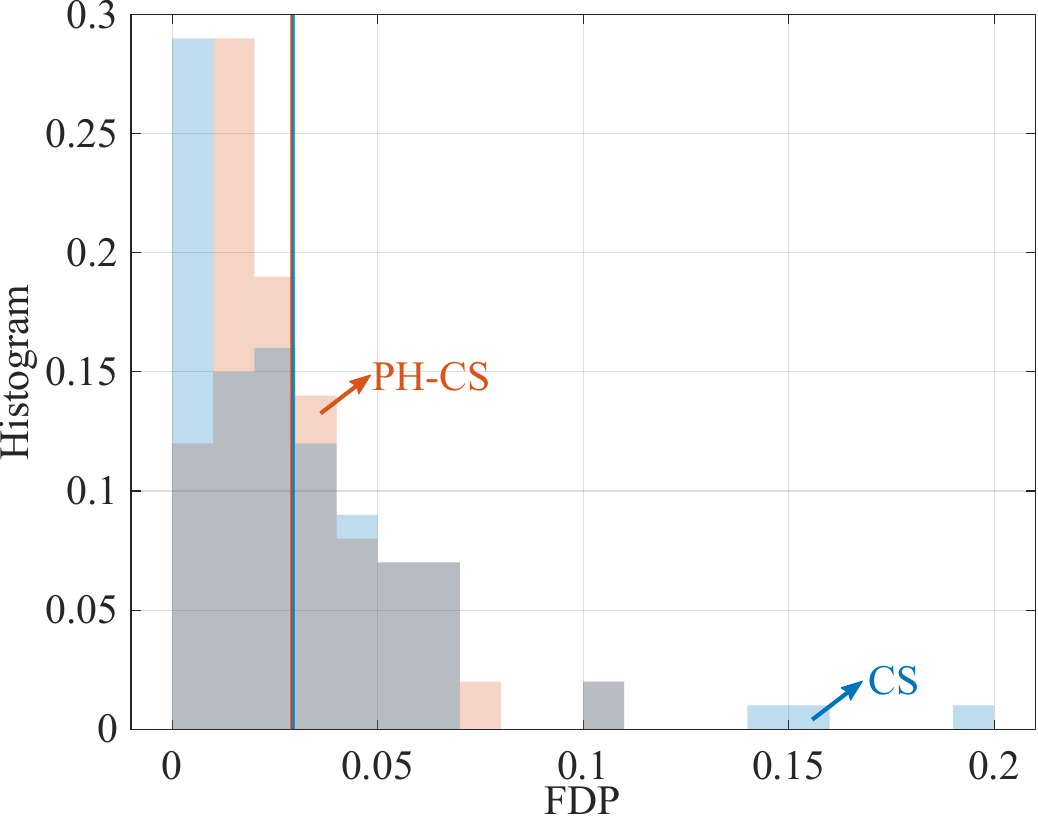}
    }
    \vspace{-1mm}
    \caption{{\color{blue}Histograms of the realized utility (left), power (middle), and FDP (right) under the additive utility \eqref{eq_u_add} with weighted set size \eqref{eq_w_size} on the synthetic data with homoscedastic noise using support vector regression. In each panel, the blue and red vertical lines indicate the average values for CS and PH-CS, respectively.}}
    \vspace{-3.5mm}
    \label{fig_syn_Paware_svm_apdx}
\end{figure*}

\subsection{Proof of Theorem \ref{thm_ph_rcs}}\label{apdx_proof_rcs}
The proof follows the same strategy as Step 3 of the proof of Theorem \ref{theo_posthoc} in Appendix \ref{apx_prof_posthoc}, but replaces the oracle argument in Steps 1--2 with the risk-adjusted e-variable condition \eqref{eq_risk_adj_eval}.

Fix $\alpha \in (0,1)$ and let $\mathcal{R} = \mathcal{R}(\alpha)$ denote the e-BH output with $E_j$ replaced by $E_j^{\textrm{g}}$. As in \eqref{eq_self_consist}, the self-consistency of e-BH gives
\begin{align}\label{eq_self_consist_rcs}
    j \in \mathcal{R} \Longleftrightarrow E_j^{\textrm{g}} \geq \frac{m}{\alpha |\mathcal{R}|}.
\end{align}
Applying \eqref{eq_self_consist_rcs} and the elementary bound $\mathds{1}\{E \geq t\} \leq E/t$ to the generalized FDP \eqref{eq_gfdp}, we obtain
\begin{align}\label{eq_gfdp_bound}
    \textrm{FDP}^{\textrm{g}}(\mathcal{R}, \mathcal{Y}^{\textrm{test}})
    &= \frac{\sum_{j=1}^{m} L_j\mathds{1}\{j\in\mathcal{R}\}}{\max\{1, |\mathcal{R}|\}}\nonumber\\
    &\leq \hspace{-1mm} \frac{1}{|\mathcal{R}|} \hspace{-1mm} \sum_{j=1}^m L_j  \frac{\alpha |\mathcal{R}|}{m} E_j^{\textrm{g}}
    = \frac{\alpha}{m}\sum_{j=1}^{m} L_j E_j^{\textrm{g}}.
\end{align}
Dividing both sides by $\alpha$ yields the level-uniform bound
\begin{align}\label{eq_level_uniform_rcs}
    \frac{\textrm{FDP}^{\textrm{g}}(\mathcal{R}, \mathcal{Y}^{\textrm{test}})}{\alpha} \leq \frac{1}{m}\sum_{j=1}^{m} L_j E_j^{\textrm{g}}.
\end{align}
Taking expectations and applying \eqref{eq_risk_adj_eval} gives
\begin{align}\label{eq_exp_bound_rcs}
    \mathbb{E}\left[\frac{\textrm{FDP}^{\textrm{g}}(\mathcal{R}, \mathcal{Y}^{\textrm{test}})}{\alpha}\right] \leq \frac{1}{m}\sum_{j=1}^{m} \mathbb{E}\big[L_j E_j^{\textrm{g}}\big] \leq 1.
\end{align}
Since the right-hand side of \eqref{eq_level_uniform_rcs} does not depend on $\alpha$, substituting $\alpha = \alpha^{\textrm{PH-RCS}}$ and $\mathcal{R} = \mathcal{R}^{\textrm{PH-RCS}}$ yields \eqref{eq_ph_rcs_guarantee}.

\subsection{Priority-Weighted Selection} \label{apdx_proof_weighted}
In many applications, test inputs are not equally important. For instance, in drug discovery certain molecular scaffolds may warrant closer examination, or in clinical settings high-risk patients may deserve priority \cite{wang2022false, ignatiadis2024asymptotic}. Assigning higher weights to more promising inputs also serves as a power-boosting mechanism by concentrating the testing budget where it is most needed \cite{wang2022false, ignatiadis2024asymptotic}. To incorporate such preferences while preserving the post-hoc reliability guarantee, we allow the user to assign a nonnegative weight $w_j$ to each test input $X_{n+j}$, subject to the budget constraint
\begin{align}\label{eq_weight_budget}
    \sum_{j=1}^{m} w_j \leq m.
\end{align}

The weights are applied to risk-adjusted e-variables $\{E_j^{\textrm{g}}\}_{j=1}^m$ (see \eqref{eq_risk_adj_eval}), yielding weighted e-variables \cite{wang2022false, ignatiadis2024asymptotic}
\begin{align}\label{eq_weighted_eval}
    \tilde{E}_j^{\textrm{g}} = w_j E_j^{\textrm{g}}, \quad j = 1, \ldots, m.
\end{align}
Algorithm \ref{algo_PHCS} is then applied with the e-variables $\{E_j^{\textrm{g}}\}_{j=1}^m$ replaced by the quantities $\{\tilde{E}_j^{\textrm{g}}\}_{j=1}^m$ in \eqref{eq_weighted_eval}, producing a weighted candidate path and a utility-selected operating point $(\mathcal{R}^{\textrm{PH-RCS}}, \alpha^{\textrm{PH-RCS}})$. The following result confirms that the post-hoc guarantee of Theorem \ref{thm_ph_rcs} is preserved under weighting.

\begin{corollary}[\textbf{Weighted post-hoc guarantee}]\label{cor_weighted}
    Let $\{w_j\}_{j=1}^m$ be nonnegative weights satisfying \eqref{eq_weight_budget}. Then, under the same assumption of Theorem \ref{thm_ph_rcs}, the output of Algorithm \ref{algo_PHCS} with $E_j^{\text{\rm{g}}}$ replaced by $\tilde{E}_j^{\text{\rm{g}}}$ in \eqref{eq_weighted_eval} satisfies the inequality
    \begin{equation}\label{eq_weighted_guarantee}
        \mathbb{E}\left[\frac{\text{\rm{FDP}}^{\text{\rm{g}}}(\mathcal{R}^{\text{\rm{PH-RCS}}}, \mathcal{Y}^{\text{\rm{test}}})}{\alpha^{\text{\rm{PH-RCS}}}}\right] \leq 1.
    \end{equation}
\end{corollary}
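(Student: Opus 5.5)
Since the weighted e-BH path is built from $\tilde{E}_j^{\textrm{g}}=w_jE_j^{\textrm{g}}$, the plan is to repeat the level-uniform argument used for Theorem \ref{thm_ph_rcs}. The only change is to carry the weights through the bound and then absorb them using the budget constraint \eqref{eq_weight_budget}. We should not expect each $\tilde{E}_j^{\textrm{g}}$ to satisfy \eqref{eq_risk_adj_eval} individually, since $w_j$ may exceed $1$. The budget only controls the average over $j$, and this is precisely what the level-uniform bound needs.

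Concretely, fix $\alpha\in(0,1)$ and let $\mathcal{R}=\mathcal{R}(\alpha)$ be the e-BH set computed from $\{\tilde{E}_j^{\textrm{g}}\}$. By the definition of the candidate sets \eqref{eq_Rk_def} and the estimate \eqref{eq_alpha_k_def}, self-consistency holds: $j\in\mathcal{R}$ implies $\tilde{E}_j^{\textrm{g}}\geq m/(\alpha|\mathcal{R}|)$. Applying $\mathds{1}\{E\geq t\}\leq E/t$ as in \eqref{eq_gfdp_bound} gives
\begin{align*}
\frac{\textrm{FDP}^{\textrm{g}}(\mathcal{R},\mathcal{Y}^{\textrm{test}})}{\alpha}\leq \frac{1}{m}\sum_{j=1}^m L_j w_j E_j^{\textrm{g}},
\end{align*}
a bound that does not depend on $\alpha$. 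We can therefore substitute the data-dependent $\alpha=\alpha^{\textrm{PH-RCS}}$ and the corresponding set $\mathcal{R}^{\textrm{PH-RCS}}$. For the endpoints, the case $\mathcal{R}=\varnothing$ gives zero FDP. The truncation $\hat\alpha=\min\{1,\cdot\}$ only enlarges $\alpha$, so self-consistency still implies the displayed inequality, because $\tilde{E}_{(k)}\geq m/(k\hat\alpha)$.

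Taking expectations, and provided the weights are fixed (or independent of the data entering $L_jE_j^{\textrm{g}}$), we get
\begin{align*}
\mathbb{E}\Big[\frac{\textrm{FDP}^{\textrm{g}}}{\alpha^{\textrm{PH-RCS}}}\Big]\leq\frac{1}{m}\sum_{j=1}^m w_j\,\mathbb{E}[L_jE_j^{\textrm{g}}]\leq\frac{1}{m}\sum_{j=1}^m w_j\leq 1,
\end{align*}
using \eqref{eq_risk_adj_eval}, $w_j\geq 0$, and \eqref{eq_weight_budget}.

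The main obstacle is the status of the weights. If they are chosen using $(\mathcal{D}^{\textrm{cal}},\mathcal{X}^{\textrm{test}})$, as with $\pi_j=\mu(X_{n+j})$, the factorization $\mathbb{E}[w_jL_jE_j^{\textrm{g}}]=w_j\mathbb{E}[L_jE_j^{\textrm{g}}]$ fails in general. In that case I would state the corollary for weights fixed in advance, or measurable with respect to data independent of the calibration and test sample, such as the training split. This matches how $\mu$ itself is treated in the rest of the paper, so the result holds as stated under that interpretation.
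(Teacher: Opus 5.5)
Your main line is the paper's proof. You carry $w_j$ through the level-uniform bound to get $\textrm{FDP}^{\textrm{g}}(\mathcal{R},\mathcal{Y}^{\textrm{test}})/\alpha\le\frac{1}{m}\sum_j w_jL_jE_j^{\textrm{g}}$, pull the weights out of the expectation, and close with \eqref{eq_risk_adj_eval} and \eqref{eq_weight_budget}. Your caveat about data-dependent $w_j$ is correct: the paper uses it silently when it writes $\mathbb{E}[w_jL_jE_j^{\textrm{g}}]=w_j\mathbb{E}[L_jE_j^{\textrm{g}}]$. The $\pi_j$ in $r(\mathcal{R})$ are a different object, enter only the utility, and are harmless.

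The step that fails is your truncation remark. Since $\min\{1,x\}\le x$, truncation shrinks the level rather than enlarging it. When $m/(k\tilde{E}^{\textrm{g}}_{(k)})>1$ you get $\hat\alpha(\mathcal{R}_k)=1$ and $\tilde{E}^{\textrm{g}}_{(k)}<m/k=m/(k\hat\alpha(\mathcal{R}_k))$. Self-consistency is then violated and $\mathds{1}\{E\ge t\}\le E/t$ cannot be applied.

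This cannot be repaired inside the argument, because the claimed inequality is false in that regime. Take $m=1$, $w_1=1$, $L_1\equiv 1$, and $E_1^{\textrm{g}}\in\{1/2,3/2\}$ with probability $1/2$ each, so $\mathbb{E}[L_1E_1^{\textrm{g}}]=1$. Take $U(r,\alpha)=r$, so $\mathcal{R}_1=\{1\}$ is always selected. Then $\alpha^{\textrm{PH-RCS}}=\min\{1,1/E_1^{\textrm{g}}\}\in\{1,2/3\}$, the ratio equals $1$ or $3/2$, and its mean is $5/4>1$. The paper's proof has the same unstated gap: it fixes $\alpha\in(0,1)$, uses self-consistency of the e-BH set, and then substitutes $(\alpha^{\textrm{PH-RCS}},\mathcal{R}^{\textrm{PH-RCS}})$, which is not a self-consistent pair when truncation binds.

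What your argument does prove is the guarantee for the untruncated estimate $\bar\alpha_k=m/(k\tilde{E}^{\textrm{g}}_{(k)})$. Since $|\mathcal{R}_k|\ge k$, every $j\in\mathcal{R}_k$ satisfies $\tilde{E}^{\textrm{g}}_j\ge\tilde{E}^{\textrm{g}}_{(k)}\ge m/(\bar\alpha_k|\mathcal{R}_k|)$. Hence the level-uniform bound holds for all $k$ simultaneously. It therefore holds for any data-dependent $k$, including one chosen by maximizing the utility at the truncated values. When truncation binds, $\bar\alpha_k>1\ge\textrm{FDP}^{\textrm{g}}$, so reporting $\bar\alpha_k$ loses nothing as a bound. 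State the result in this form, or restrict the path to indices with $k\tilde{E}^{\textrm{g}}_{(k)}\ge m$. Also add the convention $0/0=0$ for $\mathcal{R}_0$, whose reported level is $\alpha_0=0$.
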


\textit{Proof:}
% See Appendix \ref{apdx_proof_weighted}.
By the proof of Theorem \ref{thm_ph_rcs}, the level-uniform bound \eqref{eq_level_uniform_rcs} holds under \eqref{eq_risk_adj_eval}. Replacing $E_j^{\textrm{g}}$ by $\tilde{E}_j^{\textrm{g}} = w_j E_j^{\textrm{g}}$ in Algorithm \ref{algo_PHCS} replaces each summand $L_j E_j^{\textrm{g}}$ in \eqref{eq_level_uniform_rcs} by $w_j L_j E_j^{\textrm{g}}$, giving
\begin{align}\label{eq_weighted_level_uniform}
    \frac{\textrm{FDP}^{\textrm{g}}(\mathcal{R}, \mathcal{Y}^{\textrm{test}})}{\alpha} \leq \frac{1}{m}\sum_{j=1}^{m} w_j L_j E_j^{\textrm{g}}.
\end{align}
Taking expectations and applying the budget constraint \eqref{eq_weight_budget} yields
\begin{align}
    \mathbb{E}\left[\frac{\textrm{FDP}^{\textrm{g}}(\mathcal{R}, \mathcal{Y}^{\textrm{test}})}{\alpha}\right] \leq \frac{1}{m}\sum_{j=1}^{m} w_j \mathbb{E}[L_j E_j^{\textrm{g}}] \leq \frac{1}{m}\sum_{j=1}^{m} w_j \leq 1.
\end{align}
Since the right-hand side of \eqref{eq_weighted_level_uniform} is free of $\alpha$, substituting $\alpha = \alpha^{\textrm{PH-RCS}}$ and $\mathcal{R} = \mathcal{R}^{\textrm{PH-RCS}}$ yields \eqref{eq_weighted_guarantee}.

When the loss is selected as $L_j = \mathds{1}\{Y_{n+j} \leq c_j\}$, the guarantee \eqref{eq_weighted_guarantee} recovers Theorem \ref{theo_posthoc}.

{\color{blue}
\subsection{Details of the Baseline Methods}\label{apx_baselines}
% Beside conventional CS \cite{jin2023selection}, which targets a pre-specified FDR level $\alpha_{\textrm{max}}$, we compare PH-CS against three data-driven baselines that also address the post-hoc objective \eqref{eq_utility_obj}.
This appendix details the three baselines introduced in Sec.~\ref{sec_exp_baselines}.
All three baselines build on the nested family of selected sets obtained from the conformal p-variables \eqref{eq_conf_pval} via the BH procedure \cite{benjamini1995controlling, benjamini2001control}, i.e.,
\begin{align}\label{eq_bh_Rk}
    \mathcal{R}^{\textrm{BH}}_k \hspace{-0.5mm}=\hspace{-0.5mm} \{j\in\{1,\ldots,m\} \hspace{-0.5mm}: \hspace{-0.5mm} P_j\leq P_{(k)}\}, ~ k=0,1,\ldots,m,
\end{align}
where $P_{(1)}\leq\cdots\leq P_{(m)}$ are the sorted conformal p-variables \eqref{eq_conf_pval} and $\mathcal{R}^{\textrm{BH}}_0 = \varnothing$. This family is the BH counterpart of the e-BH path \eqref{eq_Rk_def} used by PH-CS, and spans the same range of set sizes. The three baselines report an FDP estimate for the selected set, and differ only in how this estimate is constructed.

\subsubsection{Na\"ive PH-CS}
This baseline applies conventional CS across a range of nominal levels and retains the most useful operating point. For each candidate set $\mathcal{R}^{\textrm{BH}}_k$, inverting the BH self-consistency condition \eqref{eq_BH_k} gives the nominal level associated with the set, which we adopt as its FDP estimate
\begin{align}\label{eq_naive_alpha}
    \hat{\alpha}^{\textrm{n}}(\mathcal{R}^{\textrm{BH}}_k) = \min\Big\{1, \frac{mP_{(k)}}{k}\Big\},
\end{align}
where, as in \eqref{eq_alpha_k_def}, the minimum ensures that the estimate does not exceed $1$. Na\"ive PH-CS then solves the problem
\begin{align}\label{eq_naive_choice}
    \mathcal{R}^{\textrm{n}} = \arg\max_{\mathcal{R}\in\{\mathcal{R}^{\textrm{BH}}_0, \ldots, \mathcal{R}^{\textrm{BH}}_m\}} U\big(r(\mathcal{R}), \hat{\alpha}^{\textrm{n}}(\mathcal{R})\big),
\end{align}
and reports the associated level $\alpha^{\textrm{n}} = \hat{\alpha}^{\textrm{n}}(\mathcal{R}^{\textrm{n}})$. Since this level is selected after inspecting the data, it is no longer fixed in advance, and the fixed-level guarantee of Theorem \ref{theo_conv_fdr} does not apply.

\subsubsection{Split PH-CS}
This baseline restores a fixed-level guarantee through data splitting. The calibration set is partitioned into two disjoint halves $\mathcal{D}^{\textrm{cal}} = \mathcal{D}_1\cup\mathcal{D}_2$, which are used in two steps: \textit{(i)} the Na\"ive PH-CS procedure \eqref{eq_naive_choice} is applied with calibration restricted to dataset $\mathcal{D}_1$, guaranteeing an FDR level $\alpha^{\textrm{s}}$; \textit{(ii)} conventional CS \eqref{eq_BH_set} is run on dataset $\mathcal{D}_2$ with $\alpha^{\textrm{s}}$ held fixed as the target level and as the reported FDP estimate. Because the level $\alpha^{\textrm{s}}$ is selected from $\mathcal{D}_1$ and is therefore independent of $\mathcal{D}_2$, Theorem \ref{theo_conv_fdr} applies conditionally on the selected level. This reliability is obtained at the cost of calibrating on only a fraction of the available data.

\subsubsection{PAC PH-CS}
Rather than reporting a level whose reliability holds only on average, this approach upper bounds, with high probability, the realized FDP \eqref{eq_FDP_def} of every candidate set at once \cite{song2026everywhere, gazin2024transductive}. Specifically, it provides an FDP estimate $\hat{\alpha}^{\textrm{PAC}}(\mathcal{R}^{\textrm{BH}}_k)$ that, by construction, upper bounds the realized FDP of all candidate sets $\mathcal{R}_k^{\textrm{BH}}$ in \eqref{eq_bh_Rk} simultaneously as per the PAC requirement
\begin{align}\label{eq_sb_guarantee}
    \mathbb{P}\Big(\textrm{FDP}(\mathcal{R}^{\textrm{BH}}_k, \mathcal{Y}^{\textrm{test}})\leq \hat{\alpha}^{\textrm{PAC}}(\mathcal{R}^{\textrm{BH}}_k),~\forall k\Big)\geq 1-\eta.
\end{align}
Because this bound holds for all $k$ at once, the operating point can be chosen after inspecting the data. Accordingly, this baseline selects the set
\begin{align}\label{eq_sb_choice}
    \mathcal{R}^{\textrm{PAC}} = \arg\max_{\mathcal{R}\in\{\mathcal{R}^{\textrm{BH}}_0, \ldots, \mathcal{R}^{\textrm{BH}}_m\}} U\big(r(\mathcal{R}), \hat{\alpha}^{\textrm{PAC}}(\mathcal{R})\big),
\end{align}
and reports the associated bound $\alpha^{\textrm{PAC}} = \hat{\alpha}^{\textrm{PAC}}(\mathcal{R}^{\textrm{PAC}})$. We emphasize that the guarantee \eqref{eq_sb_guarantee} is of a different nature from the post-hoc reliability \eqref{eq_post_FDP_first} targeted by PH-CS. It controls the realized FDP with high probability rather than in expectation, requiring the specification of the additional confidence level $\eta$. Furthermore, being valid simultaneously over all sets, the estimate $\hat{\alpha}^{\textrm{PAC}}(\mathcal{R}^{\textrm{PAC}})$ may be loose at the single selected operating point.

The estimate $\hat{\alpha}^{\textrm{PAC}}$ is constructed as follows, following \cite{song2026everywhere, gazin2024transductive}.
The realized FDP \eqref{eq_FDP_def} of the set $\mathcal{R}^{\textrm{BH}}_k$ can be written as
\begin{align}\label{eq_sb_numerator}
    \textrm{FDP}(\mathcal{R}^{\textrm{BH}}_k, \mathcal{Y}^{\textrm{test}}) = \frac{m\hat{F}^0_m(P_{(k)})}{\max\{1, |\mathcal{R}^{\textrm{BH}}_k|\}},
\end{align}
where $\hat{F}^0_m(t)=(\sum_{j=1}^m \mathds{1}\{Y_{n+j}\leq c_j\} \mathds{1}\{P_j\leq t\})/m$ is the empirical distribution of the null p-variables. Only the numerator is unknown, as it depends on the unobserved test labels.

The distribution of $\hat{F}^0_m$, however, is known. A conformal p-variable \eqref{eq_conf_pval} enters only through the ranks of the conformity scores, and under exchangeability all orderings of the scores are equally likely. The distribution of $\hat{F}^0_m$ is therefore the same as if the scores were i.i.d. uniform variables, independently of the data \cite{gazin2024transductive}. Repeatedly sampling the p-variables from this distribution shows how far $\hat{F}^0_m$ can rise, and an envelope $G:[0,1]\to[0,1]$ exceeded with probability at most $\eta$ then bounds it with probability at least $1-\eta$ \cite{song2026everywhere},
\begin{align}\label{eq_sb_envelope}
    \mathbb{P}\big(\hat{F}^0_m(t)\leq G(t),~\forall t\in[0,1]\big)\geq 1-\eta.
\end{align}
The shape of $G$ depends on the deviation statistic used, ranging from the linear Dvoretzky-Kiefer-Wolfowitz (DKW) envelope \cite{gazin2024transductive} to the tighter, variance-adaptive higher-criticism (HC) envelope \cite{song2026everywhere}.

Substituting the bound \eqref{eq_sb_envelope} into \eqref{eq_sb_numerator} replaces the unknown numerator $m\hat{F}^0_m(P_{(k)})$ by $mG(P_{(k)})$, giving the FDP estimate
\begin{align}\label{eq_sb_alpha}
    \hat{\alpha}^{\textrm{PAC}}(\mathcal{R}^{\textrm{BH}}_k) = \frac{mG(P_{(k)})}{\max\{1, |\mathcal{R}^{\textrm{BH}}_k|\}}.
\end{align}
Since \eqref{eq_sb_envelope} holds for all $t\in[0,1]$ simultaneously, evaluating it at $t=P_{(k)}$ for every $k$ and substituting into \eqref{eq_sb_numerator} yields the guarantee \eqref{eq_sb_guarantee}
% stated in Sec. \ref{sec_exp_baselines}
, where the same probability $1-\eta$ is inherited from \eqref{eq_sb_envelope}.
}

{\color{blue}
\subsection{Additional Experimental Results}\label{apdx_add_exp}
This appendix collects additional experimental results supplementing Sec.~\ref{sec_exp}, under the constrained-size utility \eqref{eq_utility_size_first} and the additive trade-off utility \eqref{eq_u_add}, and across alternative regressors.}

\subsubsection{Constrained-Size Utility}\label{apdx_Csize}
We provide additional results under the constrained-size utility \eqref{eq_utility_size_first}, supplementing Sec. \ref{sec_exp_csize} with the heteroscedastic synthetic data, the Recruitment dataset, and the Shuttle dataset. As shown in Fig. \ref{fig_Csize_apdx}, the same conclusion holds across all three settings: PH-CS always satisfies the minimum-size requirement, whereas CS falls below it in a non-negligible fraction of runs{\color{blue}, while PH-CS also attains a higher power at a comparable FDP}.

\subsubsection{Additive Trade-off Utility}\label{apdx_add}
We consider the additive trade-off utility in \eqref{eq_u_add}, instantiated as $U(r, \alpha) = r - \lambda \alpha$ with $r=|\mathcal{R}|$. PH-CS selects, for each test batch, the candidate set that maximizes this utility, and CS is matched to its average declared level as in Sec.~\ref{sec_exp_csize}. To illustrate the effect of the trade-off parameter $\lambda$, Fig.~\ref{fig_real_utility} reports the realized utility, selected set size, and FDP on the Shuttle dataset for $\lambda = 12800$ and $\lambda = 14000$.

PH-CS achieves a larger average realized utility than CS, although not necessarily for every individual realization. This is expected because PH-CS maximizes the utility in \eqref{eq_set_rule} using the declared level $\alpha^{\textrm{PH-CS}}$, whereas the plotted utility in \eqref{eq_utility_obj} is evaluated using the true realized FDP, so finite-sample deviations between the two can occasionally favor CS. As $\lambda$ increases from the top row to the bottom row, the utility places more emphasis on reducing the FDP. Accordingly, PH-CS becomes more conservative, yielding a smaller selected set and a smaller realized FDP, while the change for CS is induced only through the matched target level. The realized utility remains comparable across the two choices of $\lambda$, confirming that $\lambda$ provides a direct way to adjust the size-reliability trade-off within the same utility framework.

We further report results on additional datasets. For the synthetic data, we use $U(r,\alpha)=r-\lambda\alpha$ with $\lambda=500$. For the Recruitment dataset, we use $U(r,\alpha)=\log r-\lambda\log(1/(1-\alpha))$ with $\lambda=15$, and for the Musk dataset, we use $U(r,\alpha)=r-\lambda\alpha$ with $\lambda=1690$. {\color{blue}On both real datasets we set the score hyperparameter to $\gamma=5$ rather than the value used in Sec.~\ref{sec_exp}. As shown in Fig.~\ref{fig_add_apdx}, PH-CS achieves a larger average realized utility than CS across all four settings, consistent with the Shuttle results above, so its advantage carries over to a different $\gamma$.}

\subsubsection{Alternative Regressors} \label{apdx_more_exp_syn}
We present additional synthetic-data results obtained using random forest and support vector regression, supplementing the results in Sec. \ref{sec_exp} based on gradient boosting.
Under the constrained-size utility, Fig. \ref{fig_syn_Csize_rf_apdx} shows that the same qualitative conclusion holds for random forest: PH-CS consistently satisfies the minimum-size requirement in each realization, whereas CS violates this constraint in a non-negligible fraction of runs. {\color{blue}PH-CS also attains a higher power while keeping the FDP comparable.}
% Under the additive trade-off utility, Fig. \ref{fig_syn_Paware_svm_apdx} confirms that PH-CS continues to achieve larger average realized utility than CS when using support vector regression.
{\color{blue}Under a power-aware weighted utility, where each test input is weighted by its estimated true-discovery probability $\pi_j$ via \eqref{eq_pi_gaussian}, Fig.~\ref{fig_syn_Paware_svm_apdx} confirms that PH-CS achieves higher average utility and higher power than CS at a comparable FDP when using support vector regression.}
Overall, these results indicate that the empirical behavior of PH-CS is robust to the choice of regressor.

\begin{figure}[t]
    \centering
    {
	\includegraphics[width = 0.3\textwidth]{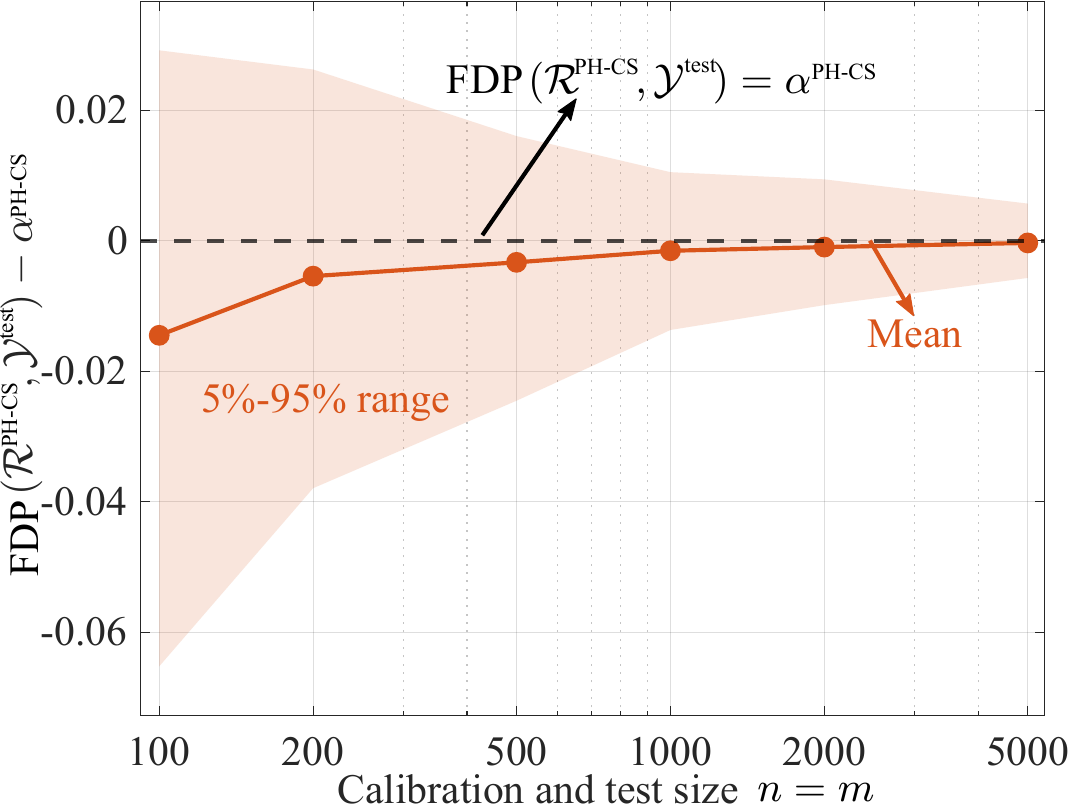}
    }
    \vspace{-1mm}
    \caption{{\color{blue}Gap $\textrm{FDP}(\mathcal{R}^{\textrm{PH-CS}}, \mathcal{Y}^{\textrm{test}}) - \alpha^{\textrm{PH-CS}}$ versus the calibration and test size $n=m$, on the synthetic data with homoscedastic noise under the constrained-size utility \eqref{eq_utility_size_first}. The solid line is the mean over $200$ trials and the shaded band is the $5\%$--$95\%$ range.}}
    \vspace{-3.5mm}
    \label{fig_asymp}
\end{figure}
\begin{figure}[t]
    \centering
    {
	\includegraphics[width = 0.3\textwidth]{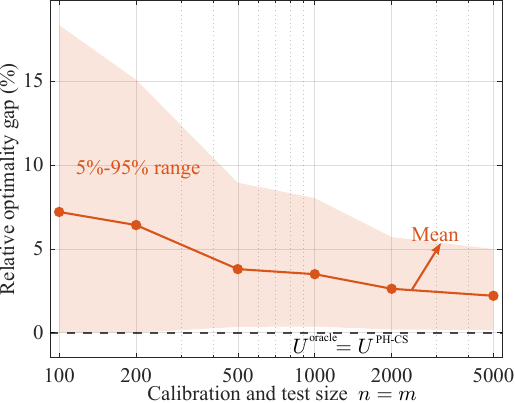}
    }
    \vspace{-1mm}
    \caption{{\color{blue}Relative optimality gap $(U^{\textrm{oracle}}-U^{\textrm{PH-CS}})/U^{\textrm{oracle}}$ between PH-CS and an oracle that selects on the same candidate path using the true FDP, versus the calibration and test size $n=m$, on the synthetic data with homoscedastic noise under the additive utility $U=r-\lambda\alpha$ with $\lambda=5m$. The solid line is the mean over $200$ trials and the shaded band is the $5\%$--$95\%$ range.}}
    \vspace{-3.5mm}
    \label{fig_opt}
\end{figure}

{\color{blue}
\subsection{Empirical Validation of the Asymptotic Behavior}\label{apdx_asymp_behavior}
This appendix illustrates the large-sample behavior of PH-CS on the synthetic data of Sec.~\ref{sec_exp} with homoscedastic noise. In both experiments below, the calibration and test sets grow jointly, with $n=m$ ranging over $\{100, 200, 500, 1000, 2000, 5000\}$ and $200$ trials per size, so that the joint growth matches the asymptotic regime being illustrated.

\subsubsection{Asymptotic Reliability}\label{apdx_asymp}
This experiment illustrates Proposition~\ref{prop_asymp} under the constrained-size utility \eqref{eq_utility_size_first}, with the size floor set to $r_{\min}=0.5m$ to keep the operating point comparable across sizes. We track the gap $\textrm{FDP}(\mathcal{R}^{\textrm{PH-CS}}, \mathcal{Y}^{\textrm{test}}) - \alpha^{\textrm{PH-CS}}$ between the realized FDP and the reported level. As shown in Fig.~\ref{fig_asymp}, the $5\%$--$95\%$ range narrows toward a point as $n=m$ grows, so the scatter of the realized FDP around $\alpha^{\textrm{PH-CS}}$ in Fig.~\ref{fig_FDP_estimate} is a finite-sample effect. Its upper edge, positive at small samples, falls below any fixed margin as the sample grows, in agreement with the asymptotic bound of Proposition~\ref{prop_asymp}. The range concentrates around a small negative mean gap, showing that $\alpha^{\textrm{PH-CS}}$ is on average a slightly conservative estimate of the realized FDP, consistent with the average guarantee of Theorem~\ref{theo_posthoc}.

\subsubsection{Optimality Gap}\label{apdx_opt}
This experiment compares PH-CS with an oracle that searches the same score-ordered candidate path \eqref{eq_Rk_def} but selects the operating point using the true FDP rather than the estimate $\alpha^{\textrm{PH-CS}}$, with both utilities evaluated at the true FDP. We use the additive utility $U(r,\alpha)=r-\lambda\alpha$ with $\lambda=5m$, so that the weight scales with the problem size. Fig.~\ref{fig_opt} reports the relative gap $(U^{\textrm{oracle}}-U^{\textrm{PH-CS}})/U^{\textrm{oracle}}$. As $n=m$ grows, both the mean gap and its $5\%$--$95\%$ range shrink, indicating that the operating point selected by PH-CS through $\alpha^{\textrm{PH-CS}}$ approaches the one the oracle selects with the true FDP. On this well-separated data, where the score keeps the estimate close to the realized FDP, the utility attained by PH-CS is therefore close to the best achievable on the candidate path.
}

\else
\makeatletter
\newcommand{\fakeref}[2]{\phantomsection\def\@currentlabel{#2}\label{#1}}
\makeatother
\fakeref{apdx_Taylor}{A}
\fakeref{apx_prof_posthoc}{B}
\fakeref{apx_prop_asymp}{C}
\fakeref{apdx_proof_rcs}{D}
\fakeref{apdx_proof_weighted}{E}
\fakeref{apx_baselines}{F}
\fakeref{apdx_add_exp}{G}
% \fakeref{apdx_Csize}{G.1}
% \fakeref{apdx_add}{G.2}
% \fakeref{apdx_more_exp_syn}{G.3}
\fakeref{apdx_asymp_behavior}{H}
% \fakeref{apdx_asymp}{H.1}
% \fakeref{apdx_opt}{H.2}

\fi

\end{document}